\documentclass[10pt]{article}

\usepackage{microtype}
\usepackage{graphicx}
\usepackage{subfigure}
\usepackage{booktabs} 
\usepackage{bbm}
\usepackage{comment}


\PassOptionsToPackage{table}{xcolor}
\usepackage[table]{xcolor}

\usepackage{natbib}
\bibliographystyle{apalike}

\bibpunct{(}{)}{;}{a}{,}{,}

\usepackage{url}            
\usepackage[backref=page]{hyperref}       

\hypersetup{
    bookmarks=true,         
    unicode=false,          
    pdftoolbar=true,        
    pdfmenubar=true,        
    pdffitwindow=false,     
    pdfstartview={FitH},    
    pdftitle={Sample-efficient Learning of Concepts},    
    pdfauthor={Hidde Fokkema},
    pdfsubject={Machine Learning},   
    pdfcreator={pdflatex},   
    pdfproducer={Producer}, 
    pdfkeywords={Interpretability} {Concepts} {Causal Representation Learning}
    pdfnewwindow=true,      
    colorlinks=true,        
    linkcolor=black,       
}

\usepackage{thm-restate}

\usepackage{macros}

\usepackage{arxiv}

\usepackage[textsize=tiny]{todonotes}

\usepackage{tikz-cd}
\usetikzlibrary{fit}
\usetikzlibrary{calc}

\usetikzlibrary{patterns.meta}

\tikzset{
    noise/.style={
        circle,
        draw=black, 
        fill=black!30, 
        very thick, 
        inner sep=0pt,
        minimum size=8mm},
    feat/.style={
        circle,
        draw=black, 
        fill=orange, 
        very thick, 
        inner sep=0pt,
        minimum size=8mm},
    rep/.style={
        circle,
        draw=black, 
        fill=blue!50, 
        very thick, 
        inner sep=0pt,
        minimum size=8mm},
    bicolor/.style args={#1 and #2}{
        path picture={
            \tikzset{rounded corners=0} 
            \path [fill=#1,
                postaction={
                    pattern={
                        Lines[angle=45,distance={4pt},line width=2pt]
                    },
                    pattern color=blue!40
                },
                postaction={draw, semithick}
                ] (path picture bounding box.west)
                rectangle (path picture bounding box.north east);
            \draw [black, very thick] (path picture bounding box.west)
                rectangle (path picture bounding box.north east);
            \fill [#2] (path picture bounding box.west)
                rectangle (path picture bounding box.south east);
            \draw [black, very thick] (path picture bounding box.west)
                rectangle (path picture bounding box.south east);}}}

\tikzstyle{plate} = [draw, rectangle, rounded corners, fit=#1, fill=black!5]
\tikzstyle{wrap} = [inner sep=0pt, fit=#1]
\tikzstyle{gate} = [draw, rectangle, dashed, fit=#1]

\tikzstyle{caption} = [font=\footnotesize, node distance=0] %
\tikzstyle{plate caption} = [caption, node distance=0, inner sep=0pt,
below left=5pt and 0pt of #1.south east] %
\tikzstyle{factor caption} = [caption] %
\tikzstyle{every label} += [caption] %




\DeclarePairedDelimiterX{\infdivx}[2]{(}{)}{%
  #1\;\delimsize\|\;#2%
}

\newcommand{\infdivKL}{\text{KL}\infdivx}




\newcommand{\pmin}{p_{\textup{\textrm{min}}}}
\newcommand{\pmax}{p_{\textup{\textrm{max}}}}

\newcommand{\Mvec}{\textbf{\textup{\textrm{M}}}}
\newcommand{\Hvec}{\textbf{\textup{\textrm{C}}}}

\makeatletter
\def\moverlay{\mathpalette\mov@rlay}
\def\mov@rlay#1#2{\leavevmode\vtop{%
   \baselineskip\z@skip \lineskiplimit-\maxdimen
   \ialign{\hfil$\m@th#1##$\hfil\cr#2\crcr}}}
\newcommand{\charfusion}[3][\mathord]{
    #1{\ifx#1\mathop\vphantom{#2}\fi
        \mathpalette\mov@rlay{#2\cr#3}
      }
    \ifx#1\mathop\expandafter\displaylimits\fi}
\makeatother

\newcommand{\reals}{\mathbb{R}}


\title{Sample-efficient Learning of Concepts with Theoretical Guarantees:\linebreak
from Data to Concepts without Interventions}
\shorttitle{Sample-efficient Learning of Concepts}

\author{%
    \name{Hidde Fokkema} \email{h.j.fokkema@uva.nl}\\
    \and
    \name{Tim van Erven\thanks{Equal contribution}} \email{tim@timvanerven.nl}\\
    \addr{Korteweg-de Vries Institute for Mathematics, University of Amsterdam}
    \AND
    \name{Sara Magliacane\footnotemark[1]} \email{s.magliacane@uva.nl} \\
    \addr{Informatics Institute, University of Amsterdam}
}

\begin{document}

\maketitle

\begin{abstract} 
    Machine learning is a vital part of many real-world systems,
    but several concerns remain about the lack of interpretability,
    explainability and robustness of black-box AI systems. Concept Bottleneck
    Models (CBM) address some of these challenges by learning interpretable
    \emph{concepts} from high-dimensional data, e.g.\ images, which are used to
    predict labels. An important issue in CBMs are spurious correlation between
    concepts, which effectively lead to learning ``wrong'' concepts. Current
    mitigating strategies have strong assumptions, e.g., they assume that the
    concepts are statistically independent of each other, or require
    substantial interaction in terms of both interventions and labels provided
    by annotators. In this paper, we describe a framework that provides
    theoretical guarantees on the correctness of the learned concepts and on
    the number of required labels, without requiring any interventions. Our
    framework leverages causal representation learning (CRL) methods to learn
    latent causal variables from high-dimensional observations in a
    unsupervised way, and then learns to align these variables with
    interpretable concepts with few concept labels. We propose a linear and a
    non-parametric estimator for this mapping, providing a finite-sample high
    probability result in the linear case and an asymptotic consistency result
    for the non-parametric estimator. We evaluate our framework in synthetic
    and image benchmarks, showing that the learned concepts have less
    impurities and are often more accurate than other CBMs, even in settings
    with strong correlations between concepts. 
\end{abstract}
\begin{keywords}
Interpretability; Concepts; Causal Representation Learning.
\end{keywords}

\section{Introduction}

Machine learning is a vital part of many real-world systems, but concerns
remain about the lack of interpretability, robustness and safety of current
systems \citep{BengioEtAl2025}. These issues might be exacerbated by the lack
of guarantees in explaining the behavior of AI systems in terms of
interpretable, high-level \emph{concepts}. 
The field of interpretable machine learning and explainable AI has developed
many techniques to interpret models and explain their predictions
\citep{molnar2022}, either by extracting known concepts from the internals of
black-box models \citep{kim2018interpretability, goyal2019explaining,
graziani2023concept, lovering2022unit}, or by building the explicit use of
concepts into the internals of these systems, e.g.\ as in \emph{concept
bottleneck models} (CBM) \citep{koh2020conceptbottleneck,
ismail2023concept,marconato2022glance, zarlenga2022concept}. 

\begin{figure}[t]
    \centering
    \resizebox{0.98\textwidth}{!}{
    \begin{tikzpicture}[
        neuron/.style={
            circle, 
            draw=black!60, 
            very thick, 
            fill=white, 
            inner sep=0pt,
            minimum size=4pt
        },
        output/.style={
            circle, 
            draw=black, 
            very thick, 
            fill=white, 
            inner sep=0pt,
            minimum size=20pt
        },
        concept/.style={
            circle,
            draw=black,
            very thick,
            inner sep=0pt,
            minimum size=20pt
        },
        c_text/.style={
            rectangle, 
            rounded corners, 
            draw=black, 
            fill=pearTwo!30,
            thick, 
            anchor=west
        },
        >=stealth
    ]

    \draw[fill=red!50, rounded corners, dashed, very thick, opacity=0.6] 
        (-2.5, 2) rectangle (3, -2);
    \draw[fill=yellow!50, rounded corners, dashed, very thick, opacity=0.6] 
        (4, 2) rectangle (9, -2);
    \draw[fill=blue!50,rounded corners, dashed, very thick, opacity=0.6] 
        (1.7, 2.2) rectangle (7.1, -2.2);
    
    \node[fill=red!50, rectangle, anchor=center] 
        (exp1) at (-0.3, 2.7) {\small CRL on unlabelled data};
    \node[fill=blue!50, rectangle, anchor=center] 
        (exp2) at (4.5, 2.7) {\small Learning alignments with concept labels};
    \node[fill=yellow!50, rectangle, anchor=center] 
        (exp3) at (8.2, 2.7) {\small Label predictor};

    \node[
        rectangle, 
        rounded corners, 
        draw=black, 
        very thick, 
        fill=purple!70]
        (pic) at (-1.1, 0) 
        {\includegraphics[scale=0.8]{./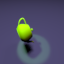}};

    \foreach \i in {1,2,3} {
        \node[neuron] (I\i) at (0,0.6-0.3*\i) {};
        \draw[->] (pic) -- (I\i);
    }

    \foreach \i in {1,2,3,4,5,6} {
        \node[neuron] (H\i) at (0.5,1.05-0.3*\i) {};
    }

    \foreach \i in {1,2,3,4,5,6} {
        \node[neuron] (H_2\i) at (1,1.05-0.3*\i) {};
    }

    \foreach \i in {1,2,3,4} {
        \node[output, fill=blue!50] (O\i) at (2.5,2.5 - 1*\i) {$M_\i$};
    }

    \foreach \i in {1,2,3,4} {
        \node[concept, fill=red!50] (C\i) at (4.5,2.5-1*\i) {$C_\i$};
    }

    \node[c_text]
        (concept_1) at (5.2, 1.5) {\scriptsize Color object};
    \node[c_text]
        (concept_2) at (5.2, 0.5) {\scriptsize Rot.\ object};   
    \node[c_text]
        (concept_3) at (5.2,-0.5) {\scriptsize Rot.\ spotlight};
    \node[c_text]
        (concept_4) at (5.2,-1.5) {\scriptsize Object};

    \node[draw=black, circle, very thick, fill=orange] 
        (Y) at (8, 0) {\large $Y$};

    \foreach \i in {1,2,3} {
        \foreach \j in {1,2,3,4,5,6} {
            \draw[->, opacity=0.4] (I\i) -- (H\j);
        }
    }

    \foreach \i in {1,2,3,4,5,6} {
        \foreach \j in {1,2,3,4,5,6} {
            \draw[->, opacity=0.4] (H\i) -- (H_2\j);
        }
    }

    \foreach \i in {1,2,3,4,5,6} {
        \foreach \j in {1,2,3,4} {
            \draw[->, opacity=0.4] (H_2\i) -- (O\j);
        }
    }

    \foreach \i in {1, 2, 3, 4} {
        \draw[->, thick] (concept_\i) -- (Y);
    }  

    \draw[->, dashed, color=red, very thick] (O1) -- (C3);
    \draw[->, dashed, color=red, very thick] (O2) -- (C4);
    \draw[->, dashed, color=red, very thick] (O3) -- (C2);
    \draw[->, dashed, color=red, very thick] (O4) -- (C1);

    \node (alpha) at ($(O4)!0.5!(C4)$) {\large $\alpha$};
    \draw [shorten >= 0.2cm] (alpha) -- ($(O2)!0.55!(C4)$);


    \node [noise, fill=white, right=5 of concept_1] (G_1) {$G$};       
    \node [rep, fill=purple!70, below=0.5 of G_1] (X) {$X$};

    \node [rep, below=0.5 of X, fill=orange] (C_hat) {$Y$};
    \node [rep, left=0.5 of C_hat] (M_1) {$M$};

    \node [rep, right=0.5 of C_hat, fill=red!50] (H_d) {$C$};

    \draw [-stealth, thick] (G_1) -- (X);
    \draw [-stealth, thick] (H_d) -- (C_hat);

    \draw [-stealth, thick] (X) --
        node [midway, above left=0.25em and 0em, black] {$g_\phi$}
        (M_1);
    \draw [-stealth, thick] (X) -- 
        node [midway, above right=0.25em and 0em, black] {$g_{\phi'}$}
        (H_d);
 
    \draw [dashed, very thick, red, shorten <= 0.1cm, shorten >= 0.1cm, -stealth] ($(M_1.south) + (0, 0)$)
        to[out=330, in=210] 
        node [midway, below=0.25em, black] {$\alpha$}
        ($(H_d.south) + (0, 0)$);        
    
    \end{tikzpicture}}
    \caption{\textbf{Left}: An overview of our framework: we learn the
        alignment function $\alpha$ that maps causal representations $M_i$
        learned on cheap unlabelled data by a causal representation learning
        (CRL) encoder, to interpretable concepts $C_j$ using only few concept
        labels. As in standard CBMs, these concepts are used in a downstream
        task like regression or classification of $Y$. \textbf{Right}: Data
        generating process, where $G$ are the latent causal variables, $X$ is
        an observation, $M$ are the representations learned by a model
    $g_\psi$, $C$ are the interpretable concepts and $Y$ is the final label.}
    \label{fig:pipeline-dgp}
\end{figure}

An advantage of CBMs is that they can in many cases provide similar accuracy in terms
of prediction compared to black-box methods, while also ensuring
interpretability by construction \citep{koh2020conceptbottleneck,
ismail2023concept, zarlenga2022concept}, as opposed to post-hoc
methods~\citep{belinkov2022probing}. Users may also interact with the models by
verifying and, if necessary, correcting the detected concepts at test time
\citep{koh2020conceptbottleneck}. However, existing methods suffer from two
important limitations. First, \emph{spurious correlations} in the training data
may lead to learned concepts that do not capture the intended semantics of the
concept \citep{goyal2019explaining,margeloiu2021conceptbottleneckmodelslearn},
or encode more information than intended
\citep{mahinpei2021promisespitfallsblackboxconcept, zarlenga2023towards}, even
with concept-level supervision. Recent works propose mitigation strategies, but
implicitly assume that the concepts are statistically independent
\citep{marconato2022glance,sheth2023auxiliary} or require interventions on the
data generating process with labels \citep{marconato2023humanintepretable}.
Secondly, in addition to labels required for the learning task at hand, CBMs
require \emph{many expensive concept labels} that may not always be available.
Attempts at working around this issue have so far focused on obtaining concept
labels from additional sources like GPT-3 \citep{oikarinen2023labelfree} or
using multi-model information \citep{Yang_2023_CVPR}.

In this paper, we propose a framework that provides theoretical guarantees on
the correctness of the learned concepts and on the number of required labels
without requiring any interventions. We assume that the concepts that we want
to learn correspond to latent variables in a causal system and allow arbitrary
dependences between them. As illustrated in Fig.~\ref{fig:pipeline-dgp} (left),
we propose a two-step pipeline: (i)~leverage out-of-the-box causal
representation learning (CRL) methods to learn provably disentangled
representations of concepts $M$ even in case of spurious correlations;
(ii)~learn an alignment map $\alpha$ between these representations and the
interpretable concepts $C$ with theoretical guarantees and as few concept
labels as possible. In our setting, the map $\alpha$ consists of a permutation
and element-wise transformations. Unlike standard CBMs, our pipeline only uses
concept labels in the second stage to disambiguate the relation between the
representations learned by CRL and the concepts. This requires much fewer
concept labels, addressing the second limitation for CBMs.

We introduce two principled estimators to learn $\alpha$ based on variants of
the group lasso \citep{yuan2006model} and a weighted matching procedure. The
first estimator is based on a linear model with an optional feature map, and
comes with a high probability finite-sample result on its correctness
(Thm.~\ref{thm:group_structure}). This result provides explicit guidance for
tuning the regularization hyperparameter, and shows how the required number of
concept labels scales with problem parameters like the number of concepts and
the dimension of the feature map. Our second estimator is based on a kernelized
procedure, allowing for more flexibility, but comes only with asymptotic
guarantees (Thm.~\ref{thm:kernel-prob}). 

We evaluate our framework in synthetic and image benchmarks with both
real-valued and binary concepts. The experiments show that the learned concepts
have less impurities, measured in terms of OIS and NIS
\citep{zarlenga2023towards}, and are often more accurate than other CBMs. The
experimental results additionally indicate that our estimator performs well
beyond the assumptions of our theorems. For example, our estimators are able to
find the correct permutation, even in settings with strong correlations between
the concepts. 

\section{Related Work}
Extracting high-level concepts from the inner workings of machine learning
models has gained traction over the last years \citep{alain2017understanding,
ghorbani2019towards,mcgrath2022acquisition}. These concepts can be used to
create interpretable explanations. In concept bottleneck models, the concepts
are hard-coded into the structure of the model \citep{koh2020conceptbottleneck,
marconato2022glance, ismail2023concept, zarlenga2022concept}. This approach has
been quite successful, with multiple variants, e.g.\ probabilistic CBMs
\citep{kim2023probabilistic} or energy-based CBMs \citep{xu2024energybased}.
One drawback of CBMs is that the learned concepts may not correctly capture the
intended semantics \citep{margeloiu2021conceptbottleneckmodelslearn} and that
their representations can encode impurities, e.g.\ more information than
intended, \citep{mahinpei2021promisespitfallsblackboxconcept,
zarlenga2023towards}. \citet{havasi2022addressing}~address this problem by
allowing a more expressive autoregressive concept predictor that can take
advantage of the correlations between concepts. On the other hand, the issue is
exacerbated when there are spurious correlations among concepts in the training
data. In this case, \citet{marconato2022glance} encourage disentangled
representations of the concepts by adding a regularization term in the concept
predictor. Similarly, \citet{sheth2023auxiliary} introduce a concept orthogonal
loss that encourages separation between concept representations. Both
approaches implicitly assume the concepts to be statistically independent,
which means that they can fail to learn the correct concepts in settings in
which there are dependences. 

We instead propose to leverage causal representation learning (CRL)
\citep{scholkopf2021crl} as the first step of our pipeline. CRL aims at
recovering potentially dependent latent variables from observations.
Identifying these latent variables is often only possible up to a permutation
and element-wise transformation. Many CRL methods exist with different
assumptions on the available data, e.g.\ the availability of interventional,
counterfactual or temporal data, or parametric assumptions about the underlying
system and observations,  e.g.,\
\citep{hyvarinen2019nonlinear,khemakhem2020vaeica,
KugelgenSGBSBL21,lachapelle2022dms, LippeMLACG22, ahujaMWB2023,
lachapelle2024nonparametric, YaoXLMTMKL24} and many others. Most methods
focus on the infinite sample setting, with the exception of
\citep{acarturk2024sample}, which provides sample complexity results for the CRL task with interventions. Our framework is agnostic to the CRL method used and focuses on the supervised learning task of aligning concepts efficiently to the causal representations. 

Recent work relates identifiable representation learning, including causal
representation learning, with concept-based models, but in an unsupervised
concept discovery setting, as opposed to our supervised setting.
\citet{leemann2023when} propose an identifiable concept discovery method that
uses classic methods like PCA and ICA for independent concepts, but also
introduces novel disentanglement methods for dependent concepts, based on
disjoint mechanisms or independent mechanisms. \citet{rajendran2024} argue that
the identifiability of CRL methods often requires too many assumptions, e.g.\
access to interventional data, and hence relaxes the identifiability of
concepts by defining them as affine subspaces of the latent space of the causal
variables. Given a set of concept conditional datasets, they prove they can
recover these concepts up to linear transformations.

The most closely related work is by \citet{marconato2023humanintepretable}, who
describe theoretical framework that shows how that disentangled representations
can be used in CBMs to avoid concept leakage. Their theoretical framework
suggests that to align learned embeddings and concepts one might require
interventions on the data generating process and labels. Instead, we do not
require interventions. Moreover, we provide a theoretical analysis of the
alignment function, proving guarantees about error bounds on the learned
concepts or the required number of labels. 

\section{Framework and Main Definitions}
\label{sec:framework}
\begin{figure}
\end{figure}
Our setting takes inspiration from causal representation learning (CRL)
\citep{scholkopf2021crl} and the connections between CRL and CBMs described by
\citet{marconato2022glance,marconato2023humanintepretable}. We assume that the
interpretable concepts are related to causal variables in an unknown data
generating process.

As illustrated in Fig.~\ref{fig:pipeline-dgp} (right), we assume a causal
system with latent causal variables $G = (G_1, \ldots, G_d) \in \mathcal{G}
\subseteq \mathbb{R}^{d}$, which can potentially have causal relations between
them. The \textit{observation} is denoted by $X \in \mathcal{X} \subseteq
\mathbb{R}^{D}$ and is generated by an unobserved invertible mixing function 
$f\colon \mathcal{G} \to\mathcal{X}$ as $ X = f(G)$.
The goal of CRL is to recover the causal variables by learning a function
$g_{\psi} \colon \mathcal{X} \to  \mathbb{R}^{d}$ that approximates
$f^{-1}$. This requires specific assumptions and we cannot identify the
ground-truth variables exactly, but only up to an
equivalence class. A common notion of identifiability is up to a
permutation, scaling and translation
\citep{identifiability2023ica,khemakhem2020vaeica,lachapelle2022dms},
which means that the learned representation satisfies $g_{\psi} (X) = P\Lambda G+b$ for a permutation matrix $P$, an invertible
diagonal matrix $\Lambda$ and a vector $b$.
A more general notion is \emph{identifiability up to permutation and
element-wise transformations} in which, instead of a diagonal matrix, we
consider a diffeomorphism, which we formalize as: 
\begin{defn}
    Let $\pi\colon \{1,\ldots, d\} \to \{1, \ldots, d\} $ be a permutation
    of the variable indices. Let $P \in \mathbb{R}^{d \times d}$ be the permutation matrix associated 
    with $\pi$, meaning that $P_{i\pi(i)} = 1$ and $0$ otherwise, and $T :
    \reals^d \to \reals^d$ a map.
    A representation $Z$ identifies the ground-truth causal variables up to a permutation and element-wise transformation if:
    $$
    PT(Z) =
    \left [
        T_{\pi(1)}(Z_{\pi(1)}),
        \ldots,
    T_{\pi(d)}(Z_{\pi(d)})\right]^{\top}
    =
    \left [
        G_1,
        \ldots
        G_d
    \right]^{\top}
    =
    G
    $$
\end{defn}
There are many more notions of identifiability in the literature, e.g., in some
cases the causal variables can be multidimensional \citep{LippeMLACG22} or
identified up to a \emph{block}, i.e.\ a group of causal variables
\citep{ahujaMWB2023, KugelgenSGBSBL21, YaoXLMTMKL24,
lachapelle2024nonparametric}. In the main paper we focus mostly on the
single-dimensional case in which we identify each individual variable, but we
provide extensions to the multidimensional case in
App.~\ref{app:proofs_params}. 

We denote the learned causal variables by $M=(M_1,\ldots, M_d) \in
\mathbb{R}^{d}$. For simplicity of exposition, in the main paper we will assume
that the  interpretable concepts $C=(C_1,\ldots, C_d)^{\top} \in
\mathbb{R}^{d}$ correspond to the ground-truth causal variables $G_1, \dots,
G_d$ up to permutation and element-wise transformations; in other words,
$g_{\psi'}$ shown in Fig.~\ref{fig:pipeline-dgp} (right) also identifies
ground-truth causal variables up to the same identifiability class. In
App.~\ref{app:proofs_params} and Sec.~\ref{sec:non_param_learning} we extend
this to allow each of the concepts to be a transformation of a group of causal
variables. Our goal is to learn the alignment map $\alpha$ that transforms the
learned representations $M$ to the concepts $C$ efficiently and accurately.

Note that in our theoretical results we assume that both concepts and causal
variables are continuous-valued. Our motivation for this choice is technical:
most current CRL methods assume that the underlying causal variables are
continuous-valued, and only very few methods identify discrete-valued causal
variables under very specific assumptions, e.g., in cases where different
causal variables always affect different parts of an observation
\citep{kong2024learning}. Since in concept-based models it is common to work
with binary concepts, we also test these cases in our experiments and show that
our estimators still provide good results, but we leave the extension of the
theoretical results to discrete causal variables and concepts for future work.

For our theoretical analysis, we assume that we are given a function $g_{\psi}$
from a CRL method that correctly identifies the ground-truth causal variables
up to some identifiability class and that the interpretable concepts correctly
identify the causal variables up to the same class.
\begin{asump}
    \label{asump:central}
    Let $M=g_{\psi} (X)$ be the causal representations learned by a pretrained
    CRL method and let $C=g_{\psi'} (X)$ be the interpretable concepts
    annotated from the observation $X$. We assume that $M$ and $C$ identify $G$
    up to the same identifiability class. This implies that    
    \begin{align}
        \label{eq:human_machine_relation}
        PT(M) =
        \left [
            T_{\pi(1)}(M_{\pi(1)}),
            \ldots,
        T_{\pi(d)}(M_{\pi(d)})\right]^{\top}
        =
        \left [
            C_1,
            \ldots
            C_d
        \right]^{\top}
        =
        C.
    \end{align}
\end{asump}
While this assumption is needed for our theoretical analysis, we will show that
empirically our framework works even when the CRL methods do not fully identify
the causal variables. Finding the alignment function $\alpha$ in
Fig.~\ref{fig:pipeline-dgp} reduces to learning the permutation $\pi$ and a
separate regression per concept $C_i$ to learn the transformation from learned
causal representation $M_{\pi(i)}$ to interpretable concept $C_i$. The main
difficulty here is identifying $\pi$ from observational data, i.e.\ without
performing interventions, and with few samples. In the following we introduce
two estimators for this setting, one assuming the element-wise transformation
is linear, e.g.\ as is the case in some CRL methods like
\citep{hyvarinen2019nonlinear, khemakhem2020vaeica}, for which we will be able
to provide finite sample results based on a tunable parameters, and a second,
non-parametric method based on kernel methods that allows for arbitrary
invertible element-wise transformations, and can hence be applied to most CRL
methods. After recovering the interpretable concepts, we can use them as in
CBMs as inputs to a label predictor for $Y$.

\section{Linear Regression Alignment Learning with the Group Lasso}
\label{sec:param_learning}

In this section, we describe a linear regression approach based on the Group
Lasso to learn the permutation $\pi$ and transformation $T$
in~\eqref{eq:human_machine_relation}. We will prove that this method
simultaneously provides accurate regression estimates for $T$ and identifies
$\pi$ correctly with high probability. To simplify the exposition, we focus
here on the case of scalar variables. A pseudo-code description can be found
in Alg.~\ref{alg:param}. Proofs are in
App.~\ref{app:proofs_params}, which also contains discussion of the
assumptions and a
generalization to block variables. The proof combines techniques from
high-dimensional statistics \citep{buhlmannVG11,lounici2011oracle}.

\paragraph{Method.}
Linear regression can describe non-linear relations by transforming
covariates using a feature map $\varphi : \reals \to \reals^p$. In this
section, we assume that $T_i$ can be expressed as a linear function
of $\varphi(M_{\pi(i)})$. The choice of $\varphi$ therefore gives
precise control to trade off interpretability with expressive power for
$T_i$. For instance, in the simplest and most easily interpretable case,
$\varphi$ can be the identity function, so that $p=1$ and $C_i$ and
$M_{\pi(i)}$ are related by scaling. In more challenging settings,
richer functional relations may be needed, e.g.\ splines or random Fourier features.
We apply the same feature map to all machine variables in $M$, for
which we write $\varphi(M) = [\varphi(M_1)^{\top}, \ldots,
\varphi(M_d)^{\top}]^{\top}$. Then each $C_i$ is modeled as a linear
function of the transformed variables:
\begin{align}\label{eq:single_model}
    C_i = \varphi(M)\bstar_i 
    + \epsilon_i, 
\end{align}
where $\bstar_i \in \mathbb{R}^{pd}$ is an unknown parameter vector, and
$\epsilon_i \sim \mathcal{N}(0,\sigma^2)$ is Gaussian noise. By assumption,
$C_i$ only depends on $M_{\pi(i)}$ and not on any of the other variables, so
$\bstar_i$ is sparse: only the coefficients for $\varphi(M_{\pi(i)})$ are
non-zero. To express this formally, let $G_j = \{(j-1)p, \ldots, jp\} $ be the
indices that belong to variable $M_j$ and, for any $\beta \in \reals^{pd}$,
define $\beta^{j}= (\beta_k \mid  k \in G_j)$ to be the corresponding
coefficients. Then $(\bstar_i)^j$ is non-zero only for $j = \pi(i)$.

We assume we are given a data set $\mathcal{D}=
\{(C^{(\ell)},M^{(\ell)})\}_{\ell=1}^{n}$ that contains $n$ independent samples
of corresponding pairs $C^{(\ell)} = (C_1^{(\ell)},\ldots,C_d^{(\ell)})$ and
$M^{(\ell)} = (M_1^{(\ell)}, \ldots, M_d^{(\ell)})$. We stack the $C^{(\ell)}$
into a matrix $\Hvec\in\mathbb{R}^{n\times d}$ and the feature vectors
$\varphi(M^{(\ell)})$ into $\Phi \in \mathbb{R}^{n \times pd}$. This leads to
the relation
\begin{align*}
    \Hvec_{i} 
    &= \Phi \bstar_i + \bm{\epsilon}_i, 
\end{align*}
where $\Hvec_i$ is the $i$-th column of $\Hvec$ and the noise vector
$\bm{\epsilon}_i$ consists of $n$ independently drawn $\cN(0, \sigma^2)$
variables.
To estimate $\bstar_i$, we use the Group Lasso with parameter $\lambda > 0$:
\begin{align}\label{eq:group_est}
    \hbeta_i
        = \argmin_{\beta \in \mathbb{R}^{dp}} \tfrac{1}{n}\|\Hvec_i - \Phi\beta \|^2 + 
        \lambda  \sqrt{p} \|\beta\|_{2, 1}, 
        & & 
    \|\beta\|_{2, 1} 
        =  \sum_{j=1}^{d} \|\beta^{j}\|
.\end{align}
The $(2, 1)$-mix norm $\|\beta\|_{2, 1}$ in \eqref{eq:group_est} encourages
group-wise sparsity. It applies the Euclidean norm $\|\beta^{j}\|$ to each
group~$j$ separately, and sums the results over groups, as defined below. We
also define the $(2, \infty)$-mix norm as $\|\beta\|_{2, \infty} = \max_{j=1,
\ldots, d}\|\beta^j\|$

\paragraph{Theoretical Analysis.}
We denote the full covariance matrix by $\hS =
\tfrac{1}{n}\Phi^{\top}\Phi$. For the group of $p$ columns of $\Phi$
that correspond to $\varphi(M_j)$ we write $\Phi_j = \Phi_{G_j}$. Also
let $\hS_{jj'} = \tfrac{1}{n}\Phi_j^{\top} \Phi_j'$ denote the
covariance matrix between groups $j$ and $j'$, and abbreviate $\hS_j =
\hS_{jj}$. Then, w.l.o.g., we can assume that the data within each group
have been centered and decorrelated: 
\begin{align}\label{eqn:standardized}
     \tfrac{1}{n}\mathbbm{1}^{\top}\Phi_j  &= 0
    \quad \text{and} \quad
    &
    \hS_j = I 
    & & 
    \text{for all } j=1,\ldots, d
.\end{align}
This can be achieved by pre-processing: subtract the empirical mean of
$\Phi_j$ and multiply it from the right by the inverse square root of
the empirical covariance matrix. Preprocessing is allowed in our
theoretical results, because they apply to the fixed design setting,
so probabilities refer to the randomness in $\Hvec$
conditional on already having observed $\Phi$.
If $\varphi(M_j)$ and $\varphi(M_{j'})$ are completely correlated, then
$\bstar_i$ is not uniquely identifiable, no matter how much data we
have. To rule out this possibility, we make the following
assumption,
which limits the amount of correlation. This is a standard assumption 
when analyzing the Group Lasso \citep{lounici2011oracle, LouniciPTG09}.
\begin{asump}\label{asump:structure}
    There exists $a > 1$ s.t.
    for all $j\neq j'$,
    \begin{align*}
        \max_{t\in \{1,\ldots,p\}}|(\hS_{jj'})_{t t}| 
            \le \frac{1}{14a} ,
        \max_{t, t' \in \{1,\ldots,p\}}|(\hS_{jj'})_{t t'}| 
            \le \frac{1}{14a p } 
    .\end{align*}
\end{asump}
\begin{restatable}{theorem}{GroupStruct}\label{thm:group_structure}
    Suppose the data have been pre-processed to satisfy
    \eqref{eqn:standardized} and 
    let Assump.~\eqref{asump:structure} hold. Take
    any $\delta \in (0, 1)$ and set $\lambda \ge 4\lambda_0$, where
    \begin{align*}
        \lambda_0 = \frac{2\sigma}{\sqrt{n} }
        \sqrt{1 + \sqrt{\frac{8\log(d/\delta))}{p}} +
        \frac{8\log(d/\delta) }{p}},
    \end{align*}
    and set $c = \left( 1 + \tfrac{24}{7(a - 1)} \right) $. 
    Then, any solution 
    $\hbeta_i$ of the Group Lasso objective \eqref{eq:group_est}
    satisfies
    \begin{align}
        \|\hbeta_i - \bstar_i\|_{2, \infty} \le c\lambda \sqrt{p} 
        \label{eq:param_max_bound_main}
    \end{align}
    with probability at least $1 - \frac{\delta}{d}$.
    If, in addition, $\|(\bstar_i)^{\pi(i)}\| > 2c\lambda\sqrt{p} $, then
    \eqref{eq:param_max_bound_main}
    implies that $\hJ_i = \argmax_{j=1, \ldots, d} \|\hbeta_i^{j}\|$
    estimates $\pi(i)$ correctly.
\end{restatable}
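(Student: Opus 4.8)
The plan is to follow the classical fixed-design analysis of the Group Lasso: first isolate a deterministic argument that bounds $\Delta := \hbeta_i - \bstar_i$ on a ``good'' noise event, then show that event has the claimed probability. Since the objective in \eqref{eq:group_est} is convex, its subgradient (KKT) conditions hold at \emph{every} minimizer, so the bound will apply to an arbitrary solution $\hbeta_i$. For the probabilistic part I would define $\mathcal{E} = \{\max_{j\le d}\|\tfrac{2}{n}\Phi_j^{\top}\bm{\epsilon}_i\| \le \lambda_0\sqrt{p}\}$. Conditioning on the design and using $\hS_j = I$ from the preprocessing \eqref{eqn:standardized}, the vector $\tfrac{2}{n}\Phi_j^{\top}\bm{\epsilon}_i$ is centered Gaussian with covariance $\tfrac{4\sigma^2}{n}I_p$, so $\tfrac{n}{4\sigma^2}\|\tfrac{2}{n}\Phi_j^{\top}\bm{\epsilon}_i\|^2 \sim \chi^2_p$; a Laurent--Massart tail bound $\mathbb{P}(\chi^2_p \ge p + 2\sqrt{px} + 2x)\le e^{-x}$ with $x = \log(d/\delta)$, the crude estimates $2\sqrt{px}\le\sqrt{8px}$ and $2x\le 8x$ (which produce exactly the expression under the square root in $\lambda_0$), and a union bound over the $d$ groups give $\mathcal{E}$ with at least the stated probability.

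On the event $\mathcal{E}$ I would then run two deterministic steps. Writing $S = \{\pi(i)\}$ (the only nonzero block of $\bstar_i$ by Assump.~\eqref{asump:central}), the \emph{basic inequality} -- comparing the objective at $\hbeta_i$ with that at $\bstar_i$, substituting $\Hvec_i = \Phi\bstar_i + \bm{\epsilon}_i$, and using Hölder's inequality for the dual mixed norms $\|\cdot\|_{2,\infty},\|\cdot\|_{2,1}$ -- combined with $\lambda \ge 4\lambda_0$ yields a \emph{cone condition}: $\|\Delta_{S^c}\|_{2,1}\le \kappa\|\Delta^{\pi(i)}\|$ for an explicit $\kappa$, hence $\sum_j\|\Delta^j\|\le(1+\kappa)\|\Delta^{\pi(i)}\|$. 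Next, the stationarity condition $\tfrac{2}{n}\Phi_j^{\top}(\Phi\hbeta_i-\Hvec_i)+\lambda\sqrt{p}\,\hat g^j=0$ with $\|\hat g^j\|\le1$, after substituting $\Hvec_i=\Phi\bstar_i+\bm\epsilon_i$ and using $\hS_j=I$ to isolate block $j$, gives for every $j$
$$
\Delta^j = \tfrac{1}{n}\Phi_j^{\top}\bm{\epsilon}_i - \tfrac{\lambda\sqrt{p}}{2}\hat g^j - \sum_{j'\neq j}\hS_{jj'}\Delta^{j'} .
$$
Taking norms, the noise term is $\le\tfrac{\lambda_0\sqrt p}{2}\le\tfrac{\lambda\sqrt p}{8}$ on $\mathcal{E}$ and the subgradient term is $\le\tfrac{\lambda\sqrt p}{2}$; for the cross terms I would invoke Assump.~\eqref{asump:structure}, splitting $\hS_{jj'}$ into a diagonal part (operator norm $\le\tfrac{1}{14a}$) and an off-diagonal part (Frobenius norm $<\tfrac{1}{14a}$) to get $\|\hS_{jj'}\|_{\mathrm{op}}\le\tfrac{1}{7a}$, so $\sum_{j'\neq j}\|\hS_{jj'}\Delta^{j'}\|\le\tfrac{1}{7a}\sum_{j'\neq j}\|\Delta^{j'}\|$. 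Feeding in the cone bound turns the identity at $j=\pi(i)$ into a self-referential inequality for $\|\Delta^{\pi(i)}\|$ that can be solved because $a>1$ keeps the relevant coefficient strictly below $1$; back-substitution for general $j$ then gives $\|\Delta\|_{2,\infty}=\max_j\|\Delta^j\|\le c\lambda\sqrt p$, and carrying the constants through the chain produces $c = 1 + \tfrac{24}{7(a-1)}$.

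The support-recovery claim is then immediate from \eqref{eq:param_max_bound_main}: on $\mathcal{E}$, $\|\hbeta_i^{\pi(i)}\|\ge\|(\bstar_i)^{\pi(i)}\|-\|\Delta^{\pi(i)}\|>2c\lambda\sqrt p - c\lambda\sqrt p = c\lambda\sqrt p$, while $(\bstar_i)^j=0$ for $j\neq\pi(i)$ forces $\|\hbeta_i^j\|=\|\Delta^j\|\le c\lambda\sqrt p$; hence $\hJ_i=\argmax_j\|\hbeta_i^j\|=\pi(i)$.

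I expect the main obstacle to be the deterministic $\ell_{2,\infty}$ bound -- concretely, converting the entrywise control of $\hS_{jj'}$ in Assump.~\eqref{asump:structure} into operator-norm control of the cross terms, and then chaining that with the cone condition so that the $d$-dependent sums collapse to a constant multiple of the single active block's error (this collapse is exactly what makes $c$ dimension-free). The $\chi^2$ concentration and the basic-inequality manipulation are routine.
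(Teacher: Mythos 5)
Your proposal is correct in its essentials and would establish the theorem, but it closes the key deterministic step by a genuinely different route than the paper. Both arguments rest on the same identity $\Delta^j = (\hS\Delta)^j - \sum_{j'\neq j}\hS_{jj'}\Delta^{j'}$ for $\Delta=\hbeta_i-\bstar_i$ (the paper writes this as $\|\Delta\|_{2,\infty}\le\|\hS\Delta\|_{2,\infty}+\|(\hS-I)\Delta\|_{2,\infty}$ and controls the first term by the KKT conditions plus the noise event, exactly as you do), and both convert the entrywise bounds of Assump.~\ref{asump:structure} into the same cross-term estimate $\tfrac{1}{7a}\sum_{j'\neq j}\|\Delta^{j'}\|$ --- the paper via a Cauchy--Schwarz/Minkowski manipulation, you via the diagonal-plus-off-diagonal Frobenius split; these give identical bounds when all groups have size $p$. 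Where you diverge is in bounding $\sum_{j'}\|\Delta^{j'}\|$: the paper invokes the Group Lasso oracle inequality $\|\hbeta_i-\bstar_i\|_{2,1}\le 24\lambda p/\kappa^2$ (Theorem~\ref{thm:group_error_1}, imported from \citet{buhlmannVG11}), which needs the restricted eigenvalue condition and Lemma~\ref{lem:kappa_alpha} to set $\kappa^2=1-1/a$; you instead use only the cone condition from the basic inequality and solve a self-referential inequality at $j=\pi(i)$. Your route is more elementary and self-contained (no RE condition, no external oracle inequality), and if you actually track the constants it yields a bound \emph{smaller} than $c\lambda\sqrt p$, which still proves the theorem --- but the chain does not literally produce $c=1+\tfrac{24}{7(a-1)}$ as you assert; that specific constant is an artifact of the RE-based route.

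One bookkeeping slip worth fixing: with $x=\log(d/\delta)$ per group, the union bound over the $d$ groups gives $\mathcal{E}$ probability $1-\delta$, not the stated $1-\delta/d$. You need to take $x=\log(d^2/\delta)=2\log(d/\delta)$ per group, and it is precisely this extra factor of $2$ that turns the Laurent--Massart terms $2\sqrt{px}$ and $2x$ into the $\sqrt{8p\log(d/\delta)}$ and $8\log(d/\delta)$ appearing in $\lambda_0$ (this is how the paper obtains them); your ``crude estimates'' then have no slack left to absorb. The $1-\delta/d$ rate is load-bearing, since Corollary~\ref{thm:total_prob} union-bounds over the $d$ concepts $i$. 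The final support-recovery step via the reverse triangle inequality matches the paper's argument exactly.
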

Theorem~\ref{thm:group_structure} gives us an explicit relation between
the parameters $n, p,d,\delta$ of the learning task, the tuning of the
hyperparameter $\lambda$, and the estimation errors for $\bstar_i$ and
$\pi(i)$. For example, if we set $\delta = \tfrac{1}{n}$, $\lambda =
4 \lambda_0$ and let $n\to
\infty$, then $\lambda \to 0$ and $\hJ_i$ estimates the correct
index $\pi(i)$ with probability tending to $1$. So, regardless of the true
parameter magnitude $\|(\bstar_i)^{\pi(i)}\|$, the estimator is
consistent given a sufficient amount of data. Another way to express
this is to ask about \emph{sample complexity}: which sample size $n$ do
we need to reach accuracy $E > 0$? Setting $\lambda = 4 \lambda_0$ and
solving for $n$ large enough that $c\lambda \sqrt{p} \leq E$, we see
that
\[
  n \geq \frac{64 c^2\sigma^2
        \big(p + \sqrt{8p\log(d/\delta))} +
      8\log(d/\delta)\big)}{E^2}
\]
is sufficient. For estimating the permutation $\pi(i)$ correctly, 
the required accuracy is $E \leq \|(\bstar_i)^{\pi(i)}\|/2$, so the
larger the true parameters, the easier this task becomes.

\begin{algorithm}[t]
    \caption{Estimating the permutation 
        using linear regression with Group Lasso regularization}
    \label{alg:param}
    \begin{algorithmic}[1]
        \State Input: regularization parameter $\lambda > 0$
        \State Data: $\{(C^{(\ell)}, M^{(\ell)})\}_{\ell=1}^{n}$
        \For{$i=1, \ldots, d$}
            \State 
            $\displaystyle\hbeta_i \gets \argmin_{\beta \in \mathbb{R}^{dp}}
            \|\Hvec_{i} - \Phi \beta\|^2 + \lambda \sqrt{p} \|\beta\|_{2,1}$
        \EndFor
        \State $\displaystyle\hpi \gets \argmax_{\pi \in \Pi} 
            \sum_{i=1}^{d}\|\hbeta^{\pi(i)}_{i}\|$
    \end{algorithmic} 
\end{algorithm}

The estimation is performed separately for each concept $C_i$, and, if
$\hJ_i$ is correct for all $i$ simultaneously, we can construct a
valid estimate of the permutation by $\tpi(i) = \hJ_i$. However, this estimate is
not robust to estimation errors and may even produce functions
$\tpi$ that are not permutations if some $\hJ_i$ are incorrect. The
actual estimator of the permutation, $\hpi$, therefore optimizes a weighted
matching problem, which leads to the same estimate as $\tpi$ if the $\hJ_i$
together produce a valid permutation, but forces $\hpi$ to be a valid
permutation even if they do not:
\begin{align}\label{eq:perm-match}
    \hpi = \argmax_{\pi \in \Pi} \sum_{i=1}^d 
    \|\hbeta_i^{\pi(i)}\|
.\end{align}
Here, $\Pi$ is the set of all permutations. This assignment can be
solved without cycling through all permutations, with cubic runtime
in the dimension $d$. By a union bound over $i$, it follows from
Theorem~\ref{thm:group_structure} that $\hpi$ estimates the true
permutation $\pi$ correctly with high probability:
\begin{restatable}{cor}{TotalProb}\label{thm:total_prob}
    Assume the same setting as Theorem~\ref{thm:group_structure}
    such that for each $i=1,\ldots, d, \|(\bstar_i)^{\pi(i)}\| >2c\lambda \sqrt{p}$ 
    and consider the estimator $\hpi$ as defined in \eqref{eq:perm-match}.
    Then $\hpi = \pi$ with probability 
    at least $1 - \delta$.
\end{restatable}

\section{Kernelized Alignment Learning}
\label{sec:non_param_learning}
The previous section describes how to learn functions with
finite-dimensional representations. We now extend the estimator to use
general functions from a \emph{reproducing kernel Hilbert space} (RKHS)
\citep{HofmannScholkopfSmola2008}. This may be interpreted as a
(typically infinite-dimensional) feature map $\varphi$ that maps to the
RKHS. However, using a representer theorem, all computations can
be performed on finite-dimensional representations. We summarize the
method in Alg.~\ref{alg:non_param}. 
\paragraph{Method.} Define again $M = (M_1, \ldots, M_d)$, where we now allow each machine
variable $M_j$ to take values in an abstract space $\mathcal{Z}_j$. Let
$\mathcal{Z} = \mathcal{Z}_1 \times \ldots \mathcal{Z}_d$. We then
generalize \eqref{eq:single_model} to
\[
    C_i = \bstar_i(M) + \epsilon_i, 
\]
where $\bstar_i \in \mathcal{H}$ is a function from $\mathcal{Z}$ to $\reals$,
and $\epsilon_i \sim \mathcal{N}(0,\sigma^2)$. The space of possible functions
$\mathcal{H}$ will be an RKHS containing functions of the form $\beta(M) =
\sum_{j=1}^d \beta^j(M_j)$, where each $\beta^j$ is an element of an RKHS
$\mathcal{H}_j$ that captures the effect of variable $M_j$ on $C_i$. The
assumption that $C_i$ depends only on $M_{\pi(i)}$ means that $\bstar_i(M) =
(\bstar_i)^{\pi(i)}(M_{\pi(i)})$. Each $\mathcal{H}_j$ can be freely chosen,
and is typically specified indirectly by the choice of a positive definite
\emph{kernel} $\kappa_j : \mathcal{Z}_j \times \mathcal{Z}_j \to \reals$
\citep{HofmannScholkopfSmola2008}. This kernel defines a measure of similarity
between inputs: $\kappa_j(M_j,M_j') = \left \langle \varphi_j(M_j),
\varphi_j(M_j') \right\rangle_{\mathcal{H}_j}$, where $\varphi_j :
\mathcal{Z}_j \to \mathcal{H}_j$ is the corresponding feature map. See
p.\,\pageref{p:kernel_examples} for examples.

Given data $\mathcal{D}= \{(C^{(\ell)},M^{(\ell)})\}_{\ell=1}^{n}$, let
$\Mvec \in \mathcal{Z}^n$ denote the matrix with the machine variables
$(M^{(\ell)})^\top$ stacked as rows. If we further define $\beta(\Mvec)
= [\beta(M^{(1)}), \ldots, \beta(M^{(\ell)})]^{ \top}$, then the Group
Lasso objective \eqref{eq:group_est} generalizes to
\begin{equation}\label{eq:feature_optim}
  \hbeta_i = \argmin_{\beta \in \mathcal{H}} 
        \tfrac{1}{n}
        \|\Hvec_i - \beta( \Mvec ) \|^2
        + 
        \lambda \sum_{j=1}^{d} \|\beta^{j}\|_{\mathcal{H}_j}
,\end{equation}
where $\|\beta^{j}\|_{\mathcal{H}_j}$ is the norm associated with
$\mathcal{H}_j$. 
To optimize the objective in \eqref{eq:feature_optim}, we 
need a finite dimensional objective to give to a Group Lasso solver. We
provide a version of the Representer Theorem showing that
the solution of $\eqref{eq:feature_optim}$ lives in a subspace of
$\mathcal{H}$ that can be described by finite-dimensional parameters
$\hat c_i^{1}, \ldots, \hat c_i^{j} \in \mathbb{R}^{n}$:
\begin{restatable}{theorem}{FeatKernEquiv}
    \label{thm:feature_kernel_equivalence}   Let $\varphi_1, \ldots,
    \varphi_d$ be the feature maps associated with $\mathcal{H}_1,
    \ldots, \mathcal{H}_d$. Then there exist $\hat c_i^{1}, \ldots, \hat
    c_i^{d} \in \mathbb{R}^{n}$ such that the optimization problem in
    \eqref{eq:feature_optim} has solution $\hbeta_i$ with each
    $\hbeta_i^j$ of the form
    \begin{align*}
        \hbeta_i^{j} = \sum_{\ell=1}^{n} 
            \varphi_j(M_j^{(\ell)})(\hat{c}_{i}^{j})_\ell
    .\end{align*}
\end{restatable}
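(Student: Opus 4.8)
The plan is to use the orthogonal-decomposition argument behind classical representer theorems, but carried out coordinate by coordinate so as to respect the additive structure $\beta = \sum_{j=1}^d \beta^j$ of the elements of $\mathcal{H}$. For each $j$, I would set $V_j = \mathrm{span}\{\varphi_j(M_j^{(1)}), \ldots, \varphi_j(M_j^{(n)})\} \subseteq \mathcal{H}_j$; since $V_j$ is finite-dimensional it is closed, so $\mathcal{H}_j = V_j \oplus V_j^{\perp}$ and the orthogonal projection $P_j$ onto $V_j$ is well-defined. (This is the only place where possible infinite-dimensionality of $\mathcal{H}_j$ enters, and it is harmless.) Given an arbitrary candidate $\beta$ with components $\beta^j$, I would decompose $\beta^j = u^j + w^j$ with $u^j = P_j \beta^j \in V_j$ and $w^j \perp V_j$, and then compare the objective value of $\beta$ with that of $u = (u^1, \ldots, u^d)$.

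Two things need to be checked. First, the least-squares term only sees $u$: by the reproducing property inside $\mathcal{H}_j$ we have $\beta^j(M_j^{(\ell)}) = \langle \beta^j, \varphi_j(M_j^{(\ell)}) \rangle_{\mathcal{H}_j}$, and since $\varphi_j(M_j^{(\ell)}) \in V_j$ is orthogonal to $w^j$ this equals $\langle u^j, \varphi_j(M_j^{(\ell)}) \rangle_{\mathcal{H}_j} = u^j(M_j^{(\ell)})$; summing over $j$ gives $\beta(\Mvec) = u(\Mvec)$, hence $\tfrac1n\|\Hvec_i - \beta(\Mvec)\|^2 = \tfrac1n\|\Hvec_i - u(\Mvec)\|^2$. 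Second, the group penalty cannot increase: Pythagoras in $\mathcal{H}_j$ gives $\|\beta^j\|_{\mathcal{H}_j}^2 = \|u^j\|_{\mathcal{H}_j}^2 + \|w^j\|_{\mathcal{H}_j}^2 \ge \|u^j\|_{\mathcal{H}_j}^2$, so $\sum_j \|\beta^j\|_{\mathcal{H}_j} \ge \sum_j \|u^j\|_{\mathcal{H}_j}$, with equality precisely when every $w^j = 0$.

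Combining these, the objective of \eqref{eq:feature_optim} at $u$ is never larger than at $\beta$, so the infimum over all of $\mathcal{H}$ coincides with the infimum over the finite-dimensional subspace $V_1 \oplus \cdots \oplus V_d$. On that subspace the objective is convex, continuous, and coercive (the penalty term blows up as any $\|u^j\|_{\mathcal{H}_j} \to \infty$), so a minimizer $\hbeta_i$ exists and has each component $\hbeta_i^j \in V_j$; expanding in the spanning set then produces coefficient vectors $\hat c_i^j \in \mathbb{R}^n$ with $\hbeta_i^j = \sum_{\ell=1}^n (\hat c_i^j)_\ell\, \varphi_j(M_j^{(\ell)})$, which is the claim. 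If one instead starts from an assumed global minimizer over $\mathcal{H}$, the same projection replaces it by one of the stated form — and in fact, since $\lambda > 0$, every minimizer already has this form, because the penalty strictly decreases unless all $w^j$ vanish.

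I do not expect a genuine obstacle: the argument is routine once the additive bookkeeping is set up carefully. The point that deserves a sentence of justification is that the penalty $\sum_j \|\beta^j\|_{\mathcal{H}_j}$ is a sum of norms rather than squared norms and is non-smooth, so a first-order/KKT characterization is unavailable; but the projection argument only uses that a norm is non-increasing under orthogonal projection, which holds regardless of smoothness. The other thing to state explicitly is that each $V_j$ is finite-dimensional and hence closed, which is what makes the projection $P_j$ well-defined in a possibly infinite-dimensional RKHS.
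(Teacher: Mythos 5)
Your argument is correct, but it is a genuinely different proof from the one in the paper. The paper does not project: it first passes to the equivalent squared-penalty problem \eqref{eq:feature_optim_squared}, applies the variational identity $\bigl(\sum_j \|\beta^j\|\bigr)^2 = \inf_{\eta \in \Delta_d} \sum_j \|\beta^j\|^2/\eta_j$, rescales $\beta^j$ and $\varphi_j$ by $\sqrt{\eta_j}$, and then recognizes the inner problem as ordinary kernel ridge regression for the single summed kernel $\kappa(\eta)(M,M') = \sum_j \eta_j \kappa_j(M_j,M_j')$, to which the classical Representer Theorem of Sch\"olkopf--Herbrich--Smola applies; finally it translates back to the un-squared penalty via $\lambda = \mu\bigl(\sum_j \|\hbeta_i^j\|_{\mathcal{H}_j}\bigr)$. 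Your route — component-wise orthogonal decomposition $\beta^j = u^j + w^j$ against $V_j = \mathrm{span}\{\varphi_j(M_j^{(\ell)})\}_\ell$, invariance of the data-fit term by the reproducing property, non-increase of each $\|\beta^j\|_{\mathcal{H}_j}$ by Pythagoras, and existence of a minimizer on the finite-dimensional product by convexity and coercivity — is more elementary and self-contained: it avoids the squared/un-squared equivalence entirely (a step that in the paper requires a solution-dependent reparametrization of the regularizer) and works directly because a norm, squared or not, only decreases under orthogonal projection. What the paper's detour buys is the multiple-kernel-learning interpretation via the weights $\eta$, which it reuses when invoking \citet{bach2008consistency} for the consistency result; your proof does not produce those weights but establishes the stated representer property, and additionally the stronger fact that \emph{every} minimizer has the claimed form. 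Both proofs implicitly treat the problem as an optimization over the tuple $(\beta^1,\dots,\beta^d)$ rather than over the possibly non-uniquely decomposable sum $\beta$, which is the correct reading of \eqref{eq:feature_optim}.
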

Substitution of this form into \eqref{eq:feature_optim} gives that $\hat
c_i^{1}, \ldots, \hat c_i^{d}$ will be the minimizers of the following
finite-dimensional optimization problem:
\begin{align}\label{eq:kernel_optim_unsquared}
      \min_{c^{1}, \ldots, c^{d} \in \mathbb{R}^{n}}
    \tfrac{1}{n}\|\Hvec_i- \sum_{j=1}^{d} K_jc^{j}\|^2
    + 
    \lambda\sum_{j=1}^{d} \|c^{j}\|_{K_j}
,\end{align}
where $K_j \in \mathbb{R}^{n \times n}$ with $(K_j)_{\ell k} 
= \kappa_j(M_j^{(\ell)}, M_j^{(k)})$, and
$\|c^{j}\|_{K_j} = \sqrt{{c^j}^{\top}K_jc^{j}} $.
\\
This procedure is performed for  $i=1,\ldots d$.
The permutation is estimated as in the linear case:
\begin{align*}
    \hpi
    =\argmax_{\pi \in \Pi}\sum_{i=1}^{d} 
    \|\hat{c}^{\pi(i)}_i\|_{K_{\pi(i)}}
.\end{align*}

\paragraph{Theoretical Analysis.}
Using a result by \citet{bach2008consistency}, we  prove that our estimator
for $\pi$ is consistent under suitable conditions,  discussed
in App.~\ref{app:proofs_non_params}. This holds for random
design, so for the joint randomness of $\mathcal{D}$.
\begin{restatable}{theorem}{KernelProb}\label{thm:kernel-prob}
    Assume ($A$--$D$) in App.~\ref{app:kernel_consistency}. Then, for any 
    sequence of regularization parameters $\lambda_n$ such that $\lambda_n \to 0$
    and $\sqrt{n} \lambda_n \to +\infty$ when $n \to \infty$,
    the estimated permutation $\hpi$ converges in probability to $\pi$. 
\end{restatable}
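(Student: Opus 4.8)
The plan is to reduce Theorem~\ref{thm:kernel-prob} to the model-selection consistency of multiple kernel learning (MKL) established by \citet{bach2008consistency}, applied once per concept, and then lift per-concept support recovery to recovery of the whole permutation through a short combinatorial argument.

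\emph{Step 1: per-concept support recovery.}
Fix $i \in \{1, \ldots, d\}$ and view \eqref{eq:feature_optim} — equivalently, by Theorem~\ref{thm:feature_kernel_equivalence}, the finite-dimensional problem \eqref{eq:kernel_optim_unsquared} — as an instance of the MKL estimator of \citet{bach2008consistency}, with blocks indexed by $j = 1, \ldots, d$, block RKHSs $\mathcal{H}_j$, kernels $\kappa_j$, response $\Hvec_i$, and true regression function $\bstar_i$ whose only active block is $\pi(i)$ by Assumption~\ref{asump:central}. I would verify that ($A$--$D$) in App.~\ref{app:kernel_consistency} are exactly the hypotheses Bach requires: boundedness and normalization of the kernels together with moment conditions on the noise; invertibility of the (possibly infinite-dimensional) covariance operator restricted to the active block; genuine dependence of $C_i$ on $M_{\pi(i)}$, i.e.\ $\|(\bstar_i)^{\pi(i)}\|_{\mathcal{H}_{\pi(i)}} > 0$; and the \emph{consistency condition}, an irrepresentable-type inequality bounding the correlation between the active block and the inactive ones. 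Under these hypotheses and the rate conditions $\lambda_n \to 0$ and $\sqrt{n}\,\lambda_n \to +\infty$, Bach's theorem gives that the selected block set $\hat J_i := \{\, j : \|\hat c_i^{\,j}\|_{K_j} > 0 \,\}$ equals the true support $\{\pi(i)\}$ with probability tending to $1$ as $n \to \infty$.

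\emph{Step 2: from supports to the permutation.}
Collect the nonnegative numbers $W_{ij} := \|\hat c_i^{\,j}\|_{K_j}$ into a matrix $W \in \reals^{d \times d}$, so that $\hpi = \argmax_{\pi' \in \Pi} \sum_{i=1}^{d} W_{i\pi'(i)}$. On the event $E_n := \bigcap_{i=1}^{d} \{\hat J_i = \{\pi(i)\}\}$, row $i$ of $W$ has a single strictly positive entry, at column $\pi(i)$, and zeros elsewhere. Hence $\sum_{i} W_{i\pi(i)} > 0$, whereas any $\pi' \in \Pi$ with $\pi' \neq \pi$ has some index $i$ with $\pi'(i) \neq \pi(i)$, so $W_{i\pi'(i)} = 0$ there and $\sum_{i} W_{i\pi'(i)} = \sum_{i :\, \pi'(i) = \pi(i)} W_{i\pi(i)} < \sum_{i} W_{i\pi(i)}$, the inequality being strict because at least one strictly positive term is dropped. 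Thus $\pi$ is the unique maximizer of the assignment problem, i.e.\ $\hpi = \pi$ on $E_n$.

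\emph{Step 3: conclusion.}
Since $d$ is fixed, a union bound gives $\Pr(\hpi \neq \pi) \le \Pr(E_n^{c}) \le \sum_{i=1}^{d} \Pr(\hat J_i \neq \{\pi(i)\}) \to 0$, which is precisely convergence in probability of the discrete-valued $\hpi$ to the constant $\pi$. The main obstacle I anticipate lies in Step~1, not in the combinatorics: one must confirm that the kernelized Group Lasso \eqref{eq:feature_optim}, with its particular normalization of the block penalties and a possibly infinite-dimensional feature map, falls literally within the scope of \citet{bach2008consistency}, and that ($A$--$D$) really do deliver his consistency condition (which rules out asymptotic overselection of inactive blocks) rather than only estimation consistency. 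In particular, this is where the assumption $\sqrt{n}\,\lambda_n \to +\infty$ is essential: it excludes the boundary scaling $\lambda_n \asymp n^{-1/2}$, under which Bach obtains pattern recovery only with probability bounded away from $1$.
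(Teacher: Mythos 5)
Your proposal is correct and follows essentially the same route as the paper: invoke the model-selection consistency of \citet{bach2008consistency} separately for each concept $i$ under assumptions ($A$--$D$) and the rate conditions on $\lambda_n$, then pass from per-concept support recovery to the full permutation via the weighted matching and a union bound over the fixed number of concepts $d$. The one obstacle you flag in Step~1 --- whether the unsquared penalty in \eqref{eq:feature_optim} literally matches Bach's squared formulation --- is resolved in the paper by noting that the two problems have the same solution sets under the reparametrization $\lambda_n = \mu_n\bigl(\sum_{j=1}^{d}\|\hbeta_i^{j}\|_{\mathcal{H}_j}\bigr)$, and that this translation factor converges in probability to a constant, so the required scalings of $\lambda_n$ and $\mu_n$ coincide asymptotically.
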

\begin{algorithm}[t]
    \caption{Estimating the permutation 
        using kernels}
    \label{alg:non_param}
    \begin{algorithmic}[1]
        \State Input: reg.\ parameter $\lambda > 0$, kernels
        $\kappa_1,\ldots,\kappa_d$
        \State Data: $\{(C^{(\ell)}, M^{(\ell)})\}_{\ell=1}^{n}$
        \For{$j=1,\ldots,d$}
            \State $(K_j)_{\ell k} \gets \kappa_j(M^{(\ell)}, M^{(k)})$
            \State $L_j \gets \text{CholeskyDecomposition}(K_j)$
        \EndFor
        \For{$i=1, \ldots d$}
            \State 
            $\displaystyle\widehat{\gamma}_i \gets 
            \argmin_{
                \gamma \in \reals^{np}
            }
            \tfrac{1}{n}  
            \|\Hvec_i -  \sum_{j=1}^{d} L_j \gamma^{j}\|^2
            + 
            \lambda \|\gamma\|_{2, 1}$
        \EndFor
        \State $\displaystyle\hpi \gets \argmax_{\pi \in \Pi} 
        \sum_{i=1}^{d}
            \|\widehat{\gamma}^{\pi(i)}_{i}\|$
    \end{algorithmic}
\end{algorithm}

\paragraph{Implementation.}
To use a standard Group Lasso solver we need to reparametrize the
optimization problem in \eqref{eq:kernel_optim_unsquared}, because of
the scaled norms $\|\cdot\|_{K_j}$. We can do this with a Cholesky
decomposition:
\begin{restatable}{lemma}{KernelImp}\label{lem:kernel-imp}
    For each $j=1,\ldots, d$ let $L_j$ be the Cholesky 
    decomposition of the Gramm matrix $K_j$, then $\hat c_i^{j}= (L_j^{\top})^{-1}\hat \gamma_i^{j}$
    if the parameters $\hat \gamma_i^1,
    \ldots, \hat \gamma_i^d$ are minimizers of
    \begin{align}\label{eq:kernel_optim_lstsq}
          \min_{\gamma^{1}, \ldots, \gamma^{d}\in \mathbb{R}^{n}}
            \tfrac{1}{n}  
            \|\Hvec_i - \sum_{j=1}^{d} L_j\gamma^{j}\|^2 
            + 
            \lambda \|\gamma^{j}\|_{2, 1}
    .\end{align}
\end{restatable}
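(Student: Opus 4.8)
The plan is to prove the lemma by an explicit change of variables that turns the scaled-norm objective \eqref{eq:kernel_optim_unsquared} into the ordinary Group Lasso objective \eqref{eq:kernel_optim_lstsq}. First I would recall that, since each $K_j$ is a symmetric positive semi-definite Gramm matrix, it admits a Cholesky factorization $K_j = L_j L_j^{\top}$ with $L_j$ lower triangular; I will assume $K_j$ is invertible (which holds, e.g., for a strictly positive definite kernel $\kappa_j$ evaluated at distinct points $M_j^{(\ell)}$), so that $L_j$ is invertible and the linear map $c^{j}\mapsto \gamma^{j} := L_j^{\top} c^{j}$ is a bijection of $\reals^n$ onto itself, with inverse $c^{j} = (L_j^{\top})^{-1}\gamma^{j}$.

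Next I would rewrite each term of \eqref{eq:kernel_optim_unsquared} under this substitution. For the data-fit term, $K_j c^{j} = L_j L_j^{\top} c^{j} = L_j \gamma^{j}$, hence $\sum_{j} K_j c^{j} = \sum_{j} L_j \gamma^{j}$ and $\tfrac1n\|\Hvec_i - \sum_{j} K_j c^{j}\|^2 = \tfrac1n\|\Hvec_i - \sum_{j} L_j \gamma^{j}\|^2$. For the penalty, $\|c^{j}\|_{K_j}^2 = (c^{j})^{\top} K_j c^{j} = (c^{j})^{\top} L_j L_j^{\top} c^{j} = \|L_j^{\top} c^{j}\|^2 = \|\gamma^{j}\|^2$, so that $\sum_{j}\|c^{j}\|_{K_j} = \sum_{j}\|\gamma^{j}\| = \|\gamma\|_{2,1}$ for $\gamma = ((\gamma^{1})^{\top},\ldots,(\gamma^{d})^{\top})^{\top}$. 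Thus the two objectives are equal pointwise along the bijection.

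Finally, because a bijective reparametrization preserves the set of minimizers, $(\hat\gamma_i^{1},\ldots,\hat\gamma_i^{d})$ minimizes \eqref{eq:kernel_optim_lstsq} if and only if $((L_1^{\top})^{-1}\hat\gamma_i^{1},\ldots,(L_d^{\top})^{-1}\hat\gamma_i^{d})$ minimizes \eqref{eq:kernel_optim_unsquared}; combined with the characterization of the latter minimizer as $(\hat c_i^{1},\ldots,\hat c_i^{d})$ from Theorem~\ref{thm:feature_kernel_equivalence} and the substitution in \eqref{eq:kernel_optim_unsquared}, this yields $\hat c_i^{j} = (L_j^{\top})^{-1}\hat\gamma_i^{j}$ as claimed. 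I do not expect any real obstacle here: the only subtlety is the invertibility of $K_j$, and if one wishes to allow a singular Gramm matrix one should either restrict attention to the range of $K_j$ or add an infinitesimal ridge to $K_j$ before factorizing — a remark I would make in passing rather than dwell on, since the substantive content is the term-by-term identity above.
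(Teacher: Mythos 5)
Your proposal is correct and follows essentially the same route as the paper: substitute $\gamma^{j} = L_j^{\top}c^{j}$ via the Cholesky factorization $K_j = L_jL_j^{\top}$, observe that $K_jc^{j} = L_j\gamma^{j}$ and $\|c^{j}\|_{K_j} = \|\gamma^{j}\|$, and conclude that the two objectives coincide under this bijection. Your explicit remark on the invertibility of $K_j$ (needed for $(L_j^{\top})^{-1}$ to exist) is a useful clarification that the paper leaves implicit, but it does not change the argument.
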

\begin{figure}[t]
    \def\trimLeft{1cm}
    \def\trimRight{0.5cm}
    \def\scaleValue{3cm}

    \def\incFigure#1#2#3{%
        \node{\scalebox{1}{\includegraphics[
            trim={{#1} 0 {#2} 0.2cm},
            clip
    ]{./figs/spline/#3.pdf}}};
        }
    \centering
    
    \resizebox{\textwidth}{!}{
    \begin{tikzpicture}
        \node (legend) {\includegraphics[scale=1.6]{./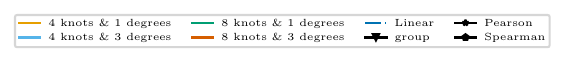}};
 
        \matrix[below=-0.5cm of legend.south,
            row sep=-0.3cm, 
            column sep=-0.1cm, 
            ampersand replacement=\&] (plotmatrix) {
            \incFigure{0.25cm}{0cm}{figure_0}\&[-0.37cm]
            \incFigure{1.38cm}{0cm}{figure_1}\&

            \incFigure{0.65cm}{0cm}{figure_2}\&[-0.38cm]
            \incFigure{1.38cm}{0cm}{figure_3}\&\\

            \incFigure{0.25cm}{0cm}{figure_4}\&
            \incFigure{1.38cm}{0cm}{figure_5}\&

            \incFigure{0.65cm}{0cm}{figure_6}\&
            \incFigure{1.38cm}{0cm}{figure_7}\&\\
        };

    \end{tikzpicture}}
    \vspace{-1.2em}
    \caption{Permutation error rate for spline features.
        From top left to bottom right we vary: (i) the
        regularization parameter, (ii) the number of dimensions, (iii) the correlation of
        the variables and (iv) the number of labels. The first plot of each
        pair shows the wellspecified and the second the
        misspecified case. We average over $10$ seeds and shade the
    25-75th percentile.}
    \label{fig:synth_data}
\end{figure}

\section{Experiments}
\label{sec:experiments}
We perform two types of experiments on four different datasets to evaluate our
estimators. In the first type of experiment, we evaluate how well the estimator
can learn the permutations and recover the original concepts, which can be
either continuous or binary. In the second type of experiment we assess the
usefulness of our estimator to perform a downstream classification task, where
the target label is binary. For this task, concepts are binarized by setting
the concept to $0$, if the value is lower than the midpoint of the range of
that concept, and $1$ otherwise. The label is randomly generated by selecting a
sub-selection of concepts and checking if some of those columns are $1$. This
creates a random classification task for each seed. While the theoretical
guarantees for our estimator do not translate directly to a setting where the
concepts are binary, we can use the logistic Group Lasso \cite{meier2008group}
and we see empirically that this variant also performs well.

The first dataset, called ``Toy dataset'', is  synthetic. Here the concepts are
generated using either a linear combination of features (which we call the
wellspecified case) or diffeomorphisms (which we call the misspecified case) of
the representations. The concepts are then permuted. We evaluate on datasets
common in CRL: Action and Temporal Sparsity \citep{lachapelle2022dms} and
Temporal Causal3Dident \citep{LippeMLACG22}. We train several CRL methods as a
first step of our pipeline:  DMS-VAE \citep{lachapelle2022dms}, CITRIS-VAE
\citep{LippeMLACG22},  iVAE \citep{khemakhem2020vaeica} and TCVAE
\citep{chen2018isolating}. In the downstream task experiments, we compare to
CBM \citep{koh2020conceptbottleneck}, CEM \citep{zarlenga2022concept} and
HardCBM \citep{havasi2022addressing}. Details are in
App.~\ref{app:experiments}.

\paragraph{Performance Metrics.}
To assess our estimator in terms of concept reconstruction capabilities we
report the mean error in the learned permutation of the variables: $\text{MPE}
= \frac{1}{d}\sum_{i=1}^{d} \ind\{\tpi(i) \neq \pi(i)\}$. In the experiments
where we test its capabilities for downstream tasks such as classification, we
report the accuracy on the final label and the Oracle Impurity Score (OIS)
\citep{zarlenga2023towards}. The OIS  measures how much information of each
concept is contained in other concepts. For concept interpretability, it is
desirable that the OIS should be as small as possible.
We report an extended discussion of our results in Appendix~\ref{app:addition_results}, 
with additional metrics, e.g. the NIS \citep{zarlenga2022concept}, and computation times.

\paragraph{Toy Dataset.}
The synthetic data experiments consist of $4$ settings, each using a different
set of features to perform the regression: linear, splines, 
random Fourier features (RFF) and kernels. The $M$ variables are distributed according
to $\cN(0, (1 - \rho)I_{d\times d} + \rho \ind)$, where $\ind$ denotes a 
matrix filled with only $1$'s. The $\rho \in (0, 1)$ parameter controls the
amount of correlation between the marginal variables. We sample $n$ data points, 
on which we perform a $80/20$ train/test data split. In the wellspecified
case, we generate the $M$ variables using the features from that setting. 
For each dimension $j=1,\ldots, d$, we draw a random weight
vector $\bstar_j \in \mathbb{R}^{p}$, such that 
$\|\bstar_j\| \in [16\lambda_0, 32\lambda_0]$ uniformly. A permutation 
$\pi\colon \{1, \ldots, d\} \to \{1, \ldots, d\}$ is sampled uniformly 
from all possible permutations. Finally, with independent
$\epsilon_i \sim N(0, \sigma^2)$ noise variables, we get
$C_{i} = \varphi(M_{\pi(i)})^{\top}\bstar_{\pi(i)} +  \epsilon_i.$
In the misspecified case, the $M$ variables are still sampled the same as before, 
but the $C$ variables are generated by sampling a
diffeomorphism for each dimension. These outcomes are then permuted using a
random permutation again. 
We cover a large range of values of
$\rho$,$d$, $n$ and  $\lambda$ as shown in
App.~\ref{app:synth_experiments}. 

\paragraph{Action/Temporal Sparsity Datasets.} We use the two synthetic
datasets from \citep{lachapelle2022dms} that represent the action and temporal
sparsity settings in a time series. These settings have $10$ causal
variables $z_1, \ldots, z_{10} \in [-2, 2]$ with a causal structure. The mixing function is 
an invertible neural network with Gaussian random 
weights, after which the columns in the linear layers are orthogonalized 
to ensure injectivity. 
To recover the ground-truth permutation, we
follow \citet{lachapelle2022dms} and use the test set to
calculate a permutation based on  Pearson correlations.

\paragraph{Temporal Causal3DIdent.} We evaluate our methods on an image
benchmark, TemporalCausal3DIdent \citep{LippeMLACG22}. The dataset consists of
images of 3D objects, rendered in different positions, rotation and lighting.
The causal variables are  the position $(x, y, z) \in [-2, 2]^2$, the object
rotation with two dimensions $[\alpha, \beta] \in [0, 2\pi)^2$, the hue, the
background and spotlight in $[0,2\pi)$. The object shape is a categorical
variable. We use a pretrained CITRIS-VAE encoder, which outputs a $32$
dimensional latent space and a grouping of which dimensions relate to which
causal variables. Although CITRIS-VAE provides the correct permutation of the
groups, we ignore it and perform a random permutation on the variables. We
train an MLP for each of the $32$ dimensions that predicts all causal
variables. Based on the $R^2$ scores of these regressions, we learn the group
assignments. 

\begin{table}[t]
    \caption{Label Accuracy and the OIS-metric on our downstream task. 
        The $(n)$ indicates the number of train and test points used in each
        column. We averaged the results over $10$ seeds and report the mean and
        standard deviation. The best result for each $n$ is written in \textbf{bold}.}
    \label{tbl:main-table}
    \centerfloat 
    \setlength\tabcolsep{2pt}
    \notsotiny
    \begin{tabular}{m{2.6cm}m{0.8cm}cccc@{\hskip 0.5em}@{\hskip 0.5em}cccc}
        \toprule
        & &  
        \multicolumn{4}{c}{{\small Label Acc. $\uparrow$ $(n)$}} & 
        \multicolumn{4}{c}{{\small OIS $\downarrow$ $(n)$}}\\
        \hline
        Model & Method   & 20 & 100 & 1000 & 10000 & 20 & 100 & 1000 & 10000\\
\toprule
\bottomrule
\multicolumn{10}{c}{\rule{0pt}{0.3cm}\cellcolor{gray!30}\textbf{Action Sparsity Dataset}}\\
\multirow{3}*{DMS-VAE} & {Linear} & \textbf{0.77} {\scriptsize$\pm$ \textbf{0.03}} & 0.81 {\scriptsize$\pm$ 0.01} & 0.83 {\scriptsize$\pm$ 0.01} & 0.84 {\scriptsize$\pm$ 0.01} & \textbf{0.63} {\scriptsize$\pm$ \textbf{0.01}} & 0.40 {\scriptsize$\pm$ 0.00} & 0.15 {\scriptsize$\pm$ 0.00} & 0.12 {\scriptsize$\pm$ 0.00}\\

 & {Spline} & 0.72 {\scriptsize$\pm$ 0.03} & \textbf{0.83} {\scriptsize$\pm$ \textbf{0.02}} & \textbf{0.85} {\scriptsize$\pm$ \textbf{0.01}} & 0.86 {\scriptsize$\pm$ 0.01} & 0.63 {\scriptsize$\pm$ 0.01} & 0.40 {\scriptsize$\pm$ 0.00} & 0.15 {\scriptsize$\pm$ 0.00} & 0.12 {\scriptsize$\pm$ 0.00}\\

 & {RFF} & 0.77 {\scriptsize$\pm$ 0.02} & 0.82 {\scriptsize$\pm$ 0.02} & 0.84 {\scriptsize$\pm$ 0.01} & 0.85 {\scriptsize$\pm$ 0.01} & 0.64 {\scriptsize$\pm$ 0.01} & \textbf{0.39} {\scriptsize$\pm$ \textbf{0.00}} & 0.14 {\scriptsize$\pm$ 0.00} & 0.12 {\scriptsize$\pm$ 0.00}\\

\hline
\multirow{3}*{iVAE} & {Linear} & 0.73 {\scriptsize$\pm$ 0.03} & 0.79 {\scriptsize$\pm$ 0.02} & 0.81 {\scriptsize$\pm$ 0.01} & 0.83 {\scriptsize$\pm$ 0.01} & 0.63 {\scriptsize$\pm$ 0.01} & 0.40 {\scriptsize$\pm$ 0.00} & 0.29 {\scriptsize$\pm$ 0.00} & 0.26 {\scriptsize$\pm$ 0.00}\\

 & {Spline} & 0.69 {\scriptsize$\pm$ 0.02} & 0.76 {\scriptsize$\pm$ 0.02} & 0.81 {\scriptsize$\pm$ 0.01} & 0.83 {\scriptsize$\pm$ 0.01} & 0.63 {\scriptsize$\pm$ 0.01} & 0.40 {\scriptsize$\pm$ 0.00} & 0.28 {\scriptsize$\pm$ 0.00} & 0.26 {\scriptsize$\pm$ 0.00}\\

 & {RFF} & 0.59 {\scriptsize$\pm$ 0.04} & 0.70 {\scriptsize$\pm$ 0.02} & 0.78 {\scriptsize$\pm$ 0.01} & 0.81 {\scriptsize$\pm$ 0.01} & 0.64 {\scriptsize$\pm$ 0.01} & 0.39 {\scriptsize$\pm$ 0.00} & 0.28 {\scriptsize$\pm$ 0.00} & 0.26 {\scriptsize$\pm$ 0.00}\\

\hline
 \makebox[0pt][l]{CBM \citep{koh2020conceptbottleneck}} &   & 0.60 {\scriptsize$\pm$ 0.03} & 0.60 {\scriptsize$\pm$ 0.02} & 0.73 {\scriptsize$\pm$ 0.01} & 0.88 {\scriptsize$\pm$ 0.01} & 0.92 {\scriptsize$\pm$ 0.01} & 0.43 {\scriptsize$\pm$ 0.01} & \textbf{0.11} {\scriptsize$\pm$ \textbf{0.00}} & 0.07 {\scriptsize$\pm$ 0.00}\\

 \makebox[0pt][l]{CEM \citep{zarlenga2022concept}} &   & 0.66 {\scriptsize$\pm$ 0.02} & 0.69 {\scriptsize$\pm$ 0.03} & 0.82 {\scriptsize$\pm$ 0.01} & 0.88 {\scriptsize$\pm$ 0.01} & 0.90 {\scriptsize$\pm$ 0.03} & 0.47 {\scriptsize$\pm$ 0.01} & 0.40 {\scriptsize$\pm$ 0.01} & 0.57 {\scriptsize$\pm$ 0.01}\\

 \makebox[0pt][l]{HardCBM \citep{havasi2022addressing}} &   & 0.56 {\scriptsize$\pm$ 0.03} & 0.61 {\scriptsize$\pm$ 0.01} & 0.68 {\scriptsize$\pm$ 0.01} & \textbf{0.89} {\scriptsize$\pm$ \textbf{0.00}} & 0.92 {\scriptsize$\pm$ 0.02} & 0.46 {\scriptsize$\pm$ 0.01} & 0.12 {\scriptsize$\pm$ 0.00} & \textbf{0.06} {\scriptsize$\pm$ \textbf{0.00}}\\

\bottomrule\multicolumn{10}{c}{\rule{0pt}{0.3cm}\cellcolor{gray!30}\textbf{Temporal Causal3DIdent Dataset}}\\
\multirow{3}*{CITRISVAE} & {Linear} & 0.74 {\scriptsize$\pm$ 0.06} & 0.75 {\scriptsize$\pm$ 0.06} & 0.80 {\scriptsize$\pm$ 0.04} & 0.81 {\scriptsize$\pm$ 0.04} & 0.69 {\scriptsize$\pm$ 0.02} & 0.44 {\scriptsize$\pm$ 0.01} & 0.19 {\scriptsize$\pm$ 0.00} & 0.16 {\scriptsize$\pm$ 0.00}\\

 & {Spline} & 0.74 {\scriptsize$\pm$ 0.06} & \textbf{0.80} {\scriptsize$\pm$ \textbf{0.04}} & \textbf{0.82} {\scriptsize$\pm$ \textbf{0.03}} & \textbf{0.83} {\scriptsize$\pm$ \textbf{0.03}} & 0.69 {\scriptsize$\pm$ 0.02} & 0.44 {\scriptsize$\pm$ 0.01} & \textbf{0.13} {\scriptsize$\pm$ \textbf{0.00}} & \textbf{0.09} {\scriptsize$\pm$ \textbf{0.00}}\\

 & {RFF} & 0.70 {\scriptsize$\pm$ 0.06} & 0.76 {\scriptsize$\pm$ 0.05} & 0.79 {\scriptsize$\pm$ 0.04} & 0.81 {\scriptsize$\pm$ 0.04} & \textbf{0.65} {\scriptsize$\pm$ \textbf{0.01}} & 0.43 {\scriptsize$\pm$ 0.00} & 0.16 {\scriptsize$\pm$ 0.01} & 0.13 {\scriptsize$\pm$ 0.01}\\

\hline
\multirow{3}*{iVAE} & {Linear} & \textbf{0.74} {\scriptsize$\pm$ \textbf{0.06}} & 0.76 {\scriptsize$\pm$ 0.05} & 0.78 {\scriptsize$\pm$ 0.05} & 0.80 {\scriptsize$\pm$ 0.04} & 0.69 {\scriptsize$\pm$ 0.02} & 0.44 {\scriptsize$\pm$ 0.01} & 0.17 {\scriptsize$\pm$ 0.00} & 0.14 {\scriptsize$\pm$ 0.00}\\

 & {Spline} & 0.73 {\scriptsize$\pm$ 0.06} & 0.78 {\scriptsize$\pm$ 0.04} & 0.79 {\scriptsize$\pm$ 0.04} & 0.81 {\scriptsize$\pm$ 0.04} & 0.69 {\scriptsize$\pm$ 0.02} & 0.43 {\scriptsize$\pm$ 0.04} & 0.17 {\scriptsize$\pm$ 0.00} & 0.13 {\scriptsize$\pm$ 0.00}\\

 & {RFF} & 0.69 {\scriptsize$\pm$ 0.06} & 0.75 {\scriptsize$\pm$ 0.05} & 0.78 {\scriptsize$\pm$ 0.04} & 0.80 {\scriptsize$\pm$ 0.04} & 0.65 {\scriptsize$\pm$ 0.01} & \textbf{0.42} {\scriptsize$\pm$ \textbf{0.02}} & 0.16 {\scriptsize$\pm$ 0.00} & 0.13 {\scriptsize$\pm$ 0.00}\\

\hline
 \makebox[0pt][l]{CBM \citep{koh2020conceptbottleneck}} &   & 0.57 {\scriptsize$\pm$ 0.02} & 0.53 {\scriptsize$\pm$ 0.03} & 0.68 {\scriptsize$\pm$ 0.04} & 0.78 {\scriptsize$\pm$ 0.05} & 1.00 {\scriptsize$\pm$ 0.03} & 0.48 {\scriptsize$\pm$ 0.02} & 0.25 {\scriptsize$\pm$ 0.00} & 0.18 {\scriptsize$\pm$ 0.00}\\

 \makebox[0pt][l]{CEM \citep{zarlenga2022concept}} &   & 0.64 {\scriptsize$\pm$ 0.04} & 0.67 {\scriptsize$\pm$ 0.03} & 0.72 {\scriptsize$\pm$ 0.05} & 0.78 {\scriptsize$\pm$ 0.05} & 0.97 {\scriptsize$\pm$ 0.05} & 0.51 {\scriptsize$\pm$ 0.02} & 0.36 {\scriptsize$\pm$ 0.01} & 0.49 {\scriptsize$\pm$ 0.01}\\

 \makebox[0pt][l]{HardCBM \citep{havasi2022addressing}} &   & 0.57 {\scriptsize$\pm$ 0.05} & 0.53 {\scriptsize$\pm$ 0.02} & 0.63 {\scriptsize$\pm$ 0.03} & 0.77 {\scriptsize$\pm$ 0.05} & 0.96 {\scriptsize$\pm$ 0.03} & 0.47 {\scriptsize$\pm$ 0.02} & 0.24 {\scriptsize$\pm$ 0.00} & 0.16 {\scriptsize$\pm$ 0.00}\\

\bottomrule
    \end{tabular}
    \vspace{-2em}
\end{table}

\paragraph{Results.}
We show a set of representative results in Fig.~\ref{fig:synth_data} for the
Toy Dataset, where we compare as baselines to the permutations one can learn
with Pearson or Spearman correlation, and a linear sum assignment. Our
estimator is able to reconstruct the correct permutation perfectly with only a
small number of features. It performs well for a broad range of regularization
parameters. In the wellspecified case we calculate the $\lambda_0$ parameter
and see that the estimator performs well around this value. The dimension
dependence is almost negligible, which was predicted by the dimension appearing
only in the log factor in Thm.~\ref{thm:group_structure}. The estimator works
well with few data points and even better than theory predicts, as it has a low
error even with high correlation, as shown in Fig.~\ref{fig:synth_data}.
Spearman correlation also performs well in some simple settings, which suggests
that the relation is indeed monotonic in these settings. On the other hand,
there are no theoretical guarantees for this baseline and it cannot be used for
multi-dimensional representations, for which our estimator still provides a
principled approach. We report results with similar trends for RFF and the
kernelized approach in App.~\ref{app:additional-toy}. To test the robustness of
our estimation to the case in which CRL methods do not provide a complete, but
only partial disentanglement, we also show an ablation where we provide
mixtures of pairs of causal variables as input to our algorithm, showing that
it still recovers the correct permutation for each pair. Similar trends apply
also to the image dataset, as reported in App.~\ref{app:additional-dl}.

A selection of results for the downstream classification task for the action
sparsity dataset and temporal Causal3DIdent dataset are in
Tab.~\ref{tbl:main-table}. Our estimator consistently scores high in terms of
Label Acc. and OIS, while requiring less labels. For image data, our estimator
beats the baseline for every number of datapoints. In
App.~\ref{app:additional-dl} we report the complete results, which show that
the baselines perform well with regards to the mean concept accuracy in the
Action/Temporal sparsity datasets, but our estimator still performs better with
the smallest number of labels and with less computation power to calculate. Our
estimators consistently achieve a perfect NIS \citep{zarlenga2023towards}
score, which measures the impurities distributed across the concept
representations.

Finally, in Fig.~\ref{fig:times-both-main} we show the execution times of our
estimator compared to several baseline methods. We report the times in both the
continuous experiments and binary classification experiments conducted with the
Temporal Causal3DIdent dataset. The baseline in the continuous setting is given
by training a neural network for each causal variable. Each neural network
learns how to predict the causal variable from the learned encodings, and the
$R^2$-scores are used to construct a matching. The time that is reported is the
time that is needed to estimate the matching and the training, if needed. In
the binary classification experiments, the baselines are given by the CBM, CEM,
and HardCBM models. The time that is reported is the time needed to either
estimate the matching and learn the classification or to train one of the
concept-based models. We see that all versions of our estimator require
significantly less computation time than the baselines. 

\section{Conclusions and Discussion}
\label{sec:conclusion}
\begin{figure}[t]
    \def\incFigure#1#2{%
        \node{\includegraphics[
            scale=0.9,
            trim={#2cm 0 0 0},
            clip
        ]{./figs/citris/#1.pdf}};
        }
    \begin{tikzpicture}
        \matrix[row sep=0cm, column sep=0.2cm] (plotmatrix) {
            \incFigure{main_times}{0}&
            \incFigure{main_times_bin}{0.7}&\\
        };

        \node[above right=0cm and 0.6cm of plotmatrix.north, anchor=east]
            {\includegraphics[%
                scale=0.75, 
                trim={0.2cm 0.2cm 0 0},
                clip
            ]{./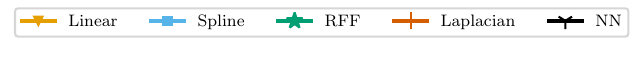}};

        \node[above right=0cm and 0.4cm of plotmatrix.north, anchor=west]
            {\includegraphics[%
                scale=0.75, 
                trim={0.2cm 0.2cm 0 0},
                clip
            ]{./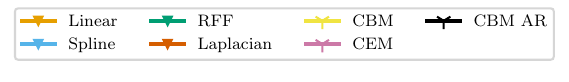}};
    \end{tikzpicture}
    \caption{Execution times for our estimators and several baseline models on
        the Temporal Causal3DIdent dataset. \textbf{Left:} The causal variables
        are continuous-valued. The baseline is given by training several neural
        networks and the matching is based on the $R^2$-scores. We report the
        time needed to estimate the matching and train the neural networks.
        \textbf{Right:} The causal variables are binarized and a downstream
        classification task is added. We report the time needed to estimate the
        matching and learn the classification task or the time needed to train
    the concept-based models.}
    \label{fig:times-both-main}
\end{figure}

We propose a framework that provides theoretical guarantees on learning of
concepts by leveraging causal representation learning (CRL). We provide two
estimators that are able to learn the alignment between the learned
representations and the concepts: a linear estimator with finite sample
guarantees and a non-parametric kernelized estimator with asymptotic
guarantees. We test our methods on CRL benchmarks and show they perform even
better than the theory predicted. 

While our work proposes a promising research direction, several limitations
remain. For example, we assume that human concepts coincide with the causal
variables recovered by the CRL methods, which limits the framework's
applicability. While our framework can also consider settings in which only
blocks of causal variables are identifiable, as opposed to each individual
variable, then the alignment can be only learned between these blocks and the
corresponding blocks of concepts. To generalize our framework to different
levels of coarse-grained human concepts, it would be interesting to incorporate
ideas from  \textit{causal abstraction} \citep{rubenstein2017abstraction,
geiger2021causal, beckers2020approximate} to extend this analysis to cases in
which human concepts are abstractions of the underlying causal system.
Moreover, while in the current framework we focus on CRL methods in order to
get theoretical guarantees on the identifiability of latent variables from
high-dimensional observations, we do not use any causal semantics, so in
principle our framework could be applied also to other types of methods that
provide similar theoretical guarantees.

A theoretical limitation of our analysis is the assumption of low correlation
between the learned representations. In our experiments we find that our
methods still work with significantly stronger correlations than assumed in our
theoretical results, so we believe there may be room here to strengthen the
theory. This might also allow us to relax our assumption that CRL methods
achieve perfect disentanglement of the causal variables, which is not realistic
in many domains. Finally, although we consider discrete concepts in our
evaluation, we do not provide a theoretical analysis for them, because current
CRL methods cannot identify discrete variables, or can only do so in
specific settings \citep{kong2024learning}. A possible way to extend our
theoretical results to this setting would be to consider previous
results for the Group Lasso for logistic regression \citep{meier2008group}.

\clearpage
\acks{We thank SURFsara for the support in using the Snellius Compute Cluster and
    Robert Jan Schlimbach for the technical support in parallelizing most of the
    experiments. Van Erven was supported by the Netherlands Organization for
    Scientific Research (NWO) under grant numbers VI.Vidi.192.095 and
\url{https://doi.org/10.61686/OWREN85146}.}

\IfFileExists{certifiedxai.bib}{
    \bibliography{certifiedxai.bib}
}{
    \bibliography{../certifiedxai.bib}
}


\newpage
\appendix

\section{Proofs for Linear Regression Alignment Learning}
\label{app:proofs_params}
In this section we provide proof details and the algorithm for
Section~\ref{sec:param_learning}. As discussed there, we extend
Theorem~\ref{thm:group_structure} to a more general result
Theorem~\ref{thm:group_structure_general} that allows for blocks of causal
variables corresponding to a single concept. To state the general result we
will redefine our model and introduce additional notation. The variables $C_i
\in \reals$ and $M_j \in \reals^{k_j}$ now live in potentially different
spaces. Let $C = (C_1,\ldots,C_d)^\top \in \reals^d$ and $M =
(M_1,\ldots,M_d)^\top \in \reals^k$, where $k = \sum_{j=1}^d k_j \geq d$. Let
$K_j = \{\sum_{a = 1}^{j-1} k_a + b | b \in \{1,\ldots,k_j\}\}$ denote the
subset of indices in $M$ that correspond to the block $M_j$. The permutation
that we want to recover is~$\pi$.

Each dimension in the $M$ variable can be transformed through a separate
feature map $\varphi_t\colon \mathbb{R} \to \mathbb{R}^{p_t}$ that can be
different for each $t=1,\ldots,k$. We will denote the total feature vector by
\begin{align*}
    \varphi(M) = \begin{bmatrix} 
        \varphi_1(M_1) \\
        \vdots \\ 
        \varphi_k(M_k) 
    \end{bmatrix} 
.\end{align*}

The grouped features will be denoted by 
$\varphi(M)^{j} = (\varphi_t(M_t)  \mid  t \in K_j)$.
Define the average feature set size 
as $\overline{p} = \frac{1}{k}\sum_{t=1}^{k} p_t$. The model is described by
\begin{align*}
    C_{i} 
    &= \varphi(M) \bstar_i + \epsilon_i, 
    \quad
    \bstar_i \in \mathbb{R}^{k \overline{p}}, 
    \epsilon_i \sim \mathcal{N}(0, \sigma^2)
.\end{align*}
For the actual regression task, we can define data matrices again. The matrix
$\Hvec$ will be defined as in the main text and $\Phi$ now becomes an $n \times
k\overline{p}$ matrix, in which all feature vectors $\varphi(M^{(\ell)})$ are
stacked. With $\bm{\epsilon}_i$ denoting $n$ independently draw $\cN(0,
\sigma^2)$ variables, this results in the relation
\begin{align*}
    \Hvec_i = \Phi \bstar_i + \bm{\epsilon_i}    
.\end{align*}
The $\bstar_i$ again has a sparse structure, because only the
parameters corresponding to $\varphi(M)^{j}$ should be non-zero.  Let
the indices of these parameters be denoted by $G_{j}$. Alternatively, 
this $G_j$ is defined through $\varphi(M)_{G_j} = \varphi(M)^{j}$. 
Thus, in this setting
we again have $d$ groups all denoted by $G_j$. 
To ease notation we set again $\beta^{j}_i = ((\beta_i)_t  \mid t \in G_j)$. 
The definitions of the norm $\|\cdot\|_{2, \infty}$ 
and covariance matrices, 
$\hS_{jj'} = \tfrac{1}{n}\varphi(M)_{G_j}^{\top}\varphi(M)_{G_{j'}}
= \tfrac{1}{n}\Phi_j^{\top}\Phi_{j'}$,
are altered in accordance with these groups. As the groups 
can now be of different size, we have to change the  
definition of the $\|\cdot\|_{2, 1}$-norm to take the different group sizes
into account,
\begin{align*}
    \|\beta\|_{2, 1} = \sum_{j=1}^{d} \|\beta^{j}\|\sqrt{p^{j}}  
,\end{align*}
where $p^{j} = \sum_{t \in K_j}p_t$. 
The loss function that we want to optimize to
estimate $\bstar_i$ has the same form as before
\begin{align}\label{eq:general_group_obj}
    \hbeta_{i}
        &= \argmin_{\beta \in \mathbb{R}^{k \overline{p}}}
        \tfrac{1}{n}\|\Hvec_i - \Phi\beta \|^2 + 
        \lambda \|\beta\|_{2, 1}
        =\argmin_{\beta \in \mathbb{R}^{k \overline{p}}}\tfrac{1}{n}\|\Hvec_i - \Phi\beta \|^2 + 
        \lambda \sum_{j=1}^{d} \|\beta^{j}\|\sqrt{p^{j}}  
.\end{align}
The optimality conditions for any solution, $\hbeta_i$, for this convex 
optimization problem are given by
\begin{align}
    \tfrac{1}{n}(\Phi^{\top}(\Hvec_i - \Phi\hbeta_i)^{j} 
    &= \frac{\lambda\sqrt{p^{j}}}{2} \frac{\hbeta^{j}}{\|\hbeta^{j}\|} 
    & \text{ if } \hbeta_i^{j} \neq 0,\label{eq:opt_cond_1}\\
    \tfrac{1}{n} \|\Phi^{\top}(\Hvec_i - \Phi\hbeta_i)\|^2 
    &\le  \frac{\lambda \sqrt{p^{j}}}{2}
    & \text{ if } \hbeta_i^{j} = 0\label{eq:opt_cond_2}
.\end{align}

To ensure that our results hold even in the case where $n < pd$, we
introduce a standard assumption on the data from the high-dimensional
statistics literature. Intuitively, this assumption ensures that the
data is ``variable enough'' in the directions that matter. 
\begin{asump}\label{asump:re_1}
    The Restricted Eigen Value (RE($1$)) 
    is satisfied by the data matrix $\Phi \in \mathbb{R}^{n \times  k \overline{p}}$
    if there exists a $\kappa > 0$
    such that for all $\Delta \in \mathbb{R}^{k \overline{p}} \setminus \{0\} $  
    and $j=1, \ldots, d$
    with $\sum_{i\neq j}\|\Delta^{i}\| \sqrt{p^i}\le 3\|\Delta^j\|\sqrt{p^j}$
    it holds that
    \begin{align*}
        \frac{\|\Phi \Delta\|}{\sqrt{n}\|\Delta^{j}\| }  \mid 
         \ge \kappa
    .\end{align*}
\end{asump}
This property is satisfied for any $\Delta \in \mathbb{R}^{k \overline{p}} \setminus \{0\} $
if $\hS = \tfrac{1}{n} \Phi^{\top} \Phi$ has a positive minimal eigenvalue. Let
$\lambda_{\text{min}} >0$ be the minimal eigenvalue of $\hS$, then
\begin{align*}
    \|\Phi\Delta\|^2 
    = \Delta^{\top} \Phi^{T}\Phi \Delta
    = n\Delta^{\top} \hS \Delta 
    \ge n\lambda_{\text{min}} \Delta^{\top}\Delta
    = n\lambda_{\text{min}}\|\Delta\|^2
.\end{align*}
Now divide by $n$ and take the square root on both sides. 
This gives us
\begin{align*}
    \frac{\|\Phi\Delta\|}{\sqrt{n}} 
    &\ge \sqrt{\lambda_{\text{min}}}\|\Delta\|
    = \sqrt{\lambda_{\text{min}}} \sqrt{
    \sum_{i=1}^{d} \|\Delta^{i}\|^2} 
    \ge \sqrt{\frac{\lambda_{\text{min}}}{d}}  
    \sum_{i=1}^{d} \|\Delta^{i}\|
    \ge \sqrt{\frac{\lambda_{\text{min}}}{d}}  \|\Delta^{j}\|
.\end{align*}
The second inequality follows from 
an application of Jensen's inequality. 
Dividing both sides by $\|\Delta^{j}\|$ gives the desired
result.

The matrix $\hS$ is the empirical covariance matrix and
will be positive definite almost surely whenever $n \ge k\overline{p}$ and hence RE(1) will be
satisfied if  $n \ge k\overline{p}$. 

Finally, define 
$\pmin = \min_{j=1, \ldots d} p^{j}$ and $\pmax = \max_{j=1, \ldots, d}p^{j}$.
The following theorems and proofs are adapted from Chapter~$8$ in 
\citet{buhlmannVG11} and sections $3$ and $5$ in \citet{lounici2011oracle}.
\begin{theorem}\label{thm:group_error_1}
    Assume that for all $\ell=1, \ldots, n$, $\epsilon^{(\ell)}_i \sim \mathcal{N}(0, \sigma^2)$ independently,  $\sigma^2 > 0$, 
    the RE(1) condition is satisfied with $\kappa > 0$
    and consider the Group Lasso estimator
    \begin{align*}
        \hbeta_i = \argmin_{\beta \in \mathbb{R}^{k \overline{p}}} 
            \tfrac{1}{n} \|\Hvec_i - \Phi\beta\|^2 + 
            \lambda \|\beta\|_{2, 1}
    ,\end{align*}
    where $\lambda \ge 4 \lambda_0$
    with
    \begin{align*}
        \lambda_0 = \frac{2\sigma}{\sqrt{n} }
        \sqrt{1 + 
            \sqrt{\frac{8\log(d / \delta) }{\pmin}} 
            +
            \frac{8\log(d / \delta)}{\pmin}
        }.
    \end{align*}
    Then, for any $\delta \in (0,1)$, with probability at least $1 - \frac{\delta}{d}$,
    \begin{align}
        \label{eq:group_error_1}
        \tfrac{1}{n}\|\Hvec_i - \Phi \hbeta_i\|^2
            + 
            \lambda \|\hbeta_i - \bstar_i\|_{2, 1}
        &\le 
        \frac{24 \lambda^2 p^{\pi(i)}}{\kappa^2} \\
        \|(\hS(\hbeta - \bstar))^{j}\|
        &\le \lambda \sqrt{p^{j}} \qquad \text{ for all } j=1,\ldots,d 
            \label{eq:tech_result_2}\\
        \|\hbeta - \bstar\|_{2, 1}
        &\le
        \frac{24\lambda p^{\pi(i)}}{\kappa^2} \label{eq:group_error_3}
    .\end{align}
\end{theorem}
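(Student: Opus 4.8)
The plan is to run the classical oracle‑inequality argument for the Group Lasso (cf.\ \citet{buhlmannVG11}, Ch.~8, and \citet{lounici2011oracle}, Secs.~3 and~5), adapted to blocks of possibly unequal size. Throughout, set $v = \hbeta_i - \bstar_i$ and let $S = \{\pi(i)\}$ be the singleton set of active blocks, so $\|v_S\|_{2,1} = \sqrt{p^{\pi(i)}}\,\|v^{\pi(i)}\|$; once a ``good noise event'' is fixed, every remaining step is deterministic.

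First I would set up the noise event. Because the design within each block is standardized so that $\hS_j = I$, the group-$j$ part $(\Phi^{\top}\bm{\epsilon}_i)^j = \Phi_j^{\top}\bm{\epsilon}_i$ is $\mathcal{N}(0, n\sigma^2 I)$, hence $\tfrac{1}{n}\|(\Phi^{\top}\bm{\epsilon}_i)^j\|^2$ has a $\tfrac{\sigma^2}{n}\chi^2_{p^j}$ distribution. Applying a Laurent--Massart tail bound for $\chi^2_{p^j}$ with parameter $t = \log(d^2/\delta)$ and using $\log(d^2/\delta)\le 2\log(d/\delta)$ together with $p^j \ge \pmin$ shows that $\tfrac{2}{n}\|(\Phi^{\top}\bm{\epsilon}_i)^j\| \le \lambda_0\sqrt{p^j}$ with probability at least $1 - \delta/d^2$ --- this is exactly the computation the formula for $\lambda_0$ is tailored to. A union bound over $j=1,\ldots,d$ then produces an event $\mathcal{E}_i$, of probability at least $1-\delta/d$, on which $\tfrac{2}{n}\|(\Phi^{\top}\bm{\epsilon}_i)^j\|\le\lambda_0\sqrt{p^j}$ simultaneously for all $j$.

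Next, on $\mathcal{E}_i$, I would combine the basic inequality with this noise bound. Optimality of $\hbeta_i$ and $\Hvec_i = \Phi\bstar_i + \bm{\epsilon}_i$ give
\begin{align*}
    \tfrac{1}{n}\|\Phi v\|^2 + \lambda\|\hbeta_i\|_{2,1}
    \le \tfrac{2}{n}\bm{\epsilon}_i^{\top}\Phi v + \lambda\|\bstar_i\|_{2,1}.
\end{align*}
Estimating the inner product block by block with Cauchy--Schwarz and $\mathcal{E}_i$ gives $\tfrac{2}{n}\bm{\epsilon}_i^{\top}\Phi v \le \lambda_0\|v\|_{2,1}$, and splitting $\|\hbeta_i\|_{2,1}$ over $S$ and $S^c$ (using that $\bstar_i$ is supported on $S$ and the triangle inequality) yields, for $\lambda \ge 4\lambda_0$, both the cone condition $\|v_{S^c}\|_{2,1}\le 3\|v_S\|_{2,1}$ and an inequality of the form $\tfrac{1}{n}\|\Phi v\|^2 + \lambda\|v\|_{2,1}\le c_1\lambda\|v_S\|_{2,1}$. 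The cone condition is precisely the hypothesis of RE(1) with index $\pi(i)$, so Assumption~\ref{asump:re_1} gives $\tfrac{1}{n}\|\Phi v\|^2 \ge \kappa^2\|v^{\pi(i)}\|^2 = \kappa^2\|v_S\|_{2,1}^2/p^{\pi(i)}$. Inserting this into the previous inequality and solving the resulting quadratic in $\|v_S\|_{2,1}$ (equivalently in $\sqrt{\tfrac1n\|\Phi v\|^2}$) bounds $\|v_S\|_{2,1}$ by a constant times $\lambda p^{\pi(i)}/\kappa^2$; tracking constants then gives \eqref{eq:group_error_1} (with the in-sample prediction error $\tfrac1n\|\Phi(\hbeta_i-\bstar_i)\|^2$) and, after dropping the nonnegative prediction term and dividing by $\lambda$, \eqref{eq:group_error_3}.

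Finally, \eqref{eq:tech_result_2} comes straight from the KKT conditions \eqref{eq:opt_cond_1}--\eqref{eq:opt_cond_2}: writing $\tfrac1n\Phi^{\top}(\Hvec_i - \Phi\hbeta_i) = \tfrac1n\Phi^{\top}\bm{\epsilon}_i - \hS v$, each group-$j$ block of the left side has norm at most $\tfrac{\lambda}{2}\sqrt{p^j}$ (in either KKT case), while on $\mathcal{E}_i$ the matching block of $\tfrac1n\Phi^{\top}\bm{\epsilon}_i$ has norm at most $\tfrac{\lambda_0}{2}\sqrt{p^j}\le\tfrac{\lambda}{8}\sqrt{p^j}$, so the triangle inequality gives $\|(\hS v)^j\| \le \lambda\sqrt{p^j}$. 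I expect the first step to be the main obstacle: choosing $\lambda_0$ so that the $\chi^2$ concentration, the gap between $\pmin$ and each $p^j$, and the union-bound budget $\delta/d$ all line up. Once $\mathcal{E}_i$ is secured, the rest is the standard deterministic Group Lasso analysis with careful bookkeeping of the multiplicative constants.
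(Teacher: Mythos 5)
Your proposal is correct and follows essentially the same route as the paper: the same $\chi^2$ concentration plus union bound to build the noise event with budget $\delta/d$ (which is exactly where the factor $8\log(d/\delta)$ in $\lambda_0$ comes from), the standard basic-inequality/cone-condition/RE(1) argument for \eqref{eq:group_error_1} that the paper delegates to Theorem~8.1 of \citet{buhlmannVG11}, and the identical KKT-plus-triangle-inequality step for \eqref{eq:tech_result_2}. Your reading of \eqref{eq:group_error_1} as a bound on the in-sample prediction error $\tfrac1n\|\Phi(\hbeta_i-\bstar_i)\|^2$ is also the intended one.
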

This theorem offers us a several things. Equation~\ref{eq:group_error_1} gives
us a bound on the true prediction error. The last two equations,
(\ref{eq:tech_result_2},~ \ref{eq:group_error_3}), are needed to prove that we
find accurate parameter values using the Group Lasso approach. The fact that
the last equation gives a bound in the $(2, 1)$-norm, allows us to use a
duality argument later on to provide a bound on the $(2, \infty)$-norm of the
difference between the learned and true parameter. Knowing that only one of the
groups has to be non-zero combined with this uniform bound enables us to
conclude that the correct group has been identified in the proof of
Theorem~\ref{thm:group_structure_general}.
\begin{proof}
    First let us define for every $j=1,\ldots,d$ the random events
    $\mathcal{A}_j = \{\tfrac{1}{n}\|(\Phi^{\top}\bm{\epsilon}_i)^{j}\| 
    \le \tfrac{\lambda\sqrt{p^{j}}}{2}\}$
    and their intersection $\mathcal{A} = \bigcap_{j=1}^{d}\mathcal{A}_j$.
    Most importantly, we see from Lemma~\ref{lem:feature_concentration}
    that this event has probability at least $1 - \frac{\delta}{d}$.
    We get the $1 / d$ factor by using  $\widetilde{\delta} = \frac{\delta}{d}$ 
    in Lemma~\ref{lem:feature_concentration} and noticing that 
    this only adds a factor of $2$ in the log terms.
    The first assertion~\eqref{eq:group_error_1} is true on 
    the event $\mathcal{A}$ and follows from 
    the proof of Theorem~8.1 in \citep{buhlmannVG11} and noting
    that in our setting their oracle parameter is given by our $\bstar_i$
    and that $\textup{\textbf{f}}_0 = \Phi \bstar_i$. 
    
    Moving on towards \eqref{eq:tech_result_2}, by the optimality
    condition \eqref{eq:opt_cond_1} and  \eqref{eq:opt_cond_2} we have
    for each $j=1,\ldots, d$
    \begin{align*}
        \tfrac{1}{n} \|(\Phi( \Hvec_i - \Phi\hbeta)^{j}\| 
        \le  
        \frac{\lambda \sqrt{p^{j}}}{2}
    .\end{align*}
    Let us rewrite the expression in \eqref{eq:tech_result_2} into
    \begin{align*}
        \|(\hS(\hbeta_i - \bstar_i))^{j}\| 
        = 
        \tfrac{1}{n}\|(\Phi^{\top}(\Phi\hbeta - \Phi\bstar))^{j}\|
    .\end{align*}
    Substituting $\Phi\bstar_i = \Hvec_i - \bm{\epsilon}_i$ into this expression
    gives 
    \begin{align*}
        \|(\hS(\hbeta_i - \bstar_i))^{j}\|
        &\le 
            \tfrac{1}{n} \|(\Phi^{\top}(\Phi\hbeta_i - \Hvec_i))^{j}\|
            +
            \tfrac{1}{n}\|(\Phi^{\top}\bm{\epsilon}_i)^{j}\|\\
        &\le \frac{\lambda \sqrt{p^{j}}}{2}   + \frac{\lambda \sqrt{p^{j}} }{2}
        = \lambda\sqrt{p^{j}} 
    .\end{align*}
    Note that this inequality only holds on $\mathcal{A}$.
    
    The final assertion is a direct consequence of the first, 
    \begin{align*}
        \lambda \|\hbeta_i - \bstar_i\|_{2, 1} 
        &\le
          \tfrac{1}{n}\|\Phi(\hbeta_i - \bstar_i)\|^2
          + 
          \lambda \|\hbeta - \bstar\|_{2, 1} 
        \le 
        \frac{24 \lambda^2p^{\pi(i)}}{\kappa^2}\\
        \|\hbeta_i - \bstar_i\|_{2, 1} 
        &\le 
        \frac{24 \lambda p^{\pi(i)}}{\kappa^2} 
    .\end{align*}
\end{proof}
To state and prove the general 
version of Theorem~\ref{thm:group_structure} we also need to generalize
Assumption~\ref{asump:structure}. 
\begin{asump}\label{asump:structure_general}
    There exists some constant $a > 1$ such that for any $j\neq j'$, 
    it holds that 
    \begin{align}
        \max_{1 \le t\le \min(p^{j}, p^{j'})}|(\hS_{jj'})_{t t}| 
        \le \frac{1}{14a}\sqrt{\frac{\pmin}{\pmax}}
    \end{align}
    and
    \begin{align}
        \max_{1 \le t\le p^{j}, 1 \le t' \le p^{j'}, t\neq t'}|(\hS_{jj'})_{t t'}| 
        \le \frac{1}{14a}\sqrt{\frac{\pmin}{\pmax}}
        \frac{1}{\sqrt{p^{j} p^{j'}} }.
    \end{align}
\end{asump}
The previous assumption is stronger than the RE($1$ ) property, as shown
by the
following lemma:
\begin{lemma}\label{lem:kappa_alpha}
    Let Assumption~\ref{asump:structure_general} be satisfied. Then RE($1$) is satisfied
    with $\kappa = \sqrt{1 - 1 / a} $.
\end{lemma}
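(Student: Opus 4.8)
The plan is to lower bound the quadratic form $\tfrac{1}{n}\|\Phi\Delta\|^2 = \Delta^{\top}\hS\Delta$ directly, for an arbitrary $\Delta\neq 0$ and any index $j$ satisfying the cone condition $\sum_{i\neq j}\|\Delta^{i}\|\sqrt{p^{i}}\le 3\|\Delta^{j}\|\sqrt{p^{j}}$ of Assumption~\ref{asump:re_1}. First I would decompose $\Delta$ into the part $\Delta^{(j)}$ supported on the block $G_j$ (equal to $\Delta^{j}$ there, zero elsewhere) and the remainder $\Delta^{-j}=\Delta-\Delta^{(j)}$, and expand
\[
    \Delta^{\top}\hS\Delta
    = (\Delta^{j})^{\top}\hS_j\Delta^{j}
    + 2\sum_{i\neq j}(\Delta^{j})^{\top}\hS_{ji}\Delta^{i}
    + (\Delta^{-j})^{\top}\hS\Delta^{-j}.
\]
Using the within-block standardization $\hS_j=I$, the first term equals $\|\Delta^{j}\|^{2}$, and since $\hS=\tfrac1n\Phi^{\top}\Phi$ is positive semidefinite the last term equals $\tfrac1n\|\Phi\Delta^{-j}\|^{2}\ge 0$ and can be dropped. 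This leaves
\[
    \Delta^{\top}\hS\Delta \ge \|\Delta^{j}\|^{2} - 2\sum_{i\neq j}\bigl|(\Delta^{j})^{\top}\hS_{ji}\Delta^{i}\bigr|.
\]

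The crux is to bound each cross term using Assumption~\ref{asump:structure_general}. Writing $(\Delta^{j})^{\top}\hS_{ji}\Delta^{i}=\sum_{t,t'}\Delta^{j}_t(\hS_{ji})_{tt'}\Delta^{i}_{t'}$, I would split off the ``diagonal'' contributions $t=t'$ from the off-diagonal ones. On the diagonal part, the entrywise bound $|(\hS_{ji})_{tt}|\le\tfrac{1}{14a}\sqrt{\pmin/\pmax}$ together with Cauchy--Schwarz gives a bound $\tfrac{1}{14a}\sqrt{\pmin/\pmax}\,\|\Delta^{j}\|\,\|\Delta^{i}\|$. On the off-diagonal part, the bound $|(\hS_{ji})_{tt'}|\le\tfrac{1}{14a}\sqrt{\pmin/\pmax}\,(p^{j}p^{i})^{-1/2}$ combined with the elementary estimates $\sum_t|\Delta^{j}_t|\le\sqrt{p^{j}}\,\|\Delta^{j}\|$ and $\sum_{t'}|\Delta^{i}_{t'}|\le\sqrt{p^{i}}\,\|\Delta^{i}\|$ yields the same bound $\tfrac{1}{14a}\sqrt{\pmin/\pmax}\,\|\Delta^{j}\|\,\|\Delta^{i}\|$. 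Adding the two, $|(\Delta^{j})^{\top}\hS_{ji}\Delta^{i}|\le\tfrac{1}{7a}\sqrt{\pmin/\pmax}\,\|\Delta^{j}\|\,\|\Delta^{i}\|$.

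Substituting back and invoking the cone condition then finishes the argument: summing over $i\neq j$,
\[
    \Delta^{\top}\hS\Delta \ge \|\Delta^{j}\|^{2} - \tfrac{2}{7a}\sqrt{\pmin/\pmax}\,\|\Delta^{j}\|\sum_{i\neq j}\|\Delta^{i}\|,
\]
and since $\sum_{i\neq j}\|\Delta^{i}\|\le \pmin^{-1/2}\sum_{i\neq j}\|\Delta^{i}\|\sqrt{p^{i}}\le 3\,\pmin^{-1/2}\|\Delta^{j}\|\sqrt{p^{j}}\le 3\sqrt{\pmax/\pmin}\,\|\Delta^{j}\|$, we obtain
\[
    \Delta^{\top}\hS\Delta \ge \Bigl(1-\tfrac{6}{7a}\Bigr)\|\Delta^{j}\|^{2} \ge \Bigl(1-\tfrac1a\Bigr)\|\Delta^{j}\|^{2}.
\]
Dividing by $\|\Delta^{j}\|^{2}$ and taking square roots gives $\|\Phi\Delta\|/(\sqrt n\,\|\Delta^{j}\|)\ge\sqrt{1-1/a}$, which is exactly RE($1$) with $\kappa=\sqrt{1-1/a}$. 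I expect the main obstacle to be the careful bookkeeping of the dimension-dependent weights $\sqrt{\pmin/\pmax}$ and $(p^{j}p^{i})^{-1/2}$ in the entrywise bounds, verifying that they cancel cleanly against the $\ell_1$-to-$\ell_2$ conversions and the weighted cone condition; the rest is the standard convexity/positive-semidefiniteness argument in the spirit of \citet{lounici2011oracle}.
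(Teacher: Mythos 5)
Your proof is correct. The paper itself does not prove this lemma at all --- its ``proof'' is a one-line citation to Lemma~B.3 of \citet{lounici2011oracle} --- so your argument is a genuinely self-contained verification of the claim in the paper's own notation. The structure you use is the standard mutual-coherence-implies-RE argument: split $\Delta$ into the block-$j$ part and the rest, use $\hS_j=I$ for the leading term, drop the nonnegative $\tfrac1n\|\Phi\Delta^{-j}\|^2$, and control the cross terms entrywise. The bookkeeping all checks out: the diagonal entries give $\tfrac{1}{14a}\sqrt{\pmin/\pmax}\,\|\Delta^j\|\|\Delta^i\|$ via Cauchy--Schwarz, the off-diagonal entries give the same after the $\ell_1$-to-$\ell_2$ conversions cancel the $(p^jp^i)^{-1/2}$ weight, and the weighted cone condition converts $\sum_{i\neq j}\|\Delta^i\|$ into $3\sqrt{\pmax/\pmin}\,\|\Delta^j\|$ so that the $\sqrt{\pmin/\pmax}$ factors cancel, leaving $1-\tfrac{6}{7a}\ge 1-\tfrac1a$. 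One small point worth making explicit if you write this up: the within-block standardization $\hS_{jj}=I$ is only stated in the main text for the scalar case, but it is implicitly assumed throughout the appendix (e.g.\ in Lemma~\ref{lem:feature_concentration} and in the definition of $\widetilde{\Sigma}_{jj}$), so your use of it is consistent with the paper; and the two entrywise bounds in Assumption~\ref{asump:structure_general} together cover every entry of $\hS_{ji}$, which is what your split into $t=t'$ and $t\neq t'$ relies on.
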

\begin{proof}
    This is Lemma~B.3 in \citep{lounici2011oracle}.
\end{proof}
The following theorem is a modification of Theorem~$5.1$ by \citet{lounici2011oracle}, 
where some adaptations are made to adjust the result to our setting.
\begin{theorem}\label{thm:group_structure_general}
    Let Assumption~\eqref{asump:structure_general} hold, for $\ell=1,\ldots,d$,
    $\epsilon^{\ell}_i \sim
    \mathcal{N}(0,\sigma^2)$ independently, $\sigma^2 > 0$, and 
    with $\delta \in (0, 1)$  set $\lambda \ge 4\lambda_0$, where
    \begin{align*}
        \lambda_0 = \frac{2\sigma}{\sqrt{n} }
        \sqrt{1 + 
            \sqrt{\frac{8\log(d / \delta)}{\pmin}} 
            +
            \frac{8\log(d / \delta)}{\pmin}
        }.
    \end{align*}
    Furthermore, set $c = \left( 1 + \tfrac{24}{7(a - 1)} \right) $. 
    Then, for any $\delta \in (0, 1)$, with probability at least $1 - \frac{\delta}{d}$, any solution 
    $\hbeta_i$ of \eqref{eq:general_group_obj} satisfies
    \begin{align}
        \|\hbeta_i - \bstar_i\|_{2, \infty} \le c\lambda \sqrt{\pmax} 
        \label{eq:param_max_bound}
    .\end{align}
    If, in addition, $\|(\bstar_i)^{\pi(i)}\| > 2c \lambda\sqrt{\pmax}$, 
    then \eqref{eq:param_max_bound} implies that
    \begin{align*}
        \hJ_i = \argmax_{j=1, \ldots, d} \|\hbeta_i^{j}\|
    \end{align*}
    estimates $\pi(i)$ correctly.
\end{theorem}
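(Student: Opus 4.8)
The plan is to run the argument on the high-probability event $\mathcal{A}$ introduced in the proof of Theorem~\ref{thm:group_error_1}: it has probability at least $1-\delta/d$, and on it both the coordinate-wise bound \eqref{eq:tech_result_2}, $\|(\hS(\hbeta_i-\bstar_i))^j\|\le\lambda\sqrt{p^j}$ for all $j$, and the $(2,1)$-norm bound \eqref{eq:group_error_3}, $\|\hbeta_i-\bstar_i\|_{2,1}\le 24\lambda p^{\pi(i)}/\kappa^2$, hold. By Lemma~\ref{lem:kappa_alpha}, Assumption~\ref{asump:structure_general} gives RE($1$) with $\kappa^2=1-1/a=(a-1)/a$, so the latter becomes $\|\hbeta_i-\bstar_i\|_{2,1}\le 24a\lambda p^{\pi(i)}/(a-1)$. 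Writing $\Delta=\hbeta_i-\bstar_i$ and using that after the within-block standardization $\hS_j=I$ for every block, we have $(\hS\Delta)^j=\Delta^j+\sum_{j'\ne j}\hS_{jj'}\Delta^{j'}$, hence $\|\Delta^j\|\le\|(\hS\Delta)^j\|+\sum_{j'\ne j}\|\hS_{jj'}\Delta^{j'}\|\le\lambda\sqrt{\pmax}+\sum_{j'\ne j}\|\hS_{jj'}\Delta^{j'}\|$.

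The core of the proof is to bound the cross terms $\sum_{j'\ne j}\|\hS_{jj'}\Delta^{j'}\|$ using Assumption~\ref{asump:structure_general}. First I would estimate a single block: the $t$-th coordinate of $\hS_{jj'}\Delta^{j'}$ is $\sum_{t'}(\hS_{jj'})_{tt'}\Delta^{j'}_{t'}$, and separating the diagonal term $t'=t$, applying the two entry bounds of Assumption~\ref{asump:structure_general}, and using $\sum_{t'}|\Delta^{j'}_{t'}|\le\sqrt{p^{j'}}\|\Delta^{j'}\|$ shows each coordinate is at most $\tfrac{1}{14a}\sqrt{\pmin/\pmax}\,\bigl(|\Delta^{j'}_t|+\|\Delta^{j'}\|/\sqrt{p^j}\bigr)$; squaring, summing over the $p^j$ coordinates, and using $(x+y)^2\le 2x^2+2y^2$ yields $\|\hS_{jj'}\Delta^{j'}\|\le\tfrac{1}{7a}\sqrt{\pmin/\pmax}\,\|\Delta^{j'}\|$. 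Summing over $j'\ne j$, bounding $\sum_{j'}\|\Delta^{j'}\|\le\|\Delta\|_{2,1}/\sqrt{\pmin}$, plugging in the $(2,1)$ bound above, and using $p^{\pi(i)}\le\pmax$ collapses everything to $\sum_{j'\ne j}\|\hS_{jj'}\Delta^{j'}\|\le\tfrac{24\lambda\sqrt{\pmax}}{7(a-1)}$. I expect this constant-chasing to be the only real obstacle: the linear-algebra steps are routine, but the numerical factors must line up exactly so the final bound reads $\lambda\sqrt{\pmax}\bigl(1+\tfrac{24}{7(a-1)}\bigr)=c\lambda\sqrt{\pmax}$, matching the definition of $c$.

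Combining the two displays gives $\|\Delta^j\|\le c\lambda\sqrt{\pmax}$ for every $j$, i.e.\ $\|\hbeta_i-\bstar_i\|_{2,\infty}\le c\lambda\sqrt{\pmax}$, which is \eqref{eq:param_max_bound}. For the permutation claim: since $(\bstar_i)^j=0$ for all $j\ne\pi(i)$, we get $\|\hbeta_i^j\|=\|\Delta^j\|\le c\lambda\sqrt{\pmax}$ for those $j$, whereas $\|\hbeta_i^{\pi(i)}\|\ge\|(\bstar_i)^{\pi(i)}\|-\|\Delta^{\pi(i)}\|>2c\lambda\sqrt{\pmax}-c\lambda\sqrt{\pmax}=c\lambda\sqrt{\pmax}$ under the separation hypothesis $\|(\bstar_i)^{\pi(i)}\|>2c\lambda\sqrt{\pmax}$. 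Hence $\hJ_i=\argmax_{j}\|\hbeta_i^j\|=\pi(i)$. Finally, Theorem~\ref{thm:group_structure} is recovered as the special case $k_j\equiv1$ and $p_t\equiv p$, for which $\pmin=\pmax=p$ and Assumption~\ref{asump:structure_general} reduces to Assumption~\ref{asump:structure}.
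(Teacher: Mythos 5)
Your proposal is correct and follows essentially the same route as the paper: the same decomposition $\Delta^j=(\hS\Delta)^j-((\hS-I)\Delta)^j$, the same use of the bounds \eqref{eq:tech_result_2} and \eqref{eq:group_error_3} on the event $\mathcal{A}$, the same diagonal/off-diagonal split of the cross-covariance blocks under Assumption~\ref{asump:structure_general}, and the same substitution $a\kappa^2=a-1$ from Lemma~\ref{lem:kappa_alpha}; your constants check out (your $(x+y)^2\le 2x^2+2y^2$ step yields the same factor $\tfrac{1}{7a}$ that the paper obtains by bounding the two contributions separately). The permutation argument and the reduction of Theorem~\ref{thm:group_structure} to the case $p^j\equiv p$ also match the paper.
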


\begin{proof}
    Most of the proof is similar to the proof of Theorem~$5.1$ in \citet{lounici2011oracle}. 
    We supply a full proof for completeness and because our setting is slightly different. 
    We will need more notation to prove this statement. 
    Set $p_{\infty} = \max_{1\le d} p^{j}$ and define the extended covariance matrices
    $\widetilde{\Sigma}_{jj'}$ of size $p_{\infty}\times p_{\infty}$ as
    \begin{align*}
        \widetilde{\Sigma}_{jj'} 
        = 
        \begin{bmatrix} 
        \begin{array}{c|c}
            \hS_{jj'} & 0 \\
            \hline
            0 & 0
        \end{array}
        \end{bmatrix} & \text{ if } j\neq j'
        \text{ and }
        \widetilde{\Sigma}_{jj} 
        = 
        \begin{bmatrix} 
        \begin{array}{c|c}
            \hS_{jj} - I_{p^{j}\times p^{j}} & 0 \\
            \hline
            0 & 0
        \end{array}
        \end{bmatrix} \text{ if } j=j'
    .\end{align*}
    We also define for any $j=1,\ldots, d$ and $\Delta \in \mathbb{R}^{k\overline{p}}$
    the vector $\widetilde{\Delta}^{j}\in \mathbb{R}^{p_{\infty}}$ 
    such that 
    \begin{align*}
        \widetilde{\Delta}^{j} = 
        \begin{bmatrix} \Delta^{j} \\ 0 \end{bmatrix} 
    .\end{align*}
    Now set $\Delta = \hbeta_i - \bstar_i$ and bound
    \begin{align*}
        \|\Delta\|_{2, \infty}
        = 
        \|\hS\Delta - (\hS - I_{k\overline{p} \times k \overline{p}})\Delta\|_{2, \infty}
        \le 
        \|\hS \Delta\|_{2, \infty}
        + \|(\hS - I_{k\overline{p} \times k \overline{p}}) \Delta\|_{2, \infty}
    .\end{align*}
    The first term is controlled by \eqref{eq:tech_result_2} from
    Lemma~\ref{thm:group_error_1}.
    The latter term can be bounded by noticing that only
    the off-diagonal elements will contribute to the norm. We 
    can bound it using Cauchy-Schwarz:
    \begin{align*}
        \|(\hS &- I_{k\overline{p} \times k \overline{p}}) \Delta\|_{2, \infty}
        =\max_{j=1, \ldots, d}\|((\hS - I_{k\overline{p} \times k \overline{p}}) \Delta)^{j}\|\\
        &=\max_{j=1, \ldots, d}\left[ 
            \sum_{t=1}^{p^{j}} \left( 
                \sum_{j'=1}^{d}\sum_{t'=1}^{p^{j'}} 
                (\widetilde{\Sigma}_{j j'})_{t t'}\widetilde{\Delta}_{t'}^{j'}
            \right)^2
        \right]^{1 / 2}\\
        &\le \max_{j=1, \ldots, d}\left[ 
            \sum_{t=1}^{p^{j}} \left( 
                \sum_{j'=1}^{d}
                (\widetilde{\Sigma}_{j j'})_{tt}\widetilde{\Delta}_{t}^{j'}
            \right)^2
        \right]^{1 / 2}
        +
\max_{j=1, \ldots, d}\left[ 
            \sum_{t=1}^{p^{j}} \left( 
                \sum_{j'=1}^{d}\sum_{t'=1, t' \neq t}^{p^{j'}} 
                (\widetilde{\Sigma}_{j j'})_{t t'}\widetilde{\Delta}_{t'}^{j'}
            \right)^2
        \right]^{1 / 2}
    .\end{align*}
    We now bound both terms separately. The first term can be bounded using 
    an application of 
    Assumption~\ref{asump:structure_general} and then Minkowski's
    inequality.
    The Minkowski's inequality is true for $L^{p}$ norms and tells us
    \begin{align*}
        \|x + y\|_p \le \|x\|_p + \|y\|_p
    .\end{align*}
    In our case this generalises to 
    \begin{align*}
        \left[ \sum_{t=1}^{p_{\infty}} \left( \sum_{j'=1}^{d} |\widetilde{\Delta}^{j'}_t| \right)^2  \right]^{1 / 2}
        =
        \|\sum_{j=1}^{d} \widetilde{\Delta}^{j}\|  
        \le 
        \sum_{j=1}^{d} \|\widetilde{\Delta}^{j}\|
        \le  
        \frac{1}{\sqrt{\pmin} }\sum_{j=1}^{d} \sqrt{p^{j}} \|\widetilde{\Delta}^{j}\|
        =
        \frac{1}{\sqrt{\pmin}}\|\widetilde{\Delta}\|_{2, 1}
    .\end{align*}
    Combining Assumption~\ref{asump:structure_general} with the above inequality
    gives us
    \begin{align*}
        \max_{j=1, \ldots, d}\left[ 
            \sum_{t=1}^{p_{j}} \left( 
                \sum_{j'=1}^{d}
                (\widetilde{\Sigma}_{j j'})_{tt}\widetilde{\Delta}_{t}^{j'}
            \right)^2
        \right]^{1 / 2}
        &\le 
        \frac{1}{14a }\sqrt{\frac{\pmin}{\pmax}}  \left[ 
            \sum_{t=1}^{p_{\infty}} \left( \sum_{j'=1}^{d} |\widetilde{\Delta}_t^{j'}| \right)^2 
        \right]^{ 1/2}\\
        &\le \frac{1}{14a }\sqrt{\frac{\pmin}{\pmax}} \frac{1}{\sqrt{\pmin} }
            \|\widetilde{\Delta}\|_{2, 1}\\
        &\le \frac{1}{14a }\sqrt{\frac{1}{\pmax}} \|\Delta\|_{2, 1}
    .\end{align*}
    The second term can now be bounded by another application of Cauchy-Schwarz: 
    \begin{align*}
        \max_{j=1, \ldots, d}&\left[ 
        \sum_{t=1}^{p^{j}} \left( 
            \sum_{j'=1}^{d}\sum_{t'=1, t' \neq t}^{p^{j'}} 
            (\widetilde{\Sigma}_{j j'})_{t t'}\widetilde{\Delta}_{t'}^{j'}
            \right)^2
        \right]^{1 / 2} \\
        &\le \frac{1}{14a }\sqrt{\frac{\pmin}{\pmax}}
        \max_{j=1,\ldots, d} \left[ \frac{1}{p^{j}}
            \sum_{t=1}^{p^{j}} \left( 
                \sum_{j'=1}^{d} \sum_{t'=1}^{p^{j'}} 
                \frac{|\widetilde{\Delta}_{t'}^{j'}|}{\sqrt{p^{j'}} } 
            \right)^2
        \right] ^{ 1/2}\\
        &\le \frac{1}{14a }\sqrt{\frac{\pmin}{\pmax}}
        \sum_{j'=1}^{d} \sum_{t'=1}^{p^{j'}} 
            \frac{|\widetilde{\Delta}_{t'}^{j'}|}{\sqrt{p^{j'}} }\\
        &\le \frac{1}{14a }\sqrt{\frac{\pmin}{\pmax}} 
            \frac{1}{\sqrt{\pmin}}\|\widetilde{\Delta}\|_{2,1} \\
        &\le \frac{1}{14a}\sqrt{\frac{1}{\pmax}} \|\Delta\|_{2,1}
    .\end{align*}
    The $(2, 1)$-norm term is now bounded using \eqref{eq:group_error_3}. 
    Putting everything together we get 
    \begin{align*}
        \|\hbeta_i - \bstar_i\|_{2, \infty} 
        &\le 
        \|\hS (\hbeta_i - \bstar_i)\|_{2, \infty}
        + \|(\hS - I_{pd \times pd}) (\hbeta_i - \bstar_i)\|_{2, \infty}\\
        &\le \lambda\sqrt{\pmax} 
            + \frac{2}{14a}\sqrt{\frac{1}{\pmax}}
            \left(
                \frac{24 \lambda p^{\pi(i)}}{\kappa^2} 
            \right)\\
        &\le \left(
            1
            +
            \frac{24}{7\kappa^2 a}
        \right)\lambda \sqrt{\pmax} 
    .\end{align*}
    To satisfy both assumptions~(\ref{asump:re_1}, \ref{asump:structure}), we
    need to set $a \kappa^2 = (a - 1)$ as per Lemma~\ref{lem:kappa_alpha}.

    Finally, to prove the final claim, note that \eqref{eq:param_max_bound} combined
    with our sparsity assumption on the true parameters implies
    that for all $j' \neq \pi(i)$ 
    it must be that 
    $\|\hbeta_i^{j'}\| = \|\hbeta_i^{j'} - (\bstar_i)^{j'}\| < c\lambda \sqrt{\pmax} $. 
    We will show that for $\pi(i)$ it must be that $\|\hbeta_i^{\pi(i)} \| > c\lambda \sqrt{\pmax} $. 
    Hence, the estimator gets the correct index with high probability. Indeed, if
    $\|(\bstar_i)^{\pi(i)}\| > 2c \lambda \sqrt{\pmax} $ we get
    \begin{align*}
        \|\hbeta_i^{\pi(i)}\| 
        &= 
        \|(\bstar_i)^{\pi(i)} - ((\bstar_i)^{\pi(i)} - \hbeta_i^{\pi(i)}) \|\\
        &\ge \left|\|(\bstar_i)^{\pi(i)}\| - \|((\bstar_i)^{\pi(i)} - \hbeta_i^{\pi(i)}) \|\right|\\
        &\ge 2c\lambda \sqrt{\pmax} - c \lambda  \sqrt{\pmax}\\
        &= c\lambda \sqrt{\pmax}
    .\end{align*}
\end{proof}

Let us restate the specific version of Theorem~\ref{thm:group_structure} 
again for clarity. This theorem is now a corollary of 
Theorem~\ref{thm:group_structure_general}.
\GroupStruct*
\begin{proof}
    The result follows from Theorem~\ref{thm:group_structure_general}, where in this case $k=d$, 
    and $p^j = p$ for all $j=1,\ldots, d$. 
\end{proof}
\TotalProb*
\begin{proof}
Consider the following estimators
\begin{align*}
    \hbeta_i
    &= \argmin_{\beta \in \mathbb{R}^{dp}}
        \|\Hvec_{i} - \Phi \beta\|^2 + \lambda  \sqrt{p} \|\beta\|_{2,1},\\
    \hJ_{i} 
    &= \argmax_{j=1,\ldots d} \|\hbeta^{j}_i\|,\\
    \tpi
    &: [d] \to [d], i \mapsto \hJ_i
.\end{align*}
We will first show that $\tpi$ estimates $\pi$ with probability at
least $1 - \delta$.  Afterwards, we will show that the event on which $\tpi$ is correct, is
contained in the event that $\hpi$ estimates $\pi$ correctly, implying a lower
bound on the requested probability. 

We apply a union bound
\begin{align*}
    \mathbb{P}(\tpi = \pi) 
    &= 
    \mathbb{P}(\forall i=1,\ldots, d  \mid \hJ_i = \pi(i)) \\
    &= 
    1 - \mathbb{P}(\exists  i=1,\ldots, d  \mid \hJ_i \neq  \pi(i))\\
    &> 1 - \sum_{i=1}^{d} \frac{\delta}{d} \\
    &= 1 - \delta
.\end{align*}

We proceed to the second step. If $\tpi$ estimates $\pi$ correctly, then $\tpi$
is already a valid permutation and $\tpi=\hpi$. 
Indeed, if $\tpi$ is correct then that means that
$\|\hbeta_{i}^{\tpi(i)}\|= \|\hbeta_{i}^{\pi(i)}\|$ is the maximum norm 
for each $i$. Coincidentally, by $\pi$ being a correct permutation, $\tpi$ 
describes a correct matching with largest values, which means that 
$\tpi(i) = \hpi(i)$  for each $i=1,\ldots, d$ and 
\begin{align*}
    \mathbb{P}(\hpi = \pi) 
    \ge \mathbb{P}(\tpi = \pi)
    \ge 1 -\delta
.\end{align*}

\end{proof}

\section{Proofs and Implementation Details for Kernelized Alignment Learning}
\label{app:proofs_non_params}
In this section we provide proof details and the algorithm for
Section~\ref{sec:non_param_learning}.
We will make one adjustment to the Group Lasso
regularization in the optimization problem compared to \eqref{eq:feature_optim},
which is that we square the regularization term. 
This form is theoretically more appealing, but is still
equivalent to the standard formulation: as \citet{bach2008consistency} argues,
the two versions of the optimization problem have the same sets of
solutions when varying the regularization parameters. For $\mu > 0$, the
objective is given by
\begin{align}\label{eq:feature_optim_squared}
    \inf_{\beta^{1}, \ldots, \beta^d} 
        \tfrac{1}{n}
        \|\Hvec_i - \beta( \Mvec ) \|^2
        + 
        \mu \left( \sum_{j=1}^{d} \|\beta^{j}\|_{\mathcal{H}_j} \right)^2
.\end{align}
Let $\hbeta^{1}_i, \ldots, \hbeta^{d}_i$ be the solutions of the above
optimization problem. The translation between regularization parameters that
give the same solutions for \eqref{eq:feature_optim} and \eqref{eq:feature_optim_squared} is given by 
$\lambda = \mu\left(\sum_{j=1}^{d}\|\hbeta_i^{j}\|_{\mathcal{H}_j}\right)$.
\subsection{Representer Theorem}
The squared version of the optimization problem allows us to prove the Representer theorem 
from the main text:
\FeatKernEquiv*
\begin{proof}
    First we state the following result about a variational
    equality for positive numbers
    \begin{align}\label{eq:var_norm}
        \left( \sum_{j=1}^{d} \|\beta^{j}\| \right)^2 
        =
        \inf_{\eta \in \Delta_d}\sum_{j=1}^{d} \frac{\|\beta^{j}\|^2}{\eta^{j}}
    .\end{align}
    A proof of this statement can be found in section 1.5 of \citep{BachJMO12}. 
    Using~\eqref{eq:var_norm} and switching to the squared version of \eqref{eq:feature_optim}
    we rewrite \eqref{eq:feature_optim_squared} as 
    \begin{align*}
        &\inf_{\beta_1, \ldots, \beta_d} 
        \frac{1}{n}
        \|\Hvec_i - \beta(\Mvec)\|^2
        + 
        \mu \left( \sum_{j=1}^{d} \|\beta^{j}\|_{\mathcal{H}_j} \right)^2\\
        &= 
        \inf_{\eta \in \Delta_d}
        \inf_{\beta_1, \ldots, \beta_d} 
        \frac{1}{n}
        \|\Hvec_i - \beta(\Mvec)\|^2
        + 
        \mu \sum_{j=1}^{d} \frac{\|\beta^{j}\|^2_{\mathcal{H}_j}}{\eta_j}\\
        &= (\text{OPT}_1).
    \end{align*}
    We can rewrite this expression further, using the reproducing property of
    the RKHSs $\mathcal{H}_j$, which gives $\beta^{j}(M_j)=\left<\beta^{j}, \varphi_j(M_j) \right>$. 
    Furthermore, defining $\widetilde{\beta^{j}} = \frac{\beta^{j}}{\sqrt{\eta_j} }$ 
    and $\widetilde{\varphi_j} = \sqrt{\eta_j} \varphi_j$
    we rewrite
    \begin{align*}
        (\text{OPT}_1)
        &= 
        \inf_{\eta \in \Delta_d}
        \inf_{\beta_1, \ldots, \beta_d} 
        \frac{1}{n}\sum_{\ell=1}^{n} 
        \left(C^{\ell}_i - \sum_{j=1}^{d} \left< \beta^{j}, \varphi_j(M^{(\ell)}_i)\right>\right)^2+ 
        \mu \sum_{j=1}^{d} \frac{\|\beta^{j}\|_{\mathcal{H}_j}^2}{\eta_j}\\
        &= 
        \inf_{\eta \in \Delta_d}
        \inf_{\widetilde{\beta_1}, \ldots, \widetilde{\beta_d}} 
        \frac{1}{n}\sum_{\ell=1}^{n} 
        \left(C^{(\ell)}_i - \sum_{j=1}^{d} 
            \left< \sqrt{\eta_j} \widetilde{\beta^{j}}, \varphi_j(M^{(\ell)}_j)\right>
            \right)^2
            +
        \mu \sum_{j=1}^{d} \|\widetilde{\beta^{j}}\|_{\mathcal{H}_j}^2\\
        &= 
        \inf_{\eta \in \Delta_d}
        \inf_{\widetilde{\beta_1}, \ldots, \widetilde{\beta_d}} 
        \frac{1}{n}\sum_{i=1}^{n} 
        \left(C^{(\ell)}_i - \sum_{j=1}^{d} 
            \left< \widetilde{\beta^{j}},  \widetilde{\varphi_j}(M^{(\ell)}_j)\right>
            \right)^2+ 
        \mu \sum_{j=1}^{d} \|\widetilde{\beta^{j}}\|_{\mathcal{H}_j}^2\\
        &=(\text{OPT}_2)
    .\end{align*}
    This final expression should be recognized as the feature representation of
    the Representer theorem \citep{SHS2001rep} applied to the kernel described
    by
    \begin{align*}
        \kappa(\eta)(M, M')
        &= \sum_{j=1}^{d} \left<\widetilde{\varphi_j}(M_j), \widetilde{\varphi_j}(M_j') \right>\\
        &= \sum_{j=1}^{d} \eta_j\left<\varphi_j(M_j), \varphi_j(M_j') \right>\\
        &= \sum_{j=1}^{d} \eta_j\kappa_j(M_j, M_j')
    .\end{align*}
    The Representer theorem then gives us that the solution of the inner
    optimization problem in $(\text{OPT}_2)$ can be described by 
    \begin{align*}
        \widetilde{\beta^{j}} 
        = \sum_{\ell=1}^{n} \widetilde{\varphi_j}(M_j^{\ell})(c)_{\ell}
        \iff
        \frac{\beta^{j}}{\sqrt{\eta_j}}  
        = \sqrt{\eta_j}\sum_{\ell=1}^{n} \varphi_j(M^{(\ell)}_j)(c)_{\ell}
        \iff
        \beta^{j}
        = \sum_{\ell=1}^{n} \varphi_j(M^{(\ell)}_j)\eta_j(c)_{\ell}
    \end{align*}
    with $c \in \mathbb{R}^{n}$ and $\eta \in \Delta_{d}$. Alternatively
    interpreted this says that there exist $c^{1}, \ldots, c^{d} \in \mathbb{R}^{n}$ 
    such that $\beta^{j} = \sum_{\ell=1}^{n} \varphi_j(M_j^{(\ell)})(c^{j})_{\ell}$.
    We invoke again the equivalence between the squared and un-squared 
    versions of the optimization problem using the translation of regularization 
    parameters $\lambda = \mu\left( \sum_{j=1}^{d} \|\widehat{\beta}_i^{j}\|_{K_j} \right) $
    and conclude that the solutions 
    of \eqref{eq:feature_optim} are of the same form.
\end{proof}

To get the finite-dimensional optimization problem as stated in
\eqref{eq:kernel_optim_unsquared} we substitute the correct forms of
$\hbeta^{j}_i$ back into the original optimization problem. Define the Gramm
matrices $(K_j)_{\ell k} = \kappa_j(M^{(\ell)}_j, M^{k}_j) =
\left<\varphi_j(M_j^{(\ell)}, \varphi_j(M_j^{k}) \right>$ and observe
\begin{align*}
    &\inf_{\beta_1, \ldots, \beta_d} 
    \frac{1}{n}\sum_{\ell=1}^{n} 
    \left(C^{(\ell)}_i - \sum_{j=1}^{d} \left< \beta^{j}, \varphi_j(M^{(\ell)}_j\right>\right)^2
    + 
    \lambda \sum_{j=1}^{d} \|\beta^{j}\|_{\mathcal{H}_j} \\
    &=\inf_{c^{1}, \ldots, c^{d}\in \mathbb{R}^{n}}
    \frac{1}{n}\sum_{\ell=1}^{n} 
    \left( C^{(\ell)}_i - 
        \sum_{j=1}^{d} (K_jc^{j})_i \right)^2
        + \lambda \sum_{j=1}^{d} \sqrt{(c^{j})^{\top} K_j c^{j}}\\
    &=\inf_{c^{1}, \ldots, c^{d}\in \mathbb{R}^{n}}
    \tfrac{1}{n} 
    \| \Hvec_i - \sum_{j=1}^{d} K_jc^{j} \|^2
        + \lambda\sum_{j=1}^{d} \|c^{j}\|_{K_j}.
\end{align*}
\subsection{Estimator consistency}
\label{app:kernel_consistency}

As stated in the main text, the assumptions in Theorem~\ref{thm:kernel-prob} are explained in this section. 
The assumptions stated in (A-D) ensure that the RKHSs that we work with are nice enough 
and that the function we want to estimate is not too miss specified. 
For a more complete discussion on the assumptions, we refer to \citet{bach2008consistency}. 
To remind ourselves, 
we are given $d$ random variables $M = (M_1,\ldots, M_d)$, where each random variable
lives in $\mathcal{Z}_j$, and $d$ RKHSs $\mathcal{H}_1, \ldots, \mathcal{H}_d$ associated with
$d$ kernels $\kappa_1,\ldots, \kappa_j$. The cross-covariance operator, 
$\Sigma_{ij}$ for $\mathcal{H}_j$ to $\mathcal{H}_i$ is
defined such that for all $(\beta^i, \beta^{j}) \in \mathcal{H}_i \times \mathcal{H}_j$, 
\begin{align}
    \left<\beta^{j}, \Sigma_{ij}\beta^{j} \right> = 
    \mathbb{E}[\beta^{i}(M_i)\beta^{j}(M_j)] - \mathbb{E}[\beta^{i}(M_i)]\mathbb{E}[\beta^j(M_j)]
.\end{align}
The bounded correlation operators $\rho_{ij}$ are defined 
through the decomposition 
$\Sigma_{ij} = \Sigma_{ii}^{1 / 2} \rho_{ij} \Sigma_{jj}^{1 / 2}$ \citep{baker1973joint}.
\begin{enumerate}[label=(\Alph*)]
    \item For each $j=1,\ldots, d$, the Hilbert space $\mathcal{H}_j$ 
        is a separable reproducing kernel Hilbert space associated 
        with kernel $\kappa_j$ and the random variables $\kappa_j(\cdot, M_j)$ 
        are not constant and have finite fourth-order moments. 
    \item For all $i,j=1,\ldots, d$, the cross correlation operators are
        compact $\rho_{ij}$ and the joint correlation operator is invertible.
    \item For each $i=1,\ldots, d$, there exist functions 
        ${\bstar_i}^{1}, \ldots, {\bstar_i}^{d}\in \mathcal{H}_1,\ldots,\mathcal{H}_d$, 
        $b_i \in \mathbb{R}$ and a function $f_i$ of $M$ such that
        \begin{align*}
            C_i = \sum_{j=1}^{d} {\bstar_i}^{j}(M_j) + b_i + f_i(M) + \epsilon_i
        ,\end{align*}
        where $\mathbb{E}[\epsilon_i  \mid M] = 0$ and 
        $\sigma_{\text{min}}^2<\mathbb{E}[\epsilon_i^2  \mid M] < \sigma_{\text{max}}^2$
        with $\mathbb{E}[f_i(M)^2] < \infty$, $\mathbb{E}[f_i(M)] = 0$ 
        and $\mathbb{E}[f_i(M){\bstar_i}^j(M_j) = 0] $ for all $j=1\ldots, d$. 
        We define $\pi(i)$ to be the one index for which 
        ${\bstar_i}^{\pi(i)} \neq 0$.
    \item For all $i, j=1, \ldots, d$, there exists 
        $g_i^j \in \mathcal{H}_j$ such 
        that ${\bstar_i}^{j} = \Sigma^{1 / 2}_{jj} g_i^{j}$. 
\end{enumerate}
For each function, ${\bstar_i}^{\pi(i)}$, hat is non-zero we will require the following condition
\begin{align}\label{eq:kernel_condition}
    \max_{j \neq \pi(i)}
    \left\|
        \Sigma^{1 / 2}_{jj} \rho_{i\pi(i)} \rho^{-1}_{\pi(i)\pi(i)}
        Dg_{\pi(i)}
    \right\|_{\mathcal{H}_i} 
    < 1
,\end{align}
Where $D$ is a block diagonal operator where each block consists of
the operators $\tfrac{1}{\|{\bstar_i}^{j}\|_{\mathcal{H}_j}} I_{\mathcal{H}_j}$.
Condition (B) can be seen as an analogue to the correlation assumption
in Assumptions~\ref{asump:structure} and \ref{asump:structure_general}, as
it ensures that the variables are not too dependent. 

Before we prove Theorem~\ref{thm:kernel-prob}, we will first prove that each
individual index $\pi(i)$ can be estimated consistently. This follows from an
asymptotic result by \citet{bach2008consistency}.
\KernelProb*
\begin{proof}
    To prove this result we first define the estimator of each individual 
    index $\pi(i)$ as  
    \begin{align}
        \hJ_i = \argmax_{j=1,\ldots, d}\|\hbeta_i^{j}\|_{\mathcal{H}_j}
    .\end{align}
    Theorem~11 in \citep{bach2008consistency} gives consistency 
    for the estimated parameters $\hbeta^{j}_i$ and estimated index $\hJ_i$. 
    However, his result is stated for the squared version of the Group Lasso and 
    has as assumption that the $\mu_n$ regularization parameters have the
    property that $\mu_n \to \infty$ and $\sqrt{n}\mu_n \to + \infty$ 
    as the number of data points $n \to \infty$. 
    The translation factor between regularization parameters 
    $\lambda_n=\mu_n(\sum_{j=1}^{d} \|\hbeta^{j}_i\|_{K_j})$
    convergence to a constant in probability, by the consistancy
    of the estimated parameters. 
    This shows that the scalings for $\lambda_n$ and $\mu_n$ are the
    same asymptotically. We conclude that $\hJ_i$ estimates 
    $\pi(i)$ with probability tending to $1$. 

    Remember that the norms of $\hbeta^{j}_i$ and $\hat{c}^{j}_i$ are the
    same through $\|\hbeta^{j}_i\|_{\mathcal{H}_j} =
    \|\hat{c}_i^{j}\|_{K_j}$.  This means that we have consistency for
    estimators of $\pi(i)$ that are based on $\|\hat{c}_i^{j}\|_{K_j}$ as
    well. For each $i=1,\ldots, d$ we can repeat the above argumentation
    to get consistency for each $\hJ_i$ separately.
    To combine the conclusions, we apply the same argument as in the finite dimensional case
    and a a union bound that finishes our proof, 
    \begin{align*}
        \mathbb{P}(\hpi= \pi) 
        &\ge  1 - \mathbb{P}(\exists i=1,\ldots,d  \mid \hJ_i \neq \pi(i))\\
        &\ge 1 - \sum_{i=1}^{d}\mathbb{P}(\hJ_i \neq \pi(i)) 
        \to 1
    .\end{align*}
\end{proof}

\subsection{Implementation}

\KernelImp*
\begin{proof}
Applying the Cholesky decomposition $K_j = L_jL_j^{\top}$ 
for each $j=1,\ldots,d$  and substituting this into \eqref{eq:kernel_optim_lstsq}
gives
\begin{align*}
    \inf_{c^{1}, \ldots, c^{d}\in \mathbb{R}^{n}}
        \tfrac{1}{n} 
        \| \Hvec_i - \sum_{j=1}^{d} K_jc^{j} \|^2
            + \lambda \sum_{j=1}^{d} \|c^{j}\|_{K_j} 
    &= \inf_{c^{1}, \ldots, c^{d}\in \mathbb{R}^{n}}
        \tfrac{1}{n} 
        \| \Hvec_i - \sum_{j=1}^{d} L_jL_j^{\top}c^{j} \|
            + \lambda \sum_{j=1}^{d} \|L_j^{\top}c^{j}\|\\
    &= \inf_{\gamma^{1}, \ldots, \gamma^{d}\in \mathbb{R}^{n}}
        \tfrac{1}{n} 
        \| \Hvec_i - \sum_{j=1}^{d} L_j\gamma^{j} \|^2
            + \lambda \sum_{j=1}^{d} \|\gamma^{j}\|
.\end{align*}
\end{proof}

The number of parameters now scales with the number of data samples.
Computationally, this quickly becomes unwieldy.  We apply a Nystr\"om style
approximation by sub-sampling $m \ll n$ columns of each $K_j$ Gramm
matrix and using those to approximate the full Gramm matrix \citep{WS2000nystrom}. 

\section{Probability Results}
\label{app:probability results}
\begin{lemma}\label{lem:chi_concentration_bound}
For $j=1,\ldots, d$ and $\sigma^2> 0$ let $\frac{\chi_j^2}{\sigma^2}$ be 
independent chi-square distributed random variables with $p^{j}$ 
degrees of freedom for $j=1,\ldots, d$. Then, with $\delta \in (0, 1)$  and for 
\begin{align*}
    \lambda_0 = \frac{2\sigma}{\sqrt{n} }
        \sqrt{1 + 
            \sqrt{\frac{4\log(d / \delta)}{\pmin}} 
            +
            \frac{4\log(d /\delta)}{\pmin}
        },
\end{align*}
we have 
\begin{align*}
    \mathbb{P}\left(\max_{1\le j \le d} 
        \frac{\chi_j}{\sqrt{np^{j}}} \le \frac{\lambda_0 }{2}
    \right)
    \ge 1 - \delta
.\end{align*}
\end{lemma}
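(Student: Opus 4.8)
The plan is to reduce the statement to a union bound over the $d$ coordinates, followed by a standard one‑sided chi‑square deviation inequality applied to each coordinate. First I would rewrite the event of interest: $\{\max_{1\le j\le d}\chi_j/\sqrt{np^j}\le \lambda_0/2\}$ is precisely the intersection $\bigcap_{j=1}^d\{\chi_j^2/\sigma^2 \le (\lambda_0^2 n/4\sigma^2)\,p^j\}$, and from the definition of $\lambda_0$ one has $\lambda_0^2 n/(4\sigma^2) = 1 + \sqrt{4\log(d/\delta)/\pmin} + 4\log(d/\delta)/\pmin$. So it suffices to prove, for each fixed $j$,
\[
  \mathbb{P}\!\left(\frac{\chi_j^2}{\sigma^2} > p^j\Big(1 + \sqrt{\tfrac{4\log(d/\delta)}{\pmin}} + \tfrac{4\log(d/\delta)}{\pmin}\Big)\right) \le \frac{\delta}{d},
\]
after which a union bound over $j=1,\dots,d$ and passing to the complement gives the claimed $1-\delta$ lower bound.

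For the per‑coordinate bound I would invoke the Laurent--Massart tail inequality: for a chi‑square random variable $Q$ with $k$ degrees of freedom, $\mathbb{P}(Q \ge k + 2\sqrt{kx} + 2x) \le e^{-x}$ for all $x\ge 0$. Applying this with $Q = \chi_j^2/\sigma^2$, $k = p^j$, and $x = \log(d/\delta)$ yields $\mathbb{P}(\chi_j^2/\sigma^2 \ge p^j + 2\sqrt{p^j\log(d/\delta)} + 2\log(d/\delta)) \le \delta/d$. It then remains to check that the threshold appearing in the display above dominates the Laurent--Massart threshold, i.e.
\[
  p^j\Big(1 + \sqrt{\tfrac{4\log(d/\delta)}{\pmin}} + \tfrac{4\log(d/\delta)}{\pmin}\Big) \;\ge\; p^j + 2\sqrt{p^j\log(d/\delta)} + 2\log(d/\delta).
\]
Comparing terms, this reduces to $p^j/\sqrt{\pmin} \ge \sqrt{p^j}$ and $4p^j/\pmin \ge 2$, both of which are immediate from the definition $\pmin = \min_{j}p^{j} \le p^j$.

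I do not expect any substantial obstacle here; the proof is a routine union bound plus a textbook concentration inequality. The only point requiring care is bookkeeping: tracking the precise constants so that the $4\log(d/\delta)$ in the definition of $\lambda_0$ matches the choice $x=\log(d/\delta)$, and verifying that replacing $p^j$ by the smaller quantity $\pmin$ inside the radical and the linear term \emph{enlarges} (rather than shrinks) the threshold — which is exactly what $\pmin \le p^j$ guarantees.
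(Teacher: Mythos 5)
Your proof is correct: the reduction of the event to $\chi_j^2/\sigma^2 \le p^j\bigl(1+\sqrt{4\log(d/\delta)/\pmin}+4\log(d/\delta)/\pmin\bigr)$, the Laurent--Massart bound with $x=\log(d/\delta)$, the domination check using $\pmin\le p^j$, and the union bound all go through. The paper disposes of this lemma by citing Lemma~8.1 of \citet{buhlmannVG11} with $x=\log(1/\delta)$, and that cited lemma rests on exactly the same chi-square deviation inequality plus union bound, so your argument is essentially the same proof made self-contained.
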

\begin{proof}
    This is Lemma 8.1 in \citep{buhlmannVG11} and substituting $x = \log(1 / \delta)$. 
\end{proof}
The previous lemma is the general version of a concentration inequality that
is needed in the proof of Theorem~\ref{thm:group_structure_general}. 
The concentration inequality that we want to use is the following.

\begin{lemma}\label{lem:feature_concentration}
    Let $\sigma^2 > 0$ and assume that $\epsilon^{(1)}, \ldots \epsilon^{(n)}$ are independently 
    $\cN(0, \sigma^2)$ distributed, 
    $\Phi$ as in Appendix~\ref{app:proofs_params}, 
    and with $\delta \in (0,1)$ set $\lambda \ge 4\lambda_0$ for
     \begin{align*}
        \lambda_0 = \frac{2\sigma}{\sqrt{n} }
            \sqrt{1 + 
                \sqrt{\frac{4\log(d / \delta)}{\pmin}} 
                +
                \frac{4 \log(d / \delta)}{\pmin}
            } 
    \end{align*}   
    Then, $\mathbb{P}(\mathcal{A}) \ge 1-\delta$, where 
    $\mathcal{A} = \bigcap_{j=1}^{d}\mathcal{A}_j$ with the events
    $\mathcal{A}_j = \{\frac{1}{n}\|(\Phi^{\top}\bm{\epsilon})^{j}\| 
    \le \frac{\lambda \sqrt{p^{j}} }{2}\} $ for all $j=1,\ldots d$
    and $\bm{\epsilon} = [\epsilon^{(1)}, \ldots\epsilon^{(n)}]^{\top}$.
\end{lemma}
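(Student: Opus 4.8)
The plan is to reduce the event $\mathcal{A}$ to the chi-distribution event already controlled by Lemma~\ref{lem:chi_concentration_bound}. First I would work conditionally on the design matrix $\Phi$, so that the only randomness is in $\bm{\epsilon}\sim\mathcal{N}(0,\sigma^2 I_n)$. For each group $j$, write $\Phi_j=\Phi_{G_j}\in\reals^{n\times p^j}$, so that $(\Phi^\top\bm{\epsilon})^j=\Phi_j^\top\bm{\epsilon}$. Being a linear image of a centered Gaussian, $\Phi_j^\top\bm{\epsilon}$ is centered Gaussian with covariance $\sigma^2\Phi_j^\top\Phi_j=n\sigma^2\hS_j=n\sigma^2 I_{p^j}$, where the last equality is the within-group standardization \eqref{eqn:standardized}. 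Hence $\|\Phi_j^\top\bm{\epsilon}\|^2$ has the law of $n\sigma^2 W_j$ with $W_j\sim\chi^2_{p^j}$; setting $\chi_j:=\sigma\sqrt{W_j}$, so that $\chi_j^2/\sigma^2\sim\chi^2_{p^j}$, this gives the identity $\tfrac{1}{n}\|(\Phi^\top\bm{\epsilon})^j\|=\chi_j/\sqrt{n}$.

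Next I would translate the events. The inequality defining $\mathcal{A}_j$ is $\chi_j/\sqrt{n}\le\lambda\sqrt{p^j}/2$, which is the same as $\chi_j/\sqrt{np^j}\le\lambda/2$, so $\mathcal{A}=\bigcap_{j=1}^d\mathcal{A}_j=\{\max_{1\le j\le d}\chi_j/\sqrt{np^j}\le\lambda/2\}$. The quantity $\lambda_0$ in the hypothesis is exactly the one appearing in Lemma~\ref{lem:chi_concentration_bound}, and $\lambda\ge 4\lambda_0\ge\lambda_0$, so the event above contains $\{\max_j\chi_j/\sqrt{np^j}\le\lambda_0/2\}$, which by Lemma~\ref{lem:chi_concentration_bound} has probability at least $1-\delta$. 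Therefore $\mathbb{P}(\mathcal{A})\ge1-\delta$. (In fact $\lambda\ge\lambda_0$ already suffices here; the factor $4$ is inherited from the surrounding results and is not needed for this lemma.)

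The one delicate point, and the main obstacle, is that the $\chi_j$ are not independent across groups, since $\hS_{jj'}=\tfrac{1}{n}\Phi_j^\top\Phi_{j'}$ need not vanish, whereas Lemma~\ref{lem:chi_concentration_bound} is phrased for independent chi-squares. This is harmless: its proof (Lemma~8.1 in \citet{buhlmannVG11}) only combines a union bound over $j$ with a chi-square tail bound applied to each $\chi_j^2/\sigma^2$ separately, so it depends only on the marginal $\chi^2_{p^j}$ law of each term, not on their joint distribution. If one prefers not to invoke the lemma in a form marginally stronger than stated, the estimate can be reproved directly in one step: $\mathbb{P}(\mathcal{A}^c)\le\sum_{j=1}^d\mathbb{P}(\chi_j/\sqrt{np^j}>\lambda_0/2)$, and each summand is at most $\delta/d$ by the same $\chi_{p^j}$ deviation inequality, the choice of $\lambda_0$ (built around $\log(d/\delta)$) making the union bound close.
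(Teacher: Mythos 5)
Your proof is correct and follows essentially the same route as the paper: standardization \eqref{eqn:standardized} makes each $\tfrac{1}{\sigma\sqrt{n}}(\Phi^\top\bm{\epsilon})^j$ standard Gaussian in $\reals^{p^j}$, so its squared norm is $\sigma^2$ times a $\chi^2_{p^j}$ variable, and the intersection event is exactly the max event of Lemma~\ref{lem:chi_concentration_bound} with $\lambda \ge 4\lambda_0 \ge \lambda_0$. Your remark that the $\chi_j$ are not independent across groups, and that this is harmless because the concentration lemma rests only on a union bound over marginal tails, is a point the paper's own proof passes over silently, so your version is if anything slightly more careful.
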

\begin{proof}
    By assumption, 
    $I_{p^{j}\times p^{j}}=\hS_{j} =\tfrac{1}{n}(\Phi_{j})^{\top}\Phi_{j}$ 
    and the fact that 
    $\bm{\epsilon} \sim \cN(0, \sigma^2I_{n \times n})$, 
    we first see
    \begin{align*}
        \tfrac{1}{\sigma\sqrt{n} }(\Phi\bm{\epsilon})^{j}
       &= \tfrac{1}{\sigma\sqrt{n} }\Phi_j^{\top}\bm{\epsilon}\\
       &\sim \cN(0, \tfrac{1}{n}\Phi_j^{\top}I_{n\times n}\Phi_j)\\
       &\sim \cN(0, I_{p^{j}\times p^{j}})
    .\end{align*}
    This shows us that $\tfrac{1}{\sigma^2n}\|(\Phi^{\top}\bm{\epsilon})^{j}\|^2$
    has a  chi-squared distribution. We can now apply 
    Lemma~\ref{lem:chi_concentration_bound} by noticing that it holds 
    also  holds for $\lambda \ge 4\lambda_0 \ge \lambda_0$ and that 
    \begin{align*}
        \bigcap_{j=1}^{d}\mathcal{A}_j = 
        \left\{\max_{1\le j\le d} \frac{1}{\sqrt{np^{j}} } 
        \frac{1}{\sqrt{n} }\|(\Phi\bm{\epsilon})^{j}\| \le \frac{\lambda}{2}\right\} 
    .\end{align*}
\end{proof}

\section{Experimental Details}
\label{app:experiments}
All the code to reproduce the experiments and figures in this paper
is provided as a GitHub repository at \url{https://github.com/HiddeFok/sample-efficient-learning-of-concepts}.
We perform synthetic experiments on a $192$-core CPU node 
(AMD Genoa) with $336$GB of RAM. We perform the image benchmark experiments 
on single $18$-core GPU node (NVIDIA A100 GPU and Intel XEON CPU)
with $120$GB of RAM.

\subsection{Toy Dataset Experiments}
\label{app:synth_experiments}

\begin{figure}[t]
    \centering
    \includegraphics{./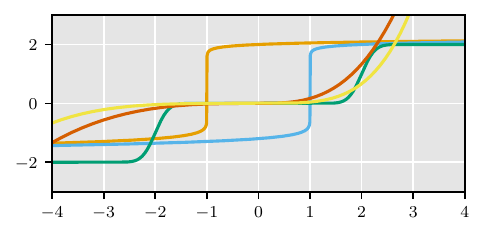}
    \caption{The different types of diffeomorphisms used in the misspecified case.}
    \label{fig:diffeomorphisms}
\end{figure}

The synthetic experiments can be subdivided into $4$ sets, based 
on which features mapping is used. These features mappings are 
linear features, spline features , random Fourier features and kernels. 
Here, we describe the data generatation and 
the hyperparameters of the features and kernels. 

\subsubsection{Data Generation}

For the synthetic experiment we sample the $C \in \mathbb{R}^{d}$ variables
from a $\cN(0, (1 - \rho)I_{d\times d} + \rho\ind)$ distribution, 
where $\ind$ denotes a matrix filled with only $1$'s. The  $\rho \in (0,1)$ 
parameter controls the amount of correlation between the variables. 
For the experiments with continuous concepts, 
we sample $n$ data points, on which we perform a $80/20$ train/test
data split. For the binary concept experiments, we perform a $50/50$ train/test split, to ensure
that both labels of each concept appear in the test set. We also keep the
correlation coefficient in the test set at $\rho=0$, while varying the correlation
in the  train set.

We now simulate the $M$ variables directly from the $C$ variables, instead of using a causal representation learning method that would learn them from the observations $X$.
This allows us to control the different sources of noise more carefully.
There are $2$ settings
in which we generate the $M \in \mathbb{R}^{d}$ variables, well-specified (following all of our assumptions) and misspecified (settings in which we consider more general mixing functions beyond our theoretical assumptions.)

\paragraph{Wellspecified.} In the wellspecified setting we generate
the $M$ variables by applying a map consisting of the features and kernels
used to estimate the permutation. This setting is a sanity check to see
if our estimator works in a setting that satisfies all the required
assumptions. For each dimension $j=1,\ldots, d$, a random weight
vector $\bstar_j\in \mathbb{R}^{p}$ is sampled, such that 
$\|\bstar_j\| \in [16\lambda_0, 32\lambda_0]$ uniformly. A permutation 
$\pi\colon \{1, \ldots, d\} \to \{1, \ldots, d\}$ is uniformly sampled
from all possible permutations. Finally, with independent
$\epsilon_i \sim \cN(0, \sigma^2)$ noise variables we get 
\begin{align*}
    C_i = \varphi(M_{\pi(i)})^{\top}\bstar_{\pi(i)} + \epsilon_i
.\end{align*}
\paragraph{Misspecified.} In the misspecified setting we
generate the $M$ variables by first sampling $d$ 
diffeomorphisms uniformly, $\{f_i \colon \mathbb{R} \to \mathbb{R}\}_{i=1}^{d}$,
from a set of pre-specified diffeomorphisms. The set of possible functions is plotted
in Figure~\ref{fig:diffeomorphisms}. Each function gets a random scaling 
$w_i$ uniformly in $[-2,  2]$. Finally, we get
\begin{align*}
    C_{i} = w_{\pi(i)} f_{\pi(i)}(M_{\pi(i)}) + \epsilon_i,
\end{align*}
with $\epsilon_i \sim \cN(0, \sigma^2)$ again independent noise
variables.

\paragraph{Target Label.} In the experiment where we need binary concepts, the concept 
labels were created by thresholding each variable at the midpoint. The underlying 
continuous score was generated using the misspecified setting from above. In this case the
midpoint is at $0$
and we get 
\begin{align*}
    \overline{C_i} = \ind[C_i \le 0]
.\end{align*}
The $M$ variables are kept continuous and a logistic regression is used to map $M$ 
to $\overline{C}$. To create the target labels, a subset of size
$k = \min(3, \left\lfloor  \frac{d}{5}\right\rfloor)$ is sampled
from $\{1, \ldots, d\} $, without replacement. To determine if the
target
label is $0$ or $1$, we sum the binary concepts in the $k$ selected variables
and check if the sum is at least  $\max(1, \left\lfloor \frac{d}{10} \right\rfloor)$.

\textbf{Metrics.}
For each of the experiments with continuous variables we report the the mean error in the learned permutation $\pi$ of the variables w.r.t to the true permutation $\tpi$ of the variables, defined as
$\text{MPE} = \frac{1}{d}\sum_{i=1}^{d} \ind\{\tpi(i) \neq \pi(i)\}$.
Moreover, we report the $R^2$ score and execution
time. To evaluate different settings of our estimator, 
we compare with the MPE, $R^2$ and execution time of using only
the purely linear version of our estimator on that particular
data setting. We also report the MPE and execution times of assigning the variables based on
Pearson or Spearman correlations as naive baselines. 
In the binary experiments, we report the MPE, the mean concept accuracy,
the label accuracy, 
the OIS-metric and the NIS-metric by \citet{zarlenga2023towards}.

\textbf{Parameters of the experiments.}
We vary the following parameters: the regularization parameter $\lambda$, 
the dimension $d$, the correlation  $\rho$ and the number of data points
$n$. As stated before, in each of the continuous experiment we look at the wellspecified
and misspecified case. The settings in the continuous case are:
\begin{itemize}
    \item The regularization parameter varies in 
        $\lambda \in \{0.001, 0.005, 0.01, 0.05, 0.1, 0.5, 1\} $.
        The other settings are set to $d\in \{20, 60, 100\} $, 
        $\rho=0$ and $n=1250$.
    \item The dimension is varied in $d \in \{5, 30, 60, 80, 100\} $.
        The other settings are set to $\lambda \in \{0.001, 0.01, 0.1\} $,
        $\rho=0$ and $n=1250$.
    \item The correlation parameter varies in 
        $\rho\in \{0, 0.2, 0.4, 0.6, 0.8, 0.95, 0.99\} $.
        The other settings are set to $\lambda \in\{0.001, 0.01, 0.1\} $,
        $d=60$ and $n=1250$.
    \item The total number of data points
        is varied in $n \in \{65, 125, 1250, 2500, 5000\} $.
        The other settings are set to $\lambda \in \{0.001, 0.01, 0.1\} $,
        $d=60$ and $\rho=0$.
\end{itemize}

In the case of the binary concepts, we have fewer combinations, because
we focus more on the downstream task and comparing the estimator
to baselines:
\begin{itemize}
    \item The dimension is varied in $d \in \{5, 10, 15, 20, 30\} $.
        The other settings are set to $\lambda \in \{0.001, 0.01, 0.1\} $,
        $\rho=0.5$ and $n=2000$.
    \item The total number of data points 
        is varied in $n \in \{100, 200, 2000, 4000, 10000\} $.
        The other settings are set to $\lambda \in \{0.001, 0.01, 0.1\} $,
        $d=20 $ and $\rho=0.5$.
\end{itemize}

\subsubsection{Feature and Kernel setting}

We now describe the types of models we consider in our evaluation on the synthetic data.

\paragraph{Linear features} In the linear case no transformation is applied
to the $M$ variables. 

\paragraph{Spline features} In the spline features case we perform the 
regression using a spline basis transformation, either piecewise linear or cubic splines. 
We expect this method to work especially well, because the cubic splines 
form a dense subset in the space of twice-differentiable functions, of which
the diffeomorphisms are a subset. To calculate these features we use the
\verb|SplineTransformer| class of the \verb|scikit-learn| package. The total 
number of feature parameters is calculated as $p=n_k + n_d - 1$, where $n_k$ 
is the number of knots and $n_d$ is the degree of each spline. In each of 
the toy dataset experiments the number of knots was $n_l \in \{4, 8\} $ 
and the degrees were $n_d \in \{1, 3\} $. 

\paragraph{Random Fourier features} For the random Fourier features we use a 
varying number of random features. We sample random features that approximate
the RBF kernel. To sample these features we use the \verb|RBFSampler| class
of the \verb|scikit-learn| package. The total number of feature parameters 
in this case is the number of random Fourier features. The number
of features in the toy dataset experiments is $p \in \{2, 4, 6, 8\} $.

\paragraph{Kernels} For the kernel experiments we perform the experiments for 
several kernels: the polynomial kernel, the RBF kernel, the Brownian kernel
and a Sobolev kernel. These kernels are given by
\label{p:kernel_examples}
\begin{align*}
    \kappa_{\text{pol}}(x, y) &= (1 + \left<x, y \right>)^3\\
    \kappa_{\text{RBF}}(x, y) &= \e^{-(x - y)^2}\\
    \kappa_{\text{Lap}}(x, y) &= \e^{-|x - y|} \\
    \kappa_{\text{cos}}(x, y) &= \cos(\left<x, y \right>)
.\end{align*}

\subsection{Action/Temporal Datasets Experiments}
\label{app:at_experimets}
We consider a synthetic data benchmark from the causal representation learning literature  \citep{lachapelle2022dms}.
The data generation settings, model architectures and training hyperparameters
are taken from the original paper \citep{lachapelle2022dms}. The implementation
can be found at 
\url{https://github.com/slachapelle/disentanglement_via_mechanism_sparsity/tree/main}, 
and is available under the Apache License 2.0. 

\subsubsection{Dataset Generation Details}
The benchmark consists of temporal data sequences, $\{(X^{t}, z^{t},
a^{t})\}_{t=1}^{T}$, where $X^{t}\in \mathbb{R}^{20}$ is the observed data,
$a^{t} \in \mathbb{R}^{10}$ is an action, which is seen as an auxiliary
variable in the ICA framework developed in \citep{khemakhem2020vaeica}, and
$z^{t}\in \mathbb{R}^{10}$ is the latent causal variable. 
The ground-truth mixing function $f$ is a random neural network with three
hidden layers of $20$ units with Leaky-ReLU activations with negative slope of
$0.2$. The weight matrices are sampled independently 
according to $\cN(0, 1)$ and the weight matrices are then orthogonalized to
ensure invertability of the mixing function. The observational noise $\epsilon$
in each dimension is sampled according to $\cN(0, 10^{-4})$ and is added
to $f(z^{t})$. The transitions
from $(z^{t-1}, a^{t-1})$ to $z^{t}$ are sampled according
to $\cN(\mu(z^{t-1}, a^{t-1}), 10^{-4}I_{10 \times 10})$. The mean function
$\mu$ will be different between the Action Sparsity dataset
and the Temporal Sparsity datasets.

\paragraph{Action Sparsity Dataset} In this case, the sequences have length
$T=1$ and the mean function is given by
\begin{align*}
    \mu(z^{t-1}, a^{t-1})_i \coloneqq 
        \sin(\tfrac{2 + i}{\pi}a_i^{t-1} +(i-1))
        +
        \sin(\tfrac{2 + i}{\pi}a_{i-1}^{t-1} +(i-1) )
,\end{align*}
where the index $i=-1$ is periodically identified with $i=10$. 

\paragraph{Temporal Sparsity Dataset} In this case, the sequences have length $T=2$ and the mean
function is given by
\begin{align*}
    \mu(z^{t-1}, a^{t-1})_i \coloneqq 
    z_i^{t-1} + 0.5 \sum_{j=1}^{i} \sin(\tfrac{2 + i}{\pi}z_j^{t-1} + (i-1))
.\end{align*}

\paragraph{Target Labels} We employ the same tactic as before to create 
the target labels. The true latent variables $z^{t}_i$ are binarized 
by looking at the empirical range over the whole dataset, and then thresholded at the midpoint. 
The target label is then determined by sampling $3$ dimensions 
in $\{1, \ldots, 10\} $ and setting the label to $1$ if at least $2$ of
the $3$ latent variables labels are $1$. 

In both datasets we sample $10^{6}$ points and split the data $80/20$ for the
train/test split.
\subsubsection{Model Architectures} 
\begin{table}[t]
  \caption{Architecture details for the encoder and decoder used in the 
    temporal and action sparsity dataset experiments.}
    \label{tbl:dms_architecture}
    \centering 
    \small
    \begin{tabular}{m{1cm}m{1cm}m{2.5cm}m{2.5cm}}
        \toprule
        & Layer & Hidden Size & Activation Function\\
        \bottomrule
        \multirow{7}*{Encoder} 
        & Linear & 512 & LeakyReLU(0.2)\\   
        & Linear & 512 & LeakyReLU(0.2)\\   
        & Linear & 512 & LeakyReLU(0.2)\\   
        & Linear & 512 & LeakyReLU(0.2)\\   
        & Linear & 512 & LeakyReLU(0.2)\\   
        & Linear & 512 & LeakyReLU(0.2)\\   
        & Linear & $2\cdot 10$ & -  \\
        \hline
        \multirow{7}*{Decoder}
        & Linear & 512 & LeakyReLU(0.2)\\   
        & Linear & 512 & LeakyReLU(0.2)\\   
        & Linear & 512 & LeakyReLU(0.2)\\   
        & Linear & 512 & LeakyReLU(0.2)\\
        & Linear & 512 & LeakyReLU(0.2)\\   
        & Linear & 512 & LeakyReLU(0.2)\\
        & Linear & 20 & -  \\
        \bottomrule
    \end{tabular}
\end{table}

As the first step of our pipeline we compare three models: TCVAE \citep{chen2018isolating}, iVAE \citep{khemakhem2020vaeica} and DMS-VAE \citep{lachapelle2022dms}. The same encoder
and decoder architecture is used for all models: an MLP with $6$ layers of $512$ units
with LeakyReLU activations with negative slope $0.2$.  
The encoder $f_{\text{enc}}(x; \theta)$ outputs the
mean and standard deviation of $q_{\theta}(z^{t} \mid x^{t})$, which are 
the densities of normal distributions. The latent 
transition distribution $\hat{p}_{\lambda}(z_i^{t} \mid z^{<t}, a^{<t})$, 
where $z^{<t}=(z^{t'})_{t'=1}^{t-1}$ and $a^{<t}=(a^{t'})_{t'=1}^{t-1}$, 
is also learned by a fully connected neural network.
The decoder $f_{\text{dec}}(z; \psi)$ tries to reconstruct the original
data from the learned encodings. 
A minibatch size of $1024$ is used for training. See Table~\ref{tbl:dms_architecture}
for a detailed description.

The differences between the three methods come from 
the loss function that is optimized. The common term in each of the optimizations
is the Evidence Lower Bound (ELBO) objective, which is given by 
\begin{align}
    \label{eq:elbo_obj_dms}
    \text{ELBO}(\theta, \psi, \lambda)
    &=
    \sum_{t=1}^{T} \bE_{z^{t}\sim q_{\theta}(\cdot  \mid x^{t})}
    \left[ \log p_{\psi}(x^{t}  \mid z^{t}) \right] \\
    &\phantom{=}- 
    \bE_{z^{<t} \sim q_{\theta}(\cdot  \mid x^{<t})}\left[
        \infdivKL{q_{\theta}(z^{t} \mid x^{t})}{\hat{p}_{\lambda}(z^{t}\mid z^{<t}, a^{<t})}
    \right]
,\end{align}
where $\infdivKL{\cdot}{\cdot}$ is the Kullback-Leibler divergence.

\paragraph{TCVAE} We use the implementation of TCVAE \citep{chen2018isolating} by
\citet{lachapelle2022dms}.
The loss function consists
of the same components as the ELBO objective, but they decompose
it into $3$ terms and add a weight parameter to each of the terms.
The hyperparameters for the training 
procedure can be found in Table~\ref{tbl:dms-training}.

\paragraph{iVAE} We use the implementation iVAE
\citep{khemakhem2020vaeica} by
\citet{lachapelle2022dms}. The loss function here is 
similar to the ELBO objective, but it adds one parameter $\beta$ to the KL-term
in the objective. The hyperparameters for the training 
procedure can be found in Table~\ref{tbl:dms-training}.

\paragraph{DMS-VAE} \citet{lachapelle2022dms} introduce a sparsity regularization in the ELBO objective to prove the identifiability in temporal and action settings. 
The new objective is given by
\begin{align*}
    \text{ELBO}(\theta, \psi, \lambda) 
    + \alpha_z \|\hat{G}_z \|_{0}
    + \alpha_a \|\hat{G}_a \|_{0}
.\end{align*}
The variable $\hat{G}_z$ is a learned matrix, representing the relations between 
the latent variables between two time steps. The variable $\hat{G}_a$ 
is a learned matrix representing the relations between 
the actions and the latent variables. The norm $\|\cdot\|_{0}$ counts
the number of non-zero terms. This is a discrete objective and 
can transformed into a continuous objective using the Gumbel-Softmax
trick \citep{MaddisonMT17, JangGP17}.

Instead of using the regularized objective,
the authors also propose a constraint-based
optimization procedure on the ELBO objective, where the constraint is determined by
the number of edges in the learned graph.
We use their constrained
optimization method, which provides an optimization schedule that we set by enabling the \verb|--constraint_schedule| flag.
The other hyperparameters for the training 
procedure can be found in Table~\ref{tbl:dms-training}.

\begin{table}[b]
    \caption{The hyperparameters used for the training of the DMS-VAE, TCVAE and 
    iVAE for the action and temporal sparsity datasets}
    \label{tbl:dms-training}
    \centering 
    \small
    \begin{tabular}{m{4cm}m{8cm}}
        \toprule
        \textbf{Hyperparameter} & \textbf{Value}\\
        \bottomrule
        Batch Size & 1024 \\
        \multirow{2}*{Optimizer} & Adam \citep{KingmaB14} and \\
                  & Cooper \citep{gallego2021flexible}\\
        Learning rate & 5e-4 (DMS-VAE), 1e-4 (iVAE), 1e-3 (TCVAE)\\
        KL divergence factor $\beta$ & 1.0\\
        Number of latents & 20\\
        Number of epochs & 500\\
        Gumbel Softmax temperature & 1.0\\ 
        \bottomrule
    \end{tabular}
\end{table}
\subsection{Temporal Causal3DIdent}
\label{app:temp_causal_experimets}

\begin{figure}[t]
    \centering
    \begin{tikzpicture}
        \centering
        \node (a) {\includegraphics{./figs/causal_ident/teapot.png}};       
        \node[right = 0.1cm of a ] (b) {\includegraphics{./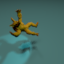}};       
        \node[right = 0.1cm of b ] (c) {\includegraphics{./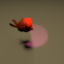}};       
        \node[right = 0.1cm of c ] (d) {\includegraphics{./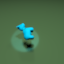}};       
        \node[right = 0.1cm of d ] (e) {\includegraphics{./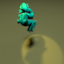}};       
        \node[right = 0.1cm of e ] (f) {\includegraphics{./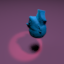}};       
        \node[right = 0.1cm of f ] (g) {\includegraphics{./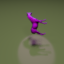}};       

        \node[below = 0.1cm of a] {(a)};
        \node[below = 0.1cm of b] {(b)};
        \node[below = 0.1cm of c] {(c)};
        \node[below = 0.1cm of d] {(d)};
        \node[below = 0.1cm of e] {(e)};
        \node[below = 0.1cm of f] {(f)};
        \node[below = 0.1cm of g] {(g)};
        
    \end{tikzpicture}
    \caption{Examples of the $7$ shapes in the Temporal Causal3DIdent dataset.
        From left to right: teapot, armadillow, bunny, cow, dragon, head and
        horse.}
\label{fig:causal-3d-ident}
\end{figure}

We consider an image data benchmark from the causal representation learning literature  \citep{LippeMLACG22}.
The dataset, model architectures and training hyper parameters are taken 
from \citep{LippeMLACG22}. The implementation 
can be found at 
\url{https://github.com/phlippe/CITRIS/tree/main}, which is available
under the BSD 3-Clause Clear License.

\subsubsection{Dataset Details}
The data comes from a setting
which is referred to as Temporal Intervened Sequences. The assumption is that 
there are $d$ causal variables $(G_1, \ldots, G_d)$ and a corresponding
causal graph $\mathcal{G}=(V, E)$ where each node $i \in V$ represents a causal variable $G_i$.
The variables can be real-valued or vector-valued, and each edge $(i, j) \in E$ 
represents a relation between $G_i$ and $G_j$. 
We assume that there 
are $T$ time steps and for every $t=1,\ldots,T$ the causal variables
follow a stochastic process.
We therefore have a sequence $\{(G_1^{t}, \ldots, G_d^{t})\}_{t=1}^{T}$, 
where only the causal variables in $t-1$ are the parents of the causal variables
in time step $t$. 
The observations $X^t$ at each time step are created through a mixing function 
$X^{t}=f(G_1^{t}, \ldots G^{t}_d, \epsilon^{t})$, where $ \epsilon^{t}$ is an i.i.d. noise variable.
Additionally, we have access to a $d$-dimensional binary vector at each time step $I^{t}\in \{0,1\}^{d}$ 
that tells us which causal variables have been intervened on, but not with which value.

The causal variables that are used in the data generating process are the following:
\begin{itemize}
    \item The \textbf{object position} (\verb|pos_o|) is modelled in $3$ dimensions
        $(x, y, z) \in [-2, 2]^{3}$. The values are forced to be in this interval to ensure
        that the object does not disappear from the image, becomes too small or
        covers the whole image.

    \item The \textbf{object rotation} (\verb|rot_o|) is modelled in $2$ dimensions
        $(\alpha, \beta) \in [0, 2\pi)^2$. Distances for angles are calculated in 
        a periodic fashion, ensuring that angles close to $0$ and $2\pi$ are 
        close together.

    \item The \textbf{spotlight rotation} (\verb|rot_s|) is the positioning of the spotlight
        that shines on the object. The value range is $[0, 2\pi)$, where distances
        are again calculated in a periodic fashion.

    \item The \textbf{spotlight hue} (\verb|hue_s|) is the color of the spotlight. The 
        range of the value is $[0, 2\pi)$, where $0$ corresponds to red. 

    \item The \textbf{background hue} (\verb|hue_b|) is the color of the background.
        The value range is $[0, 2\pi)$ with $0$ corresponding to red again.

    \item The \textbf{object hue} (\verb|hue_o|) is the color of the object, 
        with value range is $[0,2\pi)$ and again with $0$ representing red.
\end{itemize}
\citet{LippeMLACG22} generate the data using Blender, a setup inspired
by \citet{KugelgenSGBSBL21} and using code provided by \citet{ZimmermannSSBB21}.
They generate the dataset by starting with an initial random set of causal variables. 
They then sample the causal variables in each subsequent time step by following a 
specific conditional distribution, which is given by the set of equations 
in \eqref{eq:set_equations}.
\begin{align}
    \begin{split}\label{eq:set_equations}
    f(a,b,c) 
    &= \frac{a - b}{2} + c\\
    \mathrm{pos\_x}^{t+1} 
    &= f(1.5 \cdot \sin(\mathrm{rot\_}\beta^{t}), \mathrm{pos\_x}^{t}, \epsilon_x^{t})\\
    \mathrm{pos\_y}^{t+1} 
    &= f(1.5 \cdot \sin(\mathrm{rot\_}\alpha^{t}), \mathrm{pos\_y}^{t}, \epsilon_y^{t})\\
    \mathrm{pos\_z}^{t+1} 
    &= f(1.5 \cdot \sin(\mathrm{rot\_}\alpha^{t}), \mathrm{pos\_z}^{t}, \epsilon_z^{t})\\
    \mathrm{rot\_}\alpha^{t+1} 
    &= f(\mathrm{hue\_}b^{t}, \mathrm{rot\_}\alpha^{t}, \epsilon_{\alpha}^{t})\\
    \mathrm{rot\_}\beta^{t+1} 
    &= f(\mathrm{hue\_}o^{t}, \mathrm{rot\_}\beta^{t}, \epsilon_{\beta}^{t})\\
    \mathrm{rot\_}s^{t+1} 
    &= f(\mathrm{atan}2(\mathrm{pos\_x}^{t}, \mathrm{pos\_y}^{t}, \mathrm{rot\_}s^{t}, \epsilon_{rs}^{t})\\
    \mathrm{hue\_s}^{t+1} 
    &= f(2\pi - \mathrm{hue\_b}^{t}, \mathrm{hue\_s}^{t}, \epsilon_{hs}^{t})\\
    \mathrm{hue\_b}^{t+1} 
    &= \mathrm{hue\_b}^{t}+ \epsilon_{b}^{t}\\ 
    \mathrm{hue\_b}^{t+1} 
    &= f(g(i), \mathrm{hue\_o}^{t}, \epsilon_{ho}^{t})
    \end{split}
\end{align}

\begin{table}[b]
    \caption{Output of the $g$ function for each object shape. The $\text{avg}$ function for angles
        is defined as $\text{avg}(\alpha, \beta)
        = 
        \text{atan}2\left( \frac{\sin(\alpha) + \sin(\beta)}{2},\frac{\cos(\alpha) + \cos(\beta)}{2}\right)$}
    \label{tbl:shape-values}
    \centering 
    \begin{tabular}{m{4cm}m{7.25cm}}
        \toprule
        \textbf{Object shape} & \textbf{Object hue goal}\\
        \bottomrule
        Teapot Size & $0$ \\
        Armadillo & $\frac{2\pi}{5}$\\
        Hare & $\text{avg}(\mathrm{hue\_s}^{t}, \mathrm{hue\_b}^{t})$\\
        Cow & $\frac{4 \pi}{5}$\\
        Dragon & $ \pi + \text{avg}(\mathrm{hue\_s}^{t}, \mathrm{hue\_b}^{t})$\\
        Head & $\frac{6\pi}{5}$\\
        Horse & $\frac{8\pi}{5}$\\
        \bottomrule
    \end{tabular}
\end{table}

All the noise variables, $\epsilon$, are independently $\cN(0, 10^{-2})$ distributed for the position
and $\cN(0, (0.15)^2)$ distributed for the angles. The $g$ function in the final
line maps the object shapes to specific values detailed in Table~\ref{tbl:shape-values}.

The object shape is changed in each time step with a probability of $0.05$. If it is
changed, a new shape is sampled uniformly over the $7$ shapes.

They then sample for each time step the intervention targets $I^{t+1}_i \sim \text{Bernoulli}(0.1)$.
If a causal variable is intervened on, it is replaced with a random sample
from $U(-2, 2)$ for the position values or $U(0, 2\pi)$ for the angles. For the object
shape a uniform distribution over the $7$ shapes is used. They run this
simulation for $250,000$ steps, which is the full dataset.

We use the already generated dataset downloaded from 
\url{https://zenodo.org/records/6637749#.YqcWCnVBxCA}, which is available
under the Creative Commons Attribution 4.0 International license.

\paragraph{Target Labels} We employ the same tactic as before to create 
the target labels. The true latent variables $G_i$, are binarized 
by looking at the empirical range over the whole dataset, and then thresholded at the midpoint. 
The target label is then determined by sampling three dimensions 
in $\{1, \ldots, 7\} $ and setting the label to $1$ if at least two of
the three latent variables labels are $1$.

\subsubsection{Model Architectures}
\begin{table}[t]
    \caption{Architecture details for the encoder and decoder used in the Temporal Causal3DIdent
    experiments.}
    \label{tbl:citris_architecture}
    \centering 
    \small
    \begin{tabular}{m{1cm}m{2cm}m{3.5cm}m{0.75cm}m{0.75cm}m{3cm}}
        \toprule
        & Layer & $\begin{array}{c}\text{Feature Dimension} \\ (\text{H} \times \text{W} \times \text{C})\end{array}$ & Kernel & Stride & Activation Function\\
        \bottomrule
        \multirow{11}*{Encoder} 
        & Conv & 32 $\times$ 32 $\times$ 64 & 3 & 2 & BatchNorm+SiLU\\
        & Conv & 32 $\times$ 32 $\times$ 64 & 3 & 1 & BatchNorm+SiLU\\
        & Conv & 16 $\times$ 16 $\times$ 64 & 3 & 2 & BatchNorm+SiLU\\
        & Conv & 16 $\times$ 16 $\times$ 64 & 3 & 1 & BatchNorm+SiLU\\
        & Conv & 8 $\times$ 8 $\times$ 64 & 3 & 2 & BatchNorm+SiLU\\
        & Conv & 8 $\times$ 8 $\times$ 64 & 3 & 1 & BatchNorm+SiLU\\
        & Conv & 4 $\times$ 4 $\times$ 64 & 3 & 2 & BatchNorm+SiLU\\
        & Conv & 4 $\times$ 4 $\times$ 64 & 3 & 1 & BatchNorm+SiLU\\
        & Reshape & 1 $\times$ 1 $\times$ 1024 & - & - & -\\
        & Linear & 1 $\times$ 1 $\times$ 256 & - & - & LayerNorm+SiLU\\
        & Linear & 1 $\times$ 1 $\times$ 2 $\cdot$\verb|num_latents| & - & - & -\\
        \hline
        \multirow{14}*{Decoder}
        & Linear & 1 $\times$ 1 $\times$ 256 & - & - & LayerNorm+SiLU\\
        & Linear & 1 $\times$ 1 $\times$ 1024 & - & - & - \\
        & Reshape & 4 $\times$ 4 $\times$ 1024 & - & - & - \\
        & Upsample & 8 $\times$ 8 $\times$ 64 & - & - & - \\
        & ResidualBlock & 8 $\times$ 8 $\times$ 64 & 3 & 1 & - \\
        & Upsample & 16 $\times$ 16 $\times$ 64 & - & - & - \\
        & ResidualBlock & 16 $\times$ 16 $\times$ 64 & 3 & 1 & - \\
        & Upsample & 32 $\times$ 32 $\times$ 64 & - & - & - \\
        & ResidualBlock & 32 $\times$ 32 $\times$ 64 & 3 & 1 & - \\
        & Upsample & 64 $\times$ 64 $\times$ 64 & - & - & - \\
        & ResidualBlock & 64 $\times$ 64 $\times$ 64 & 3 & 1 & - \\
        & Pre-Activations & 64 $\times$ 64 $\times$ 64 & - & - & BatchNorm+SiLU \\
        & Conv & 64 $\times$ 64 $\times$ 64 & 1 & 1 & BatchNorm+SiLU \\
        & Conv & 64 $\times$ 64 $\times$ 3 & 1 & 1 & Tanh \\
        \bottomrule
    \end{tabular}
\end{table}

In this setting we use as the first part of our pipeline both CITRIS \citep{LippeMLACG22}, in particular CITRIS-VAE,
and iVAE \citep{khemakhem2020vaeica}. In both models, the encoder and decoder architecture
are set to be the same. The encoder is a convolutional neural network, which
outputs two parameters per latent variable. These will be the mean and the log
of the standard deviation for the normal distribution that models the latent
variable. The decoder uses bilinear upsampling and residual blocks to
reconstruct the image. The 
full architecture is described in Table~\ref{tbl:citris_architecture}.

As an additional optimization step, an autoencoder is pre-trained to
map the high-dimensional images to lower-dimensional feature vectors, but
without enforcing disentanglement. This is done separately from the main training
procedure, as \citet{LippeMLACG22} mention that this improves performance. 
During training a small amount of Gaussian
noise is added to the encodings to prevent a collapse of the encoding
distribution. No prior is enforced for this encoder. This autoencoder
has
$2$ ResidualBlocks instead of $1$ per resolution in the decoder part. The
training hyperparameters are described in Table~\ref{tbl:ae-training} and the
autoencoder is trained using the MSE reconstruction loss.

\paragraph{CITRIS-VAE} 
CITRIS \citep{LippeMLACG22} allows for multidimensional causal variables. The number of latent variables
is allowed to be bigger than the number of causal variables $d$ , but the model subdivides the latent variables into
$d$ possibly uneven blocks that get mapped to the causal variables. This means that an assignment $\psi\colon \{1, \ldots, k\} \to \{1, \ldots,d\} $ is learned between
the learned latent variables and the learned causal variables, so that
multiple latent variables can represent one causal variables. This is done by assuming that
each $\psi(i)$ follows a Gumbel-Softmax distribution and we learn the continuous parameters
that govern these distributions. CITRIS-VAE optimizes the following objective:
\begin{align*}
    \text{ELBO}(\theta, \phi, \gamma)=
    &-\bE_{z^{t+1}\sim q_{\theta}(\cdot  \mid x^{t+1})}\left[ \log p_{\theta}(x^{t+1}  \mid z^{t+1}) \right] \\
    &+
    \bE_{
        \substack{
            z^{t} \sim q_{\theta}(\cdot  \mid x^{t})\\
            \pi \sim \text{GS}(\gamma)
        }}\left[ 
            \sum_{i=1}^{d}
            \infdivKL{q_{\theta}(z_{\psi(i)}^{t+1}  \mid x^{t+1})}{p_{\phi}(z_{\psi(i)}^{t+1} \mid z^{t}, I^{t+1}_i)}
        \right] 
.\end{align*}
Here $p_{\theta}$ models the encoder, $q_{\theta}$ the decoder,
$p_{\varphi}(z^{t+1} \mid z^{t}, I^{t+1})$ the transition prior and GS
is the Gumbel-Softmax distribution
of the causal variables between time steps given the intervention targets.
Finally, $\pi$ is the target assignment between learned encoding variables
and the causal variables. 
During
training a latent-to-causal variable  assignment is sampled, while during
testing the argmax is used. 
The transition prior $p_{\phi}$ is learned by an autoregressive model, 
which for each $z_{\psi(i)}^{t+1}$ takes $z^{t}, I^{t+1}_{i} $and $z^{t+1}$ 
as inputs and outputs a Gaussian random variable. The autoregressive 
model follows a MADE architecture \citep{KhajenezhadMB21}, 
with $16$ neurons per layer for each encoding, and the input
to these neurons are the features of all previous encodings. 
The prior is $2$ layers deep, and uses SiLU activation functions.
Finally, a small network is trained to predict the intervention 
targets, given $z^{t}$ and $z^{t+1}_{\psi(i)}$ for each $i=1,\ldots k$.

\paragraph{iVAE} To adapt the iVAE model \citep{khemakhem2020vaeica} for this setting, the auxiliary 
variable $u$ is given by the previous observation $x^{t}$ and intervention
targets $I^{t+1}$. Another alteration that is made, is that the prior
with the iVAE model only conditions on $(x^{t}, I^{t+1})$. The main 
difference between iVAE and the CITRIS-VAE is the structure of the
prior $p(z^{t+1} \mid z^t, I^{t+1})$. Another difference is that
no target assignment is learned during the training, but only after. 
For  iVAE a $2$-layer MLP with hidden dimensionality of $128$ is used
for the transition prior.

All the hyperparameters for training CITRIS-VAE and iVAE are reported in Table~\ref{tbl:citris-training}.

\begin{table}
\parbox{0.45\linewidth}{
    \caption{The hyperparameters used for the training of autoencoder used as preprocessing by both
    the CITRIS-VAE and iVAE.}
    \label{tbl:ae-training}
    \centering 
    \small
    \begin{tabular}{m{3cm}m{2.2cm}}
        \toprule
        \textbf{Hyperparameter} & \textbf{Value}\\
        \bottomrule
        Batch Size & 512 \\
        Optimizer & Adam \citep{KingmaB14}\\
        Learning rate & 1e-3\\
        Learning rate scheduler & Cosine Warmup (100 steps)\\
        Number of latents & 32\\
        Gaussian noise $\sigma$ & 0.05\\
        Number of epochs & 1000\\
        \bottomrule
    \end{tabular}
        }
        \hfill
    \parbox{0.45\linewidth}{
    \caption{The hyperparameters used for the training of both the CITRIS-VAE and iVAE 
    models.}
    \label{tbl:citris-training}
    \centering 
    \small
    \begin{tabular}{m{4cm}m{2.2cm}}
        \toprule
        \textbf{Hyperparameter} & \textbf{Value}\\
        \bottomrule
        Batch Size & 512 \\
        Optimizer & Adam \citep{KingmaB14}\\
        Learning rate & 1e-3\\
        Learning rate scheduler & Cosine Warmup (100 steps)\\
        KL divergence factor $\beta$ & 1.0\\
        KL divergence factor $\psi_{0}(\lambda)$ & 0.01\\
        Number of latents & 32\\
        Number of epochs & 600\\
        Target classifier weight & 2.0\\
        Gumbe Softmax temperature & 2.0\\ 
        \bottomrule
    \end{tabular}
}
\end{table}

\subsection{Concept-Based Models}
\label{app:concept-based-models}
Concept-bottleneck models
were introduced and popularized by \citet{koh2020conceptbottleneck}.
They offer a deep learning architecture that is inherently interpretable, by letting the
input go through a concept layer. Each node in that layer indicates either
the value of a predefined concept when using continuous concepts if the concept is present or not
when the concepts are binary.
Then, a linear layer from these concepts to the final target layer is applied. 
Formally, given the observational data $\{(x_i, c_i, y_i)\}_{i=1}^{n}$,
where $x_i \in \mathcal{X}$, $c_i \in  \mathcal{G}_i$, $y_i \in \mathbb{R} $,
a concept encoder $p_\theta(c  \mid x)$ and a label predictor $p_{\phi}(y  \mid c)$
are learned. The total prediction is then performed by 
combining these $2$ probabilities. Different choices are possible for
this combination, which lead
to different concept-based models. We will discuss the Concept-Bottleneck
Model (CBM) 
\citep{koh2020conceptbottleneck}, the Concept-Embedding Model (CEM)
\citep{zarlenga2022concept} and the HardCBM \citep{havasi2022addressing}, which
are all three used in the binary versions of our experiments. 
For all three models, we use the same
encoders as in the VAE models, to make the comparison fair. 

The linear layer allows the user to construct explanations for each prediction. 
When a prediction is made, we can look at the concept activations,
$\hat{c}_i$, and the corresponding weights in the final layer $\psi$. 
An explanation is given by the set 
\begin{align*}
    \{(\psi_i, \hat{c}_i)  \mid  i \in \{1, \ldots, d\} \} 
.\end{align*}
The interpretation is that $\psi_i$ tells us how important that concept is in general, 
and $\hat{c}_i$ tells us how active that concept is for this particular prediction. 

\paragraph{CBM} In the Concept-Bottleneck Model (CBM) \citep{koh2020conceptbottleneck} the output of the concept layer will be denoted by $g$
and the function from the concepts to the labels will be $f$. The output of $g$ are the logits 
of that concept being $1$, meaning $\sigma(g(x)_{i}) = p_{\theta}(c_i = 1  \mid x)$ 
for $i=1, \ldots, d$, where $\sigma$ is the sigmoid function.
The function $f$ is a linear combination of the probabilities for all the concepts
so together $f(\sigma(g(x))) = \psi^{\top}\sigma(g(x)) = p_{\theta, \psi}(y  \mid x)$.

\paragraph{CEM} Concept-Embedding Model (CEM)
\citep{zarlenga2022concept} show that the performance of CBMs can be improved by letting the 
concepts be encoded in a higher dimensional space, i.e., $\hat{c}_i \in \mathbb{R}^{k}$ 
instead of $\hat{c}_i \in \mathbb{R}$. These embeddings are then concatenated
and fed to the label predictor. For each concept, 
a function $s_i$ is learned that predicts the concept label from 
these embeddings. These predictions can then be used for explanations, but
the embeddings are passed  to the label predictor.

\paragraph{HardCBM} CBMs can also be susceptible to concept leakage, when unrelated information is encoded in each concept. HardCBM \citep{havasi2022addressing} propose a way to address this issue. 
In particular, to discourage information from other sources leaking
through the concepts, the HardCBM label predictor only takes binary 
values from the concept layer as input. To make a prediction, 
we have to marginalize over the possible concepts,
$p_{\theta, \psi}(y  \mid  x) = \mathbb{E}_{c \sim p_{\theta}(c|x) }\left[ p_{\phi}(y  \mid  c) \right] $.
To model possible dependencies between the concepts in the distribution of $X  \mid C$, 
the conditional distribution is decomposed as
\begin{align*}
    p_{\theta}(c  \mid x) = \prod_{i=1}^{d}p(c_i  \mid x, c_1, \ldots c_{i-1})
.\end{align*}
In the HardCBM such a decomposition can be achieved by using an autoregressive architecture. 

In all the experiments, each model is 
trained for $100$ epochs, with batch sizes of $256$ and optimised by
Adam with a learning rate of $1$e-$3$.

\subsection{Performance metrics}
To assess the performance of our estimator we report the mean permutation error of the
estimated permutation (MPE), together with the execution time. 
Whenever the concepts are continuous, we also report the $R^2$-score. 
Some of the alignment learning baselines are
created by regressing every input variable onto every output
variable and using the individual $R^2$-scores to extract an
alignment. In those cases, we first match the encodings and ground truth causal variables
according to the highest $R^2$-scores. The $R^2$-score that is reported is
the average of the $R^2$-scores that are
chosen to be matched. 
This is also referred to as the $R^2$-score on the diagonal. 
This gives 
\begin{align*}
    R^2 = \frac{1}{d}\sum_{i=1}^{d} 
    \left(1 - \frac{
            \|\Hvec_i - \alpha_{i}\left(\Mvec_{\hpi(i)}\right)\|^2
        }{
            \|\Hvec_i - \overline{\Hvec_i}\|^2
        }\right),
    \qquad
    \overline{\Hvec_i} = \frac{1}{n}\sum_{\ell=1}^{n} C^{(\ell)}_i
.\end{align*}
Here, $\alpha$ and $\hpi$ are the estimated alignment map and 
estimated permutation. The alignment map is applied
to the whole vector  $\Mvec_{j}$. As in the main text, $\Hvec$ and $\Mvec$ are
matrices where the rows are the data samples and the columns the concepts and encodings
respectively. 

In the downstream classification experiments we also report various other
metrics. The mean accuracy of all the concepts and the accuracy of the
downstream label prediction is reported. Furthermore, two additional metrics
for concept-based models are reported, the OIS and NIS metric. The exact
definitions can be found in \citet{zarlenga2023towards}. The OIS metric measures
how well concept $i$ can be used to reconstruct concept $j$. The NIS metric
measures how much information about concept  $i$ is contained in the other
concepts jointly. A low OIS metric, but high NIS metric
indicates that it is not possible to
reconstruct one concept from any of the others individually, but combining the concept embeddings
would allow to reconstruct the concept from the others.

\paragraph{Estimator settings} We use various settings of our estimator, 
to assess if there are particular advantages for certain versions. The
versions that we use are
\begin{itemize}
    \item \textbf{Linear}, no feature map is applied. 
    \item \textbf{Random Fourier Features}, we sample $8$ random Fourier features
        from the RBF kernel. 
    \item \textbf{Spline}, we calculate cubic spline features with  $4$ knots. 
    \item \textbf{Laplacian}, we use the Laplacian kernel with 
        $\min\{n, 20\}$ components. If the number of components
        is less than the number of datapoints a Nystr\"om sampling procedure is used. 
    \item \textbf{Two stage} we apply a two stage approach, where 
        we use $20\%$ of the data to estimate the permutation
        using no additional features and then use the rest of the 
        data to perform ridge regression with cubic spline features 
        using $4$ knots.
\end{itemize}
We define an array of regularization parameters and report the results 
of the best choice for each $n$. The parameters that we consider are
$\lambda \in \{0.0001, 0.0005, 0.001, 0.005, 0.01, 0.05, 0.1, 0.2\} $.

\paragraph{Alignment Learning Baselines.} We estimate the permutations using the
Pearson and Spearman correlations. We also add another baseline for the experiments with the
encodings learned by a VAE. This baseline 
is given by training multiple neural networks. Each neural 
network takes an individual encoding as input and tries to predict
all the causal variables. The individual $R^2$-scores are used to 
construct a matching again. This neural network is a $2$-layer MLP
with $128$ hidden nodes in each layer and $\tanh$ 
is used as an activation function. The network is trained for $100$ epochs
with Adam and a learning rate of 4e-3. We considered using $32$ 
and $64$ hidden nodes per layer, but concluded that $128$ nodes per layer offers
the best baseline. 

In the Temporal Causal3DIdent dataset experiments, both VAE models
learn groups of latent variables that are matched with a
causal variable. To use the Pearson and Spearman correlations 
in this case, we first sum the latent variables in the groups and then 
calculate the correlation coefficients. 

The baselines in the binary versions of the experiment in the two 
DL datasets are given by the three concept-based models discussed in
Section~\ref{app:concept-based-models}.

\section{Additional Results}
\label{app:addition_results}
In this section, we show all the results obtained in the experiments with the 
Toy Dataset, Action/Temporal Sparsity datasets and Temporal Causal3DIdent
dataset. 

\subsection{Toy Dataset}
\label{app:additional-toy}

\paragraph{Continuous Concepts.} 
We provide plots for experiments performed with linear features, spline
features, random Fourier features and kernels. The concepts are
continuous in these plots. All results are depicted in 
Figures~\ref{fig:spline-perm_error}--\ref{fig:kernel-time}.

In each version of our estimator we see that the MPEs
are good, especially in the misspecified setting. It is interesting
to note that the estimator does not perform well in the wellspecified
case, when the regularization parameter is not tuned correctly. 
This can be explained by the non-invertability of the functions
in this setting. This makes the identification of each matching more 
noisy and difficult, which can also be noted by the fact that the 
Pearson and Spearman correlation approaches are not able to find
the correct permutation, while our estimator is able to do it correctly.

The non-invertibility of the functions in the wellspecified setting
also explains why the $R^2$ scores in that setting are 
worse than the $R^2$-scores in the misspecified setting. Another reason for that observation is that
the norms of the true parameters have to be quite large, to strictly 
adhere to the assumptions of our theoretical results. This increases
the variability of the output data by a large amount and makes
regression more difficult. 

Finally, we also see that our estimator does work well, even when the
correlation is high, but the regularization parameter has to be tuned
correctly. This does come at the cost of an increased computation time.

\paragraph{Binary Concepts Ablation.} We also perform an ablation study, with binary concepts. 
How the continuous scores are binarized is explained in Appendix~\ref{app:synth_experiments}
and sampling the diffeomorphisms from the functions depicted in Figure~\ref{fig:diffeomorphisms}. 
In these experiments we added the CBM \citep{koh2020conceptbottleneck}, CEM \citep{zarlenga2022concept} and HardCBM \citep{havasi2022addressing} as baselines. We report 
the concept accuracy, label accuracy, OIS and NIS scores \citep{zarlenga2023towards} and execution times. 
The results are depicted in Figures~\ref{fig:ablation-concept-acc}--\ref{fig:ablation-time}.
The ablation comes from the fact that the correlation in the training set differs
from the correlation in the test set; 
in the former it is $\rho=0.5$ and in the latter it is $\rho=0$. This is
done to mimic the situation where a classifier could pick up on spurious correlations in the train
set that are not present in the test set. We see that our estimator performs well in this
case in terms of concept accuracy and label accuracy, and better than the concept based-models whenever the regularization parameter is set
to be not too large. Our estimator performs similarly as the concept-based models in terms of OIS, but consistently
scores better when considering the NIS metric. 

It is interesting to note that CEM \citep{zarlenga2022concept} performs the worst in terms of the NIS
and OIS metrics. An explanation for this is that it has
the capacity to store more information about the other concepts in the embeddings. 
CBM \citep{koh2020conceptbottleneck} and HardCBM \citep{havasi2022addressing} suffer less from this problem, 
as the concepts are stored as one-dimensional objects. All concept-based models perform similarly
in terms of label and concept accuracy. 

One caveat is that the training times of our estimator do increase faster than the training times of the 
concept-based models for increasing number of dimensions.
We think this can be attributed to the fact that to perform the logistic
alignment learning procedure, a loop over all dimensions has to be done. In each step 
one logistic Group Lasso regression is performed.
In the continuous case, this was not necessary, as we could see the output as multi-dimensional, 
and all regressions could be done in parallel. In principle this
should be possible for the logistic Group Lasso as well, but implementing this parallel computation
was beyond the scope of this paper.
The implementation of the logistic Group Lasso we use, does not have this feature for 
multiple labels. 

\paragraph{Mixtures of Causal Variables.} We conducted an ablation with the
synthetic toy dataset to evaluate performance when we have mixtures of causal
variables as inputs to the alignment, which simulates the case in which CRL methods might not provide complete disentanglement, but only partial disentanglement. In these experiments, we set the dimension $d$ 
to an even
number and then randomly created $d/2$ pairs of $M_j$ variables. For
each pair $(M_i, M_j)$, we
then
created a new mixture variable $M_j' =
(1 - a) M_i + a M_j$,  where $a$ represents the mixing parameter. We
then used these $M_j'$ variables as input to the alignment estimation
step. We ran 10 random repetitions for each setting and show the 
permutation error for the original variables in Figure~\ref{fig:mixed-perm-error}. We additionally show a paired mean
permutation error, where we consider the estimated permutation correct if the
matching identifies the pairs correctly. These results are shown in
Figure~\ref{fig:mixed-paired-perm-error}.

What we see is that our estimator keeps performing perfectly up until 
slightly below $a = 1 / 2$, but afterwards does not get the permutation
right. However, we do see that the pairs are matched correctly for almost
all values of $a$. 

\subsection{Action/Temporal Sparsity Datasets and Temporal Causal3DIdent Dataset}
\label{app:additional-dl}

\paragraph{Continuous Concepts.}
Here, we report all results obtained in the Action/Temporal Sparsity datasets
and Temporal Causal3DIdent dataset. The MPEs are reported in Table~\ref{tbl:perm-error-dms},
the $R^2$-scores in Table~\ref{tbl:r2-dms} and the execution times
are plotted in Figure~\ref{fig:times-both}. 

In terms of MPEs, we always see that a version of our estimator performs the best
or as good as the best in each of the datasets for each number of training data samples
considered. For the $R^2$scores, 
we see that we perform well in some cases in the low data regime, but when
using all the data available, the neural network often performs the best. 
This does come at a computational cost, where the neural network approach
requires two to three orders of magnitude more computation time. 

It is interesting to note that the estimator typically works better 
for the more advanced models developed. This can be explained by the fact
that these models achieve a better disentanglement, which should make it
easier to find the correct matching between the encodings and the 
causal variables. 

\paragraph{Binary Concepts.} We create binary concepts and labels in a similar
fashion as detailed in Appendix~\ref{app:synth_experiments}. The number of active columns
selected is now fixed at $k=3$ and at least two of them have to be non-zero for the label to
be one. The permutation errors are also reported in this case and can be found
in Table~\ref{tbl:perm-error-bin}. The baselines are CBM \citep{koh2020conceptbottleneck}, CEM \citep{zarlenga2022concept} and HardCBM \citep{havasi2022addressing}.
The concept accuracy and label accuracy are reported in Table~\ref{tbl:concept-acc-roc}, 
and the OIS- and NIS-metrics in Table~\ref{tbl:ois-nis}. The execution times
are plotted in Figure~\ref{fig:times-both-bin}.

We see that the standard concept-based models perform worse with respect to all metrics with 
only a few labels. However, in the action and temporal datasets, they quickly 
perform well in terms of concept accuracy and OIS. Our estimator performs
especially well when using non-tabular data, such as in the Temporal Causal3Dident
dataset. This shows the added benefit of the CRL phase of our framework. 

\begin{figure}[h]
    \def\incFigure#1#2#3{%
        \node{\scalebox{1}{\includegraphics[
            trim={{#1} 0 {#2} 0.2cm},
            clip
        ]{./figs/spline/perm_error/#3.pdf}}};
    }
    \centerfloat
    \resizebox{1.2\textwidth}{!}{
    \begin{tikzpicture} 
        \node (legend) {\includegraphics[scale=2.5]{./figs/spline/legend.pdf}};
        \matrix[below=-0.2cm of legend.south, 
            row sep=-0.1cm, 
            column sep=-0.2cm,
            ampersand replacement=\&] (plotmatrix) {
            \incFigure{0.1cm}{0cm}{alpha_0}\&[-0.22cm]
            \incFigure{1.38cm}{0cm}{alpha_1}\&
            \incFigure{0.55cm}{0cm}{alpha_2}\&[-0.215cm]
            \incFigure{1.38cm}{0cm}{alpha_3}\&
            \incFigure{0.55cm}{0cm}{alpha_4}\&[-0.215cm]
            \incFigure{1.38cm}{0cm}{alpha_5}\&\\

            \incFigure{0.1cm}{0cm}{dim_0}\&
            \incFigure{1.38cm}{0cm}{dim_1}\&
            \incFigure{0.55cm}{0cm}{dim_2}\&
            \incFigure{1.38cm}{0cm}{dim_3}\&
            \incFigure{0.55cm}{0cm}{dim_4}\&
            \incFigure{1.38cm}{0cm}{dim_5}\&\\

            \incFigure{0.1cm}{0cm}{entanglement_0}\&
            \incFigure{1.38cm}{0cm}{entanglement_1}\&
            \incFigure{0.55cm}{0cm}{entanglement_2}\&
            \incFigure{1.38cm}{0cm}{entanglement_3}\&
            \incFigure{0.55cm}{0cm}{entanglement_4}\&
            \incFigure{1.38cm}{0cm}{entanglement_5}\&\\

            \incFigure{0.1cm}{0cm}{n_total_0}\&
            \incFigure{1.38cm}{0cm}{n_total_1}\&
            \incFigure{0.55cm}{0cm}{n_total_2}\&
            \incFigure{1.38cm}{0cm}{n_total_3}\&
            \incFigure{0.55cm}{0cm}{n_total_4}\&
            \incFigure{1.38cm}{0cm}{n_total_5}\&\\
        };
    \end{tikzpicture}
    }
    \caption{Permutation Error using Spline Features for all parameters
        considered in the experiments. Each plot is paired, where the left is
        wellspecified case and the right is the misspecified case.}
    \label{fig:spline-perm_error}
\end{figure}
\begin{figure}[h]
    \def\incFigure#1#2#3{%
        \node{\scalebox{1}{\includegraphics[
            trim={{#1} 0 {#2} 0.2cm},
            clip
        ]{./figs/spline/r2/#3.pdf}}};
    }
    \centerfloat
    \resizebox{1.2\textwidth}{!}{
    \begin{tikzpicture} 
        \node (legend) {\includegraphics[scale=2.5]{./figs/spline/legend.pdf}};
        \matrix[below=-0.2cm of legend.south, 
            row sep=-0.1cm, 
            column sep=-0.2cm,
            ampersand replacement=\&] (plotmatrix) {
            \incFigure{0.1cm}{0cm}{alpha_0}\&[-0.22cm]
            \incFigure{1.38cm}{0cm}{alpha_1}\&
            \incFigure{0.55cm}{0cm}{alpha_2}\&[-0.215cm]
            \incFigure{1.38cm}{0cm}{alpha_3}\&
            \incFigure{0.55cm}{0cm}{alpha_4}\&[-0.215cm]
            \incFigure{1.38cm}{0cm}{alpha_5}\&\\

            \incFigure{0.1cm}{0cm}{dim_0}\&
            \incFigure{1.38cm}{0cm}{dim_1}\&
            \incFigure{0.55cm}{0cm}{dim_2}\&
            \incFigure{1.38cm}{0cm}{dim_3}\&
            \incFigure{0.55cm}{0cm}{dim_4}\&
            \incFigure{1.38cm}{0cm}{dim_5}\&\\

            \incFigure{0.1cm}{0cm}{entanglement_0}\&
            \incFigure{1.38cm}{0cm}{entanglement_1}\&
            \incFigure{0.55cm}{0cm}{entanglement_2}\&
            \incFigure{1.38cm}{0cm}{entanglement_3}\&
            \incFigure{0.55cm}{0cm}{entanglement_4}\&
            \incFigure{1.38cm}{0cm}{entanglement_5}\&\\

            \incFigure{0.1cm}{0cm}{n_total_0}\&
            \incFigure{1.38cm}{0cm}{n_total_1}\&
            \incFigure{0.55cm}{0cm}{n_total_2}\&
            \incFigure{1.38cm}{0cm}{n_total_3}\&
            \incFigure{0.55cm}{0cm}{n_total_4}\&
            \incFigure{1.38cm}{0cm}{n_total_5}\&\\
        };
    \end{tikzpicture}
    }
    \caption{$R^2$-score on the diagonal using Spline Features for all parameters
        considered in the experiments. Each plot is paired, where the left is
        wellspecified case and the right is the misspecified case.}
    \label{fig:spline-r2}
\end{figure}
\begin{figure}[h]
\def\incFigure#1#2#3{%
    \node{\scalebox{1}{\includegraphics[
        trim={{#1} 0 {#2} 0.2cm},
        clip
    ]{./figs/spline/time/#3.pdf}}};
}
    \centerfloat
    \resizebox{1.2\textwidth}{!}{
\begin{tikzpicture} 
    \node (legend) {\includegraphics[scale=2.5]{./figs/spline/legend.pdf}};
    \matrix[below=-0.2cm of legend.south, 
        row sep=-0.1cm, 
        column sep=-0.2cm,
        ampersand replacement=\&] (plotmatrix) {
            \incFigure{0.25cm}{0cm}{alpha_0}\&[-0.22cm]
            \incFigure{1.68cm}{0cm}{alpha_1}\&
            \incFigure{0.70cm}{0cm}{alpha_2}\&[-0.215cm]
            \incFigure{1.68cm}{0cm}{alpha_3}\&
            \incFigure{0.70cm}{0cm}{alpha_4}\&[-0.215cm]
            \incFigure{1.68cm}{0cm}{alpha_5}\&\\

            \incFigure{0.25cm}{0cm}{dim_0}\&
            \incFigure{1.68cm}{0cm}{dim_1}\&
            \incFigure{0.70cm}{0cm}{dim_2}\&
            \incFigure{1.68cm}{0cm}{dim_3}\&
            \incFigure{0.70cm}{0cm}{dim_4}\&
            \incFigure{1.68cm}{0cm}{dim_5}\&\\

            \incFigure{0.25cm}{0cm}{entanglement_0}\&
            \incFigure{1.68cm}{0cm}{entanglement_1}\&
            \incFigure{0.70cm}{0cm}{entanglement_2}\&
            \incFigure{1.68cm}{0cm}{entanglement_3}\&
            \incFigure{0.70cm}{0cm}{entanglement_4}\&
            \incFigure{1.68cm}{0cm}{entanglement_5}\&\\

            \incFigure{0.25cm}{0cm}{n_total_0}\&
            \incFigure{1.68cm}{0cm}{n_total_1}\&
            \incFigure{0.70cm}{0cm}{n_total_2}\&
            \incFigure{1.68cm}{0cm}{n_total_3}\&
            \incFigure{0.70cm}{0cm}{n_total_4}\&
            \incFigure{1.68cm}{0cm}{n_total_5}\&\\    
        };
\end{tikzpicture}
}
\caption{Execution times using Spline Features for all parameters
        considered in the experiments. Each plot is paired, where the left is
        wellspecified case and the right is the misspecified case.}
\label{fig:spline-time}
\end{figure}
\begin{figure}[h]
    \def\incFigure#1#2#3{%
        \node{\scalebox{1}{\includegraphics[
            trim={{#1} 0 {#2} 0.2cm},
            clip
        ]{./figs/rff/perm_error/#3.pdf}}};
    }
    \centerfloat
    \resizebox{1.2\textwidth}{!}{
    \begin{tikzpicture} 
        \node (legend) {\includegraphics[scale=2.5]{./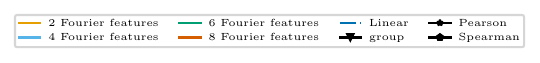}};
        \matrix[below=-0.2cm of legend.south, 
            row sep=-0.1cm, 
            column sep=-0.2cm,
            ampersand replacement=\&] (plotmatrix) {
            \incFigure{0.1cm}{0cm}{alpha_0}\&[-0.22cm]
            \incFigure{1.38cm}{0cm}{alpha_1}\&
            \incFigure{0.55cm}{0cm}{alpha_2}\&[-0.215cm]
            \incFigure{1.38cm}{0cm}{alpha_3}\&
            \incFigure{0.55cm}{0cm}{alpha_4}\&[-0.215cm]
            \incFigure{1.38cm}{0cm}{alpha_5}\&\\

            \incFigure{0.1cm}{0cm}{dim_0}\&
            \incFigure{1.38cm}{0cm}{dim_1}\&
            \incFigure{0.55cm}{0cm}{dim_2}\&
            \incFigure{1.38cm}{0cm}{dim_3}\&
            \incFigure{0.55cm}{0cm}{dim_4}\&
            \incFigure{1.38cm}{0cm}{dim_5}\&\\

            \incFigure{0.1cm}{0cm}{entanglement_0}\&
            \incFigure{1.38cm}{0cm}{entanglement_1}\&
            \incFigure{0.55cm}{0cm}{entanglement_2}\&
            \incFigure{1.38cm}{0cm}{entanglement_3}\&
            \incFigure{0.55cm}{0cm}{entanglement_4}\&
            \incFigure{1.38cm}{0cm}{entanglement_5}\&\\

            \incFigure{0.1cm}{0cm}{n_total_0}\&
            \incFigure{1.38cm}{0cm}{n_total_1}\&
            \incFigure{0.55cm}{0cm}{n_total_2}\&
            \incFigure{1.38cm}{0cm}{n_total_3}\&
            \incFigure{0.55cm}{0cm}{n_total_4}\&
            \incFigure{1.38cm}{0cm}{n_total_5}\&\\
        };
    \end{tikzpicture}
}
    \caption{Permutation Errors using Random Fourier Features for all parameters
        considered in the experiments. Each plot is paired, where the left is
        wellspecified case and the right is the misspecified case.}
    \label{fig:rff-perm-errors}
\end{figure}
\begin{figure}[h]
    \def\incFigure#1#2#3{%
        \node{\scalebox{1}{\includegraphics[
            trim={{#1} 0 {#2} 0.2cm},
            clip
        ]{./figs/rff/r2/#3.pdf}}};
    }
    \centerfloat
    \resizebox{1.2\textwidth}{!}{
    \begin{tikzpicture} 
        \node (legend) {\includegraphics[scale=2.5]{./figs/rff/legend.pdf}};
        \matrix[below=-0.2cm of legend.south, 
            row sep=-0.1cm, 
            column sep=-0.2cm,
            ampersand replacement=\&] (plotmatrix) {
            \incFigure{0.1cm}{0cm}{alpha_0}\&[-0.22cm]
            \incFigure{1.38cm}{0cm}{alpha_1}\&
            \incFigure{0.55cm}{0cm}{alpha_2}\&[-0.215cm]
            \incFigure{1.38cm}{0cm}{alpha_3}\&
            \incFigure{0.55cm}{0cm}{alpha_4}\&[-0.215cm]
            \incFigure{1.38cm}{0cm}{alpha_5}\&\\

            \incFigure{0.1cm}{0cm}{dim_0}\&
            \incFigure{1.38cm}{0cm}{dim_1}\&
            \incFigure{0.55cm}{0cm}{dim_2}\&
            \incFigure{1.38cm}{0cm}{dim_3}\&
            \incFigure{0.55cm}{0cm}{dim_4}\&
            \incFigure{1.38cm}{0cm}{dim_5}\&\\

            \incFigure{0.1cm}{0cm}{entanglement_0}\&
            \incFigure{1.38cm}{0cm}{entanglement_1}\&
            \incFigure{0.55cm}{0cm}{entanglement_2}\&
            \incFigure{1.38cm}{0cm}{entanglement_3}\&
            \incFigure{0.55cm}{0cm}{entanglement_4}\&
            \incFigure{1.38cm}{0cm}{entanglement_5}\&\\

            \incFigure{0.1cm}{0cm}{n_total_0}\&
            \incFigure{1.38cm}{0cm}{n_total_1}\&
            \incFigure{0.55cm}{0cm}{n_total_2}\&
            \incFigure{1.38cm}{0cm}{n_total_3}\&
            \incFigure{0.55cm}{0cm}{n_total_4}\&
            \incFigure{1.38cm}{0cm}{n_total_5}\&\\
        };
    \end{tikzpicture}
}
    \caption{$R^2$-score on the diagonal using Random Fourier Features for all parameters
        considered in the experiments. Each plot is paired, where the left is
        wellspecified case and the right is the misspecified case.}
    \label{fig:rff-r2}
\end{figure}
\begin{figure}[h]
    \def\incFigure#1#2#3{%
        \node{\scalebox{1}{\includegraphics[
            trim={{#1} 0 {#2} 0.2cm},
            clip
        ]{./figs/rff/time/#3.pdf}}};
    }
    \centerfloat
    \resizebox{1.2\textwidth}{!}{
    \begin{tikzpicture} 
        \node (legend) {\includegraphics[scale=2.5]{./figs/rff/legend.pdf}};
        \matrix[below=-0.2cm of legend.south, 
            row sep=-0.1cm, 
            column sep=-0.2cm,
            ampersand replacement=\&] (plotmatrix) {
            \incFigure{0.25cm}{0cm}{alpha_0}\&[-0.22cm]
            \incFigure{1.68cm}{0cm}{alpha_1}\&
            \incFigure{0.70cm}{0cm}{alpha_2}\&[-0.215cm]
            \incFigure{1.68cm}{0cm}{alpha_3}\&
            \incFigure{0.70cm}{0cm}{alpha_4}\&[-0.215cm]
            \incFigure{1.68cm}{0cm}{alpha_5}\&\\

            \incFigure{0.25cm}{0cm}{dim_0}\&
            \incFigure{1.68cm}{0cm}{dim_1}\&
            \incFigure{0.70cm}{0cm}{dim_2}\&
            \incFigure{1.68cm}{0cm}{dim_3}\&
            \incFigure{0.70cm}{0cm}{dim_4}\&
            \incFigure{1.68cm}{0cm}{dim_5}\&\\

            \incFigure{0.25cm}{0cm}{entanglement_0}\&
            \incFigure{1.68cm}{0cm}{entanglement_1}\&
            \incFigure{0.70cm}{0cm}{entanglement_2}\&
            \incFigure{1.68cm}{0cm}{entanglement_3}\&
            \incFigure{0.70cm}{0cm}{entanglement_4}\&
            \incFigure{1.68cm}{0cm}{entanglement_5}\&\\

            \incFigure{0.25cm}{0cm}{n_total_0}\&
            \incFigure{1.68cm}{0cm}{n_total_1}\&
            \incFigure{0.70cm}{0cm}{n_total_2}\&
            \incFigure{1.68cm}{0cm}{n_total_3}\&
            \incFigure{0.70cm}{0cm}{n_total_4}\&
            \incFigure{1.68cm}{0cm}{n_total_5}\&\\
        };
    \end{tikzpicture}
    }
    \caption{Execution times using Random Fourier Features for all parameters
        considered in the experiments. Each plot is paired, where the left is
        wellspecified case and the right is the misspecified case.}
    \label{fig:rff-time}
\end{figure}
\begin{figure}[h]
    \def\incFigure#1#2#3{%
        \node{\scalebox{1}{\includegraphics[
            trim={{#1} 0 {#2} 0.2cm},
            clip
        ]{./figs/kernel/perm_error/#3.pdf}}};
    }
    \centerfloat
    \resizebox{1.2\textwidth}{!}{
    \begin{tikzpicture} 
        \node (legend) {\includegraphics[scale=2.5]{./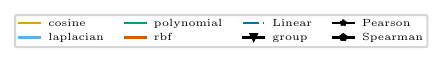}};
        \matrix[below=-0.2cm of legend.south, 
            row sep=-0.1cm, 
            column sep=-0.2cm,
            ampersand replacement=\&] (plotmatrix) {
            \incFigure{0.1cm}{0cm}{alpha_0}\&[-0.22cm]
            \incFigure{1.38cm}{0cm}{alpha_1}\&
            \incFigure{0.55cm}{0cm}{alpha_2}\&[-0.215cm]
            \incFigure{1.38cm}{0cm}{alpha_3}\&
            \incFigure{0.55cm}{0cm}{alpha_4}\&[-0.215cm]
            \incFigure{1.38cm}{0cm}{alpha_5}\&\\

            \incFigure{0.1cm}{0cm}{dim_0}\&
            \incFigure{1.38cm}{0cm}{dim_1}\&
            \incFigure{0.55cm}{0cm}{dim_2}\&
            \incFigure{1.38cm}{0cm}{dim_3}\&
            \incFigure{0.55cm}{0cm}{dim_4}\&
            \incFigure{1.38cm}{0cm}{dim_5}\&\\

            \incFigure{0.1cm}{0cm}{entanglement_0}\&
            \incFigure{1.38cm}{0cm}{entanglement_1}\&
            \incFigure{0.55cm}{0cm}{entanglement_2}\&
            \incFigure{1.38cm}{0cm}{entanglement_3}\&
            \incFigure{0.55cm}{0cm}{entanglement_4}\&
            \incFigure{1.38cm}{0cm}{entanglement_5}\&\\

            \incFigure{0.1cm}{0cm}{n_total_0}\&
            \incFigure{1.38cm}{0cm}{n_total_1}\&
            \incFigure{0.55cm}{0cm}{n_total_2}\&
            \incFigure{1.38cm}{0cm}{n_total_3}\&
            \incFigure{0.55cm}{0cm}{n_total_4}\&
            \incFigure{1.38cm}{0cm}{n_total_5}\&\\
        };
    \end{tikzpicture}
    }
    \caption{Permutation Errors using Kernels}
    \label{fig:kernel-perm-errors}
\end{figure}
\begin{figure}[h]
    \def\incFigure#1#2#3{%
        \node{\scalebox{1}{\includegraphics[
            trim={{#1} 0 {#2} 0.2cm},
            clip
        ]{./figs/kernel/r2/#3.pdf}}};
    }
    \centerfloat
    \resizebox{1.2\textwidth}{!}{
    \begin{tikzpicture} 
        \node (legend) {\includegraphics[scale=2.5]{./figs/kernel/legend.pdf}};
        \matrix[below=-0.2cm of legend.south, 
            row sep=-0.1cm, 
            column sep=-0.2cm,
            ampersand replacement=\&] (plotmatrix) {
            \incFigure{0.1cm}{0cm}{alpha_0}\&[-0.22cm]
            \incFigure{1.38cm}{0cm}{alpha_1}\&
            \incFigure{0.55cm}{0cm}{alpha_2}\&[-0.215cm]
            \incFigure{1.38cm}{0cm}{alpha_3}\&
            \incFigure{0.55cm}{0cm}{alpha_4}\&[-0.215cm]
            \incFigure{1.38cm}{0cm}{alpha_5}\&\\

            \incFigure{0.1cm}{0cm}{dim_0}\&
            \incFigure{1.38cm}{0cm}{dim_1}\&
            \incFigure{0.55cm}{0cm}{dim_2}\&
            \incFigure{1.38cm}{0cm}{dim_3}\&
            \incFigure{0.55cm}{0cm}{dim_4}\&
            \incFigure{1.38cm}{0cm}{dim_5}\&\\

            \incFigure{0.1cm}{0cm}{entanglement_0}\&
            \incFigure{1.38cm}{0cm}{entanglement_1}\&
            \incFigure{0.55cm}{0cm}{entanglement_2}\&
            \incFigure{1.38cm}{0cm}{entanglement_3}\&
            \incFigure{0.55cm}{0cm}{entanglement_4}\&
            \incFigure{1.38cm}{0cm}{entanglement_5}\&\\

            \incFigure{0.1cm}{0cm}{n_total_0}\&
            \incFigure{1.38cm}{0cm}{n_total_1}\&
            \incFigure{0.55cm}{0cm}{n_total_2}\&
            \incFigure{1.38cm}{0cm}{n_total_3}\&
            \incFigure{0.55cm}{0cm}{n_total_4}\&
            \incFigure{1.38cm}{0cm}{n_total_5}\&\\
        };
    \end{tikzpicture}
}
    \caption{$R^2$-score on the diagonal using Kernels for all parameters
        considered in the experiments. Each plot is paired, where the left is
        wellspecified case and the right is the misspecified case.}
    \label{fig:kernel-r2}
\end{figure}
\begin{figure}
    \def\incFigure#1#2#3{%
        \node{\scalebox{1}{\includegraphics[
            trim={{#1} 0 {#2} 0.2cm},
            clip
        ]{./figs/kernel/time/#3.pdf}}};
    }
    \centerfloat
    \resizebox{1.2\textwidth}{!}{
    \begin{tikzpicture} 
        \node (legend) {\includegraphics[scale=2.5]{./figs/kernel/legend.pdf}};
        \matrix[below=-0.2cm of legend.south, 
            row sep=-0.1cm, 
            column sep=-0.2cm,
            ampersand replacement=\&] (plotmatrix) {
            \incFigure{0.25cm}{0cm}{alpha_0}\&[-0.22cm]
            \incFigure{1.68cm}{0cm}{alpha_1}\&
            \incFigure{0.70cm}{0cm}{alpha_2}\&[-0.215cm]
            \incFigure{1.68cm}{0cm}{alpha_3}\&
            \incFigure{0.70cm}{0cm}{alpha_4}\&[-0.215cm]
            \incFigure{1.68cm}{0cm}{alpha_5}\&\\

            \incFigure{0.25cm}{0cm}{dim_0}\&
            \incFigure{1.68cm}{0cm}{dim_1}\&
            \incFigure{0.70cm}{0cm}{dim_2}\&
            \incFigure{1.68cm}{0cm}{dim_3}\&
            \incFigure{0.70cm}{0cm}{dim_4}\&
            \incFigure{1.68cm}{0cm}{dim_5}\&\\

            \incFigure{0.25cm}{0cm}{entanglement_0}\&
            \incFigure{1.68cm}{0cm}{entanglement_1}\&
            \incFigure{0.70cm}{0cm}{entanglement_2}\&
            \incFigure{1.68cm}{0cm}{entanglement_3}\&
            \incFigure{0.70cm}{0cm}{entanglement_4}\&
            \incFigure{1.68cm}{0cm}{entanglement_5}\&\\

            \incFigure{0.25cm}{0cm}{n_total_0}\&
            \incFigure{1.68cm}{0cm}{n_total_1}\&
            \incFigure{0.70cm}{0cm}{n_total_2}\&
            \incFigure{1.68cm}{0cm}{n_total_3}\&
            \incFigure{0.70cm}{0cm}{n_total_4}\&
            \incFigure{1.68cm}{0cm}{n_total_5}\&\\        
        };
    \end{tikzpicture}
}
    \caption{Execution times using kernels for all parameters
        considered in the experiments. Each plot is paired, where the left is
        wellspecified case and the right is the misspecified case.}
    \label{fig:kernel-time}
\end{figure}
\begin{figure}[t]
    \vspace{-2em}
    \centerfloat
    \resizebox{\textwidth}{!}{%
        \includegraphics[
            trim={0 0 0 0.2cm},
            clip]{./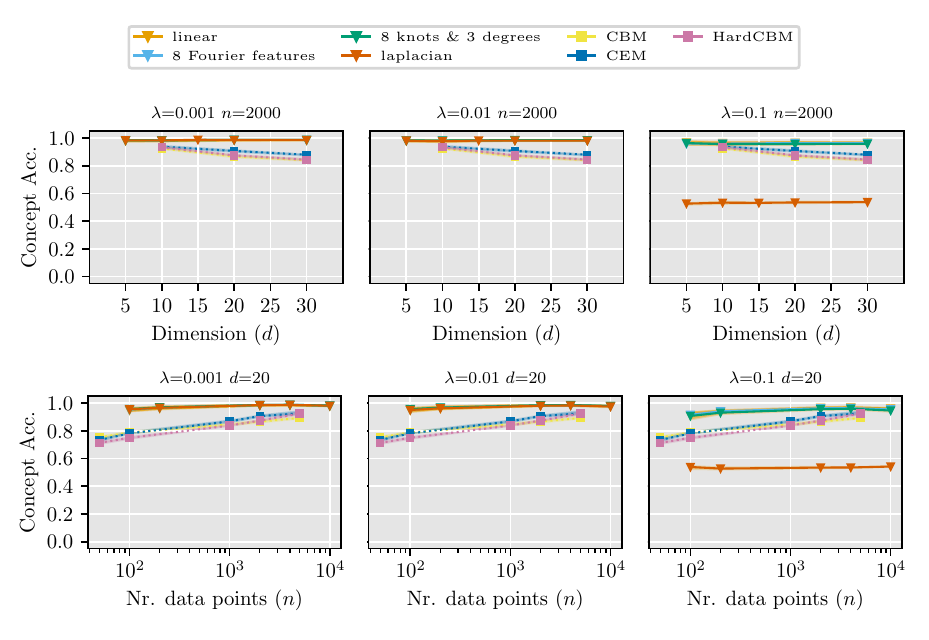}
    }
    \caption{Concept accuracy on the Toy Dataset when the training correlation is $0.5$ and the test correlation 
    is  $0$. Several versions of our estimator are compared to the performance of several concept-based models. }
    \label{fig:ablation-concept-acc}
    \vspace{-2em}
\end{figure}
\begin{figure}[b]
    \centerfloat
    \resizebox{\textwidth}{!}{%
        \includegraphics[
            trim={0 0 0 0.2cm},
            clip]{./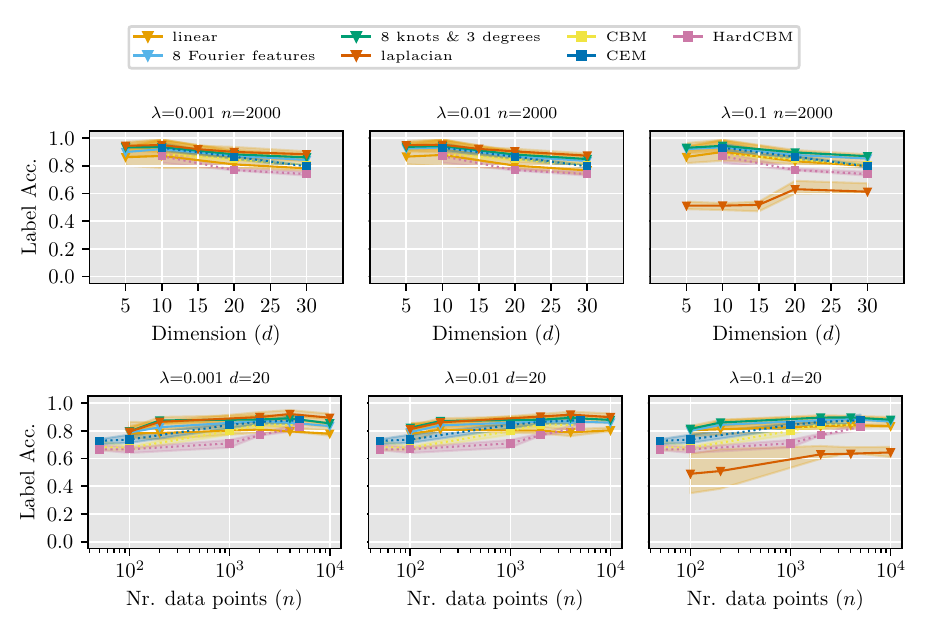}
    }
    \caption{Label accuracy on the Toy Dataset when the training correlation is $0.5$ and the test correlation 
    is  $0$. Several versions of our estimator are compared to the performance of several concept-based models. }
    \label{fig:ablation-label-acc}
\end{figure}
\begin{figure}[t]
    \vspace{-2em}
    \centerfloat
    \resizebox{\textwidth}{!}{%
        \includegraphics[
            trim={0 0 0 0.2cm},
            clip]{./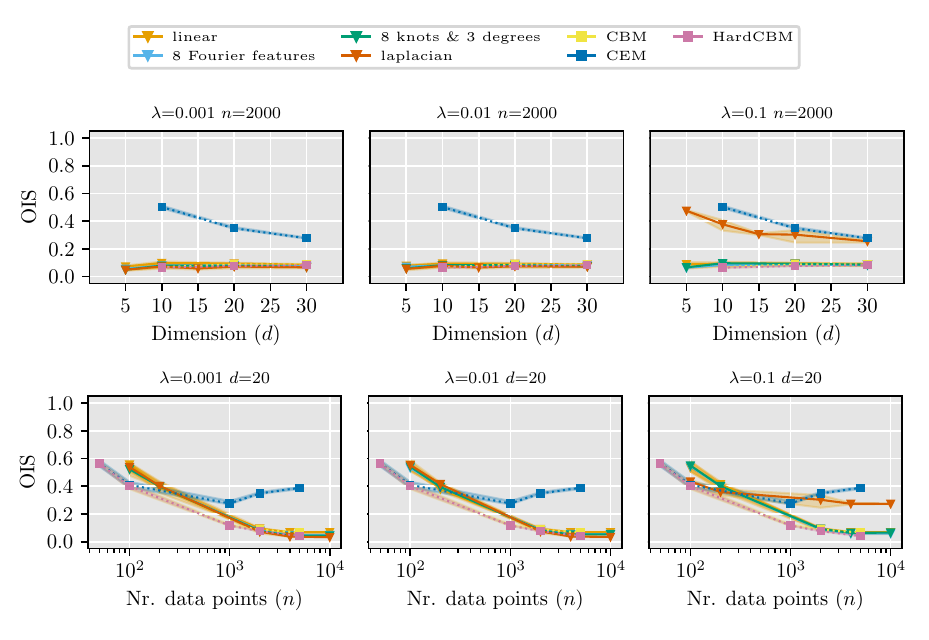}
    }    
    \caption{OIS metric on the Toy Dataset when the training correlation is $0.5$ and the test correlation 
    is  $0$. Several versions of our estimator are compared to the performance of several concept-based models. }
    \label{fig:ablation-ois-score}
    \vspace{-2em}
\end{figure}
\begin{figure}[b]
    \centerfloat
    \resizebox{\textwidth}{!}{%
        \includegraphics[
            trim={0 0 0 0.2cm},
            clip]{./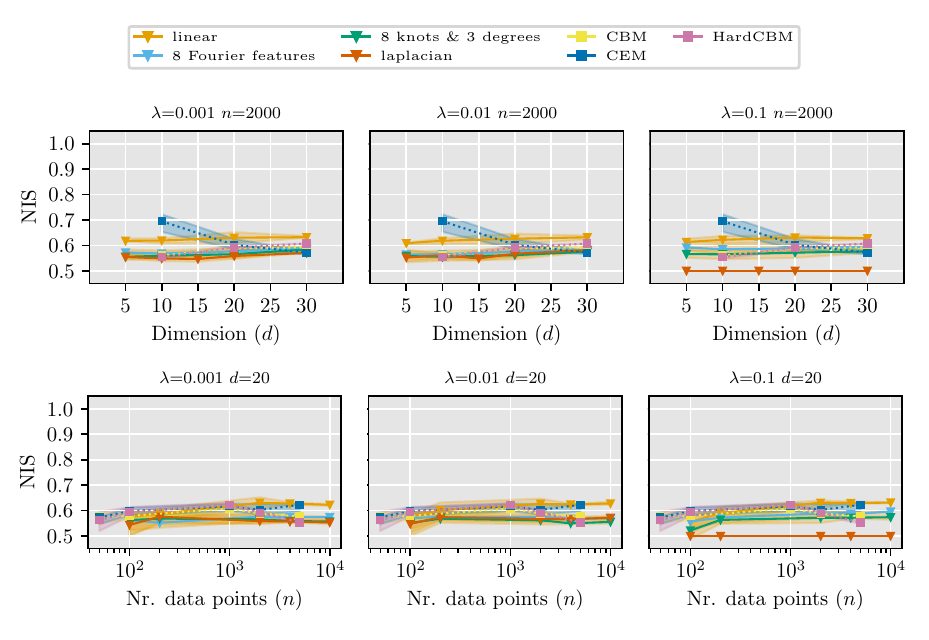}
    }    
    \caption{NIS metric on the Toy Dataset when the training correlation is $0.5$ and the test correlation 
    is  $0$. Several versions of our estimator are compared to the performance of several concept-based models.}
    \label{fig:ablation-nis-score}
\end{figure}
\begin{figure}[t]
    \vspace{-2em}
    \centerfloat
    \resizebox{\textwidth}{!}{%
        \includegraphics[
            trim={0 0 0 0.2cm},
            clip]{./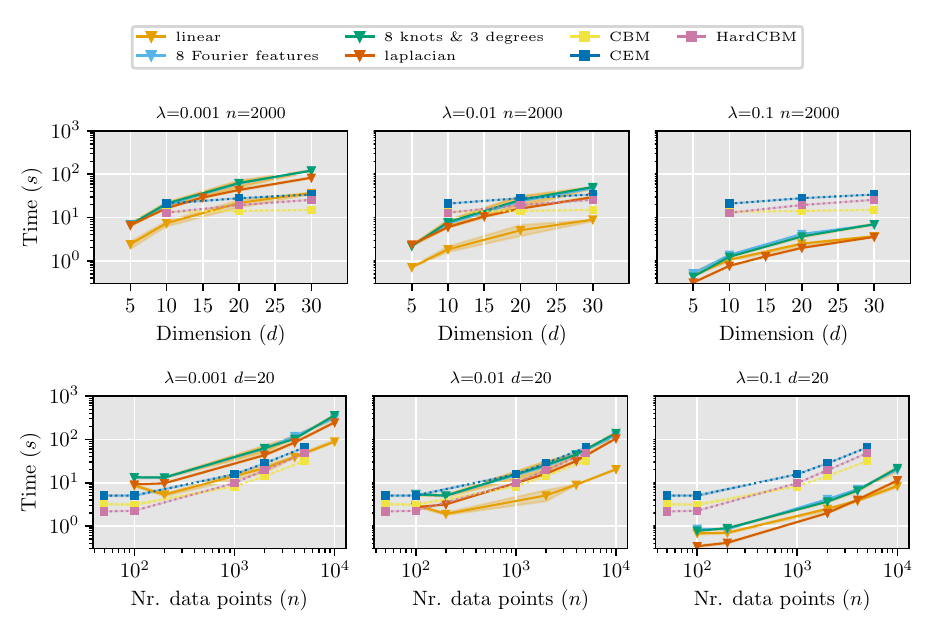}
    }
    \caption{Execution times on the Toy Dataset when the training correlation is $0.5$ and the test correlation 
    is  $0$. Several versions of our estimator are compared to the performance of several concept-based models. }
    \label{fig:ablation-time}
    \vspace{-2em}
\end{figure}

\clearpage
\newpage

\begin{figure}
    \centerfloat
    \resizebox{1.2\textwidth}{!}{%
        \includegraphics{./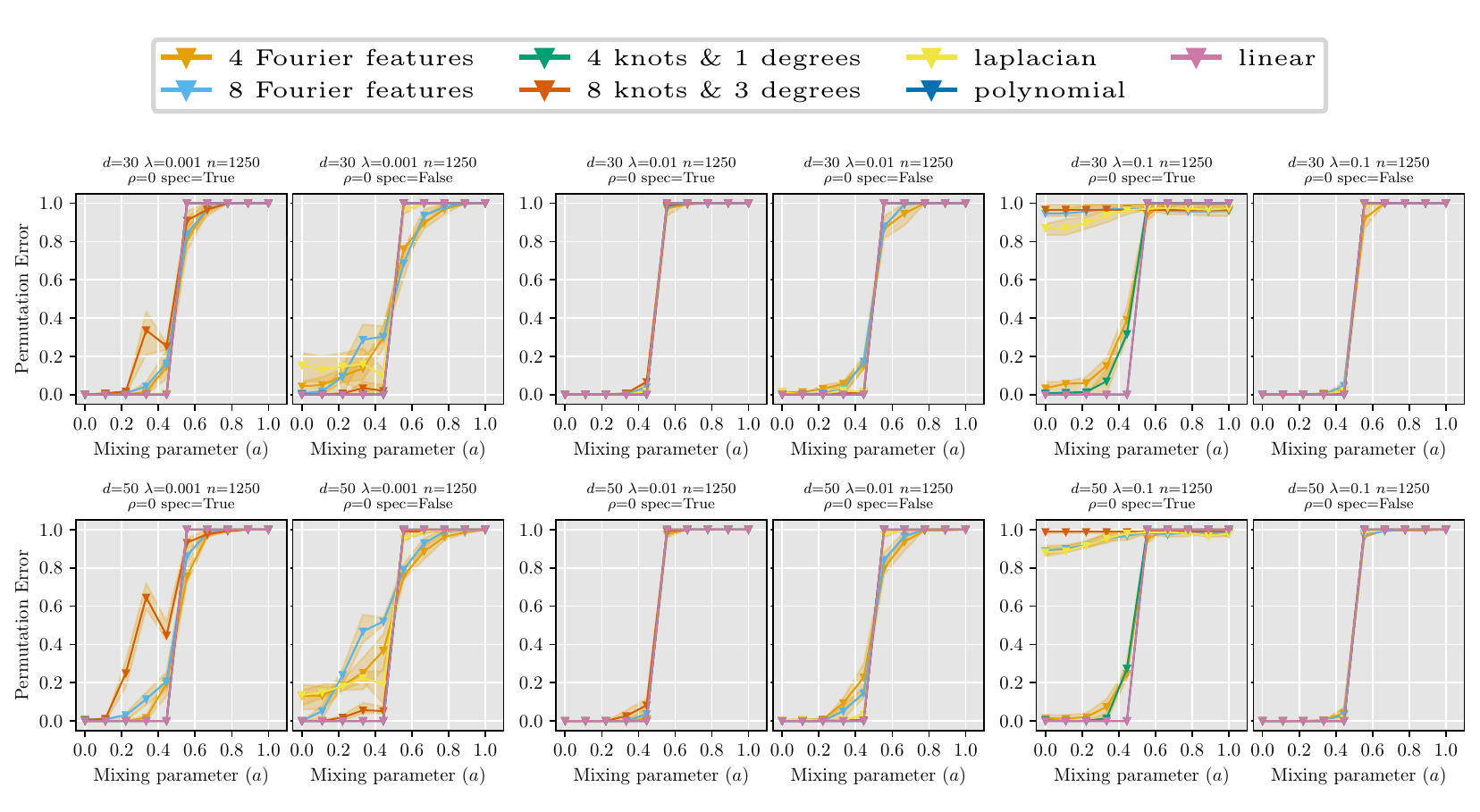}
    }
    \caption{Mean permutation errors on the Toy Dataset when the causal variables are mixed according to a parameter $a$.}
    \label{fig:mixed-perm-error}
\end{figure}

\begin{figure}
    \centerfloat
    \resizebox{1.2\textwidth}{!}{%
        \includegraphics{./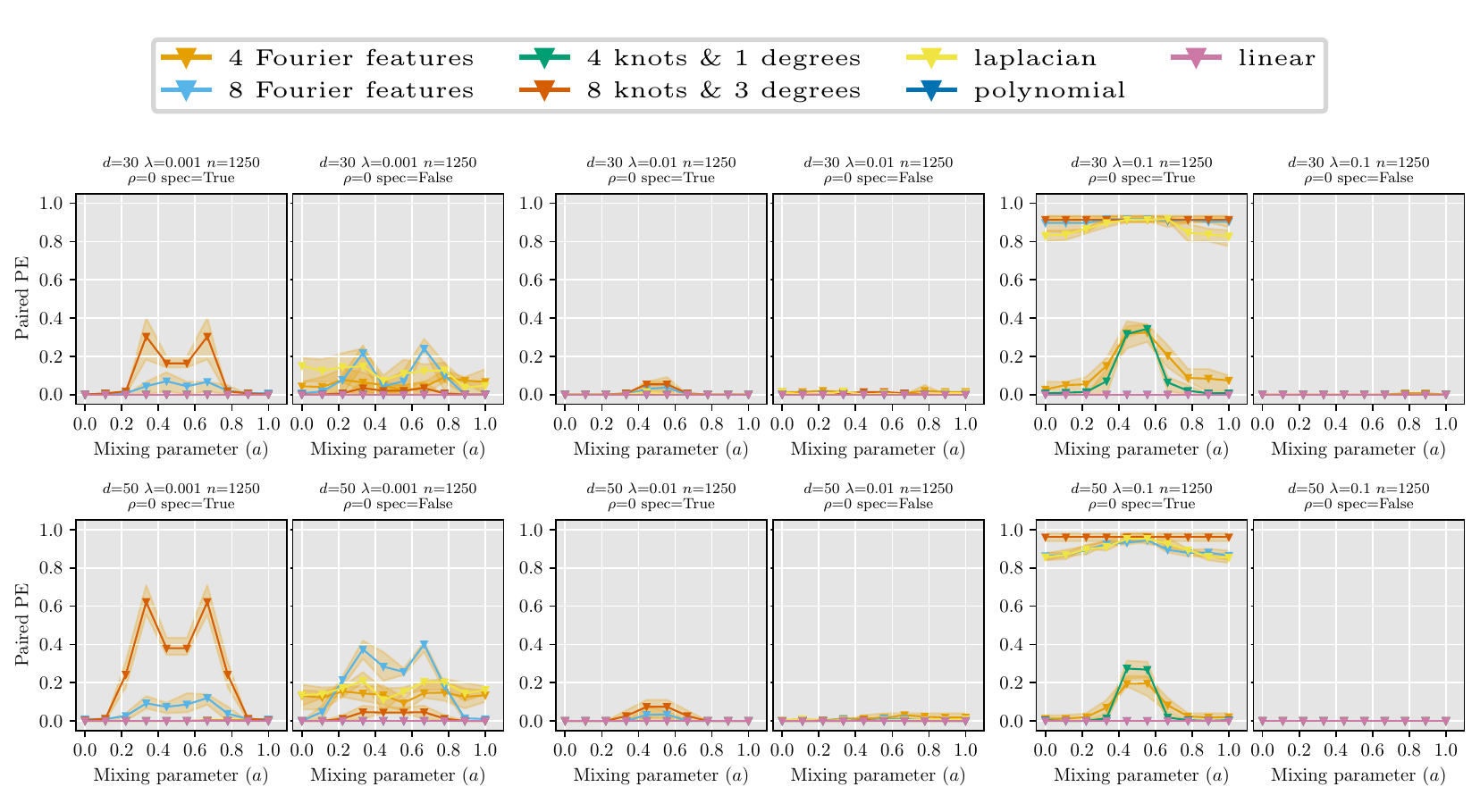}
    }
    \caption{Mean paired permutation errors on the Toy Dataset when the causal variables are mixed according to a parameter $a$.}
    \label{fig:mixed-paired-perm-error}
\end{figure}

\begin{table}[!h]
    \caption{Permutation Errors for the encodings learned in the
    Action/Temporal sparsity datasets and the Temporal Causal3DIdent datasets.
    The $(n)$ indicates
    the number of train and test points used in each column.
    We report the mean and standard deviation over $50$ random seeds and in each
    column we indicate the best method by \textbf{bold}.}
    \label{tbl:perm-error-dms}
    \centering 
    \setlength\tabcolsep{4pt}
    \small

    \caption{Execution times of the concept-based models and multiple versions of our
        estimator on the classification downstream tasks based on the
        Action/Temporal Sparsity Dataset and the Temporal Causal3DIdent dataset}
    \label{fig:times-both-bin}
\end{figure}
\phantom{hey}


\end{document}